\documentclass[twoside]{IEEEtran}
\usepackage{graphicx}
\usepackage{subfigure}
\usepackage{algorithm} 
\usepackage[noend]{algpseudocode} 
\usepackage{amsmath, amsthm, amssymb}

\allowdisplaybreaks

\newtheorem{theorem}{Theorem}
\newtheorem*{theorem*}{Theorem}
\newtheorem{remark}{Remark}
\newtheorem{lemma}{Lemma}
\newtheorem*{lemma*}{Lemma}

\newtheorem{corollary}{Corollary}
\usepackage{wrapfig}
\DeclareMathOperator*{\argmax}{arg\,max}

\usepackage{mathtools}
\DeclarePairedDelimiter\ceil{\lceil}{\rceil}

\newcommand{\mcal}{\mathcal}
\newcommand{\mbb}{\mathbb}

\newcommand{\twopartdef}[4]
{
	\left\{
	\begin{array}{lll}
		#1 & \mbox{if } #2 \\
		#3 & \mbox{if } #4 
	\end{array}
	\right.
}
\usepackage{pifont}
\newcommand{\xmark}{\ding{55}}
\usepackage{xcolor}

\usepackage{algorithm}

\input{widebar_hack.tex}

\begin{document}

\title{Quantization of distributed data for learning} 

\author{Osama~A.~Hanna, Yahya~H.~Ezzeldin, Christina~Fragouli and Suhas~Diggavi
\thanks{
O.~A.~Hanna, C.~Fragouli and S.~Diggavi are with the Electrical and Computer Engineering Department at the University of California, Los Angeles, CA 90095 USA (e-mail: \{ohanna, christina.fragouli, suhasdiggavi\}@ucla.edu). Yahya~H.~Ezzeldin was with the Electrical and Computer Engineering Department at the University of California, Los Angeles, CA 90095 USA. He is now with the Electrical and Computer Engineering Department at the University of Southern California, CA 90089 USA (e-mail: yessa@usc.edu).

This work was partially supported by NSF grants \#2007714. \#1955632  and by UC-NL grant LFR-18-548554 and by Army Research Laboratory under Cooperative Agreement W911NF-17-2-0196.
}
}

\maketitle

\begin{abstract}
We consider machine learning applications that train a model by leveraging data distributed over a {trusted} network, where communication constraints can create a performance bottleneck. A number of recent approaches propose to overcome this bottleneck through compression of gradient updates. However, as models become larger, so does the size of the gradient updates. In this paper, 
{we propose an alternate approach to learn from distributed data}
that quantizes data instead of gradients, and can support learning over applications where the size of gradient updates is prohibitive. {Our approach leverages the dependency of the computed gradient on data samples, which lie in a much smaller space in order to perform the quantization in the smaller dimension data space. At the cost of an extra gradient computation, the gradient estimate can be refined by conveying the difference between the gradient at the quantized data point and the original gradient using a small number of bits. Lastly, in order to save communication, our approach adds a layer that  decides whether to transmit a quantized data sample or not based on its importance for learning.} We analyze the convergence of the proposed approach for smooth convex and non-convex objective functions and show that we can achieve order optimal convergence rates with communication that mostly depends on the data rather than the model (gradient) dimension. We use our proposed algorithm to train ResNet models on the CIFAR-10 and ImageNet datasets, and show that we can achieve an order of magnitude savings over gradient compression methods. {These communication savings come at the cost of increasing computation at the learning agent,  and thus our approach is beneficial in scenarios where communication load is the main problem. 
}

\end{abstract}

\section{Introduction}
Consider a machine learning application, where an agent wishes to train a model using data instances that are collected (and streamed potentially in real time) from a set of distributed terminal nodes. {We assume that the data is generated/collected at distributed nodes which do not have enough data to locally build good learning models. The scenario we consider is for a trusted network, so there are no privacy constraints on the data. Application scenarios for this include wireless sensor networks and IoT networks that collect data which is to be used for learning, e.g., an autonomous vehicle collecting data from by-the-road sensors{, and machine learning techniques used to overcome routing and tracking problems in wireless sensor networks \cite{kumar2019machine, nayak2021routing, warriach2013fault}}. Unlike work that aim for parallelizing the computation, we assume a setup where the learning is done centrally by an agent (server) from data that is naturally distributed. As a result, the key challenge that we address in this work,} is that the terminal nodes are connected to the server through a weak communication fabric; for instance, through wireless, bandwidth constrained links. The question we ask is: how can we compress the data so as to reduce the number of bits communicated, without affecting the learning performance of the agent.

\begin{figure}
	\centering
	\includegraphics[width=0.7\linewidth]{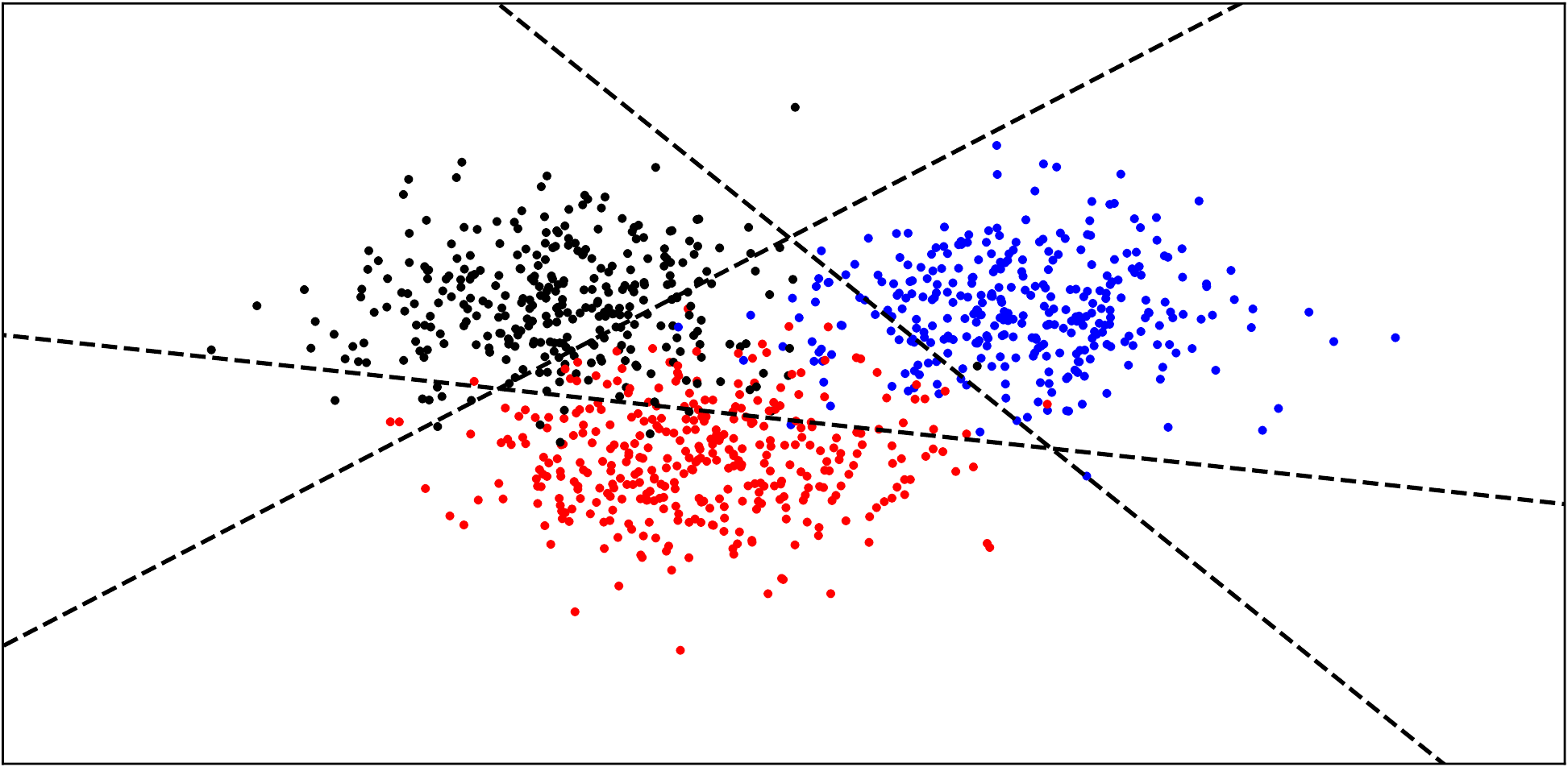}
	\caption{Building intuition for data sampling: training on all samples vs. only samples inside the triangle leads to classifiers with approximately the same performance.}\label{fig:red_intro}
	\vskip -0.2in
\end{figure}

Our main observation is that, although the nodes may continuously be observing data, not all the data is equally useful - perhaps a subset of the data 
is all we need. This can be thought of as a generalization  of the notion of support vectors in Support Vector Machines (SVM); effectively, we ask, which of the data instances  are most helpful for learning. 
As a simple example, we experimentally verified that training a neural network classifier on the data instances inside the triangle in Figure \ref{fig:red_intro} yields approximately the same validation accuracy as when we train the classifier on the full set of data instances. The challenge is that,
we would like to decide which are the instances to communicate in an online manner,  without having access to the whole dataset in advance.

How useful is an observed data instance depends on the corresponding stage of learning (eg., no data is useful if the model is already learned) and on the algorithm the agent uses for model training. In this paper we assume that the agent uses stochastic gradient descent, which is by far the most popular algorithm used for model training today. 

The fact that we restrict our attention to learning through stochastic gradient descent, naturally raises the following question. Leveraging the literature on federated learning, we could perhaps ask the terminal nodes to calculate gradients on their observed data, and then compress the gradient updates, following a technique such as in  \cite{seide20141, strom2015scalable, wen2017terngrad, alistarh2017qsgd,gandikota2019vqsgd, mayekar2020limits}. Indeed, this is a  reasonable solution (and we compare against it in our evaluation section~\ref{sec:numerics}), but has a major drawback: the size of models (and gradient updates) gets larger and larger - while the dimension of data does not. Indeed, in the pursuit of improved accuracy, neural networks  have evolved from thousands to millions to hundreds of millions
of parameters~\cite{touvron2020fixing,real2019regularized,devlin2018bert}.
However, the dimension (number of features) of data-instances themselves, whether images, natural signals, {or sensory data},  has not and is not expected to change much. 
Given that, unlike the federated learning setup, our data is not private, and our agent does not require computational support, it makes sense to leverage the lower dimensionality of data to achieve lower communication cost. {While there is work that aims to prune and reduce the size of machine learning models \cite{perryman2020pruned, perryman2016predicting}, our methods are still useful in those cases whenever the data dimension is smaller than the model dimension.}

In this paper we propose  a scheme that is based on quantizing data instances.
The proposed compression approach consists of two parts: (1) a selection scheme and (2) a quantization scheme. The selection scheme first decides which data instances are communicated depending on how important they are to learning; afterwards, these selected instances are quantized using our quantization scheme. 
The quantization scheme combines aspects of dataset quantization and gradient compensation {(if nodes have enough computational capabilities)},
and aims to enable the learning agent to perform gradient updates with low (or no) performance loss. {The optional gradient compensation step, in our scheme, enhances the learning performance by communicating only a few number of bits but comes at the cost of a gradient computation at a sensor node.}
We find that our scheme uses $\mathcal{O}(d\log_2(h))$ bits in the worst case to achieve the optimal convergence rate, where $d$ and $h$ are the dataset and model dimensions, respectively~\footnote{{Our theoretical analysis relies on a smoothness assumption of the gradient as a function of the data point; that is, if the true and quantized data points are close, then their gradients are also close.}}.
For large models ($h\gg d$), this is much  smaller than $\mathcal{O}(h)$, the 
information theoretic lower bound derived in~\cite{mayekar2020limits} on the number of bits needed to convey quantized gradients to achieve optimal convergence rate. This is because we take advantage of the data dependency of gradients, which is ignored by the oracle used for the lower bounds in \cite{mayekar2020limits}.

Our main contributions are:

$\bullet$
We exploit the fact that computed gradients are dependent on the datapoints to design $\mathtt{DaQuSGD}$, a  quantization approach  that only communicates $\mathcal{O}(d\log_2h)$ bits in the worst case per iteration, where $d$ is the dimension of a single datapoint and $h$ is the dimension of the model parameters.

$\bullet$
We additionally exploit the fact that only a small portion of  datapoints could be sufficient for a reliable model update in order to design an online sample selection approach that acts as a preliminary sampling step to $\mathtt{DaQuSGD}$. Our sample selection scheme provides savings in both communication and computation for the same performance.

$\bullet$ We analyze the convergence of our scheme for convex and non-convex objective functions, and prove that it achieves a similar convergence rate as unquantized SGD.

$\bullet$  We numerically show the benefits of using $\mathtt{DaQuSGD}$ for training ResNet models for CIFAR-10 and ImageNet datasets.
We achieve accuracy comparable to unquantized stochastic gradient descent, while using an order of magnitude less bits compared to gradient quantization schemes. 

\textbf{Related work:} Stochastic gradient descent has been very successful  in training neural networks \cite{seide20141,strom2015scalable,ghadimi2013stochastic}. A number of works proposed
quantized versions of stochastic gradient descent, see for example \cite{ seide20141, strom2015scalable,dean2012large, yu2014introduction, de2015taming, wen2017terngrad,  alistarh2017qsgd,gandikota2019vqsgd, mayekar2020limits}; these are based on gradient quantization 
and typically use  $\mathcal{O}(h)$ bits/iteration to achieve the same convergence rate as the unquantized gradient descent, which is proved to be a fundamental limit in~\cite{mayekar2020limits}\footnote{The reason why the lower bound in \cite{mayekar2020limits} can be broken is discussed in Section~\ref{sec:prelim}.} for memoryless gradient quantizers. 
Using memory with quantizers, sparsification methods were considered \cite{alistarh2018convergence, lin2017deep, stich2018sparsified, wu2018error, basu2019qsparse}, that improve communication efficiency at the cost of increasing the gradient iterations until convergence\footnote{Effective implementations of these also use  $\mathcal{O}(h)$ bits per gradient iteration.}. 

Data quantization was considered in~\cite{zhang2017zipml} for generalized linear models learning. However, in a generalized linear model, the gradient dimension is the same as the data dimension and the gradient in this case is only a scaled version of the data instance. In this paper, we consider data quantization for a general learning model. Our work is the first, as far as we know, that leverages dataset-based quantization to achieve the optimal convergence rate at a communication cost of $\mathcal{O}(d\log_2(h))$ that mostly depends on the dataset and not the model dimension. To further reduce communication cost, we propose a sample selection algorithm. 
There is a rich body of work on sample selection in centralized settings \cite{ng2003input, xu2019review, mirzasoleiman2020coresets, zhao2015stochastic}, however, these are either computationally expensive, cannot be applied to deep neural networks, require the knowledge of full dataset, or cannot be applied in a distributed setting. {For example, if every node has only one sample, applying the CRAIG algorithm proposed in~\cite{mirzasoleiman2020coresets} locally on each node's data would result in transmitting all the samples.} Schemes that are based on influence function \cite{jaeckel1972infinitesimal, hampel1974influence, cook1977detection} are also related. Influence functions approximate the effect of training samples on the model predictions during testing. This requires the computation of the gradient and Hessian of the model, which can be computationally expensive in deep networks. Moreover, influence functions are not well understood for deep models and non-convex functions \cite{koh2017understanding} and are shown to be fragile in deep learning \cite{basu2020influence}. We present and analyze a simple sample selection scheme that requires only the computation of the forward path of the model. {Importance sampling, such as~\cite{zhao2015stochastic}, studies weighted sampling methods with the goal of reducing the stochastic gradient variance. 
However, in such work, a point is going to be sampled in every iteration (but with a non-uniform optimized distribution). As a result, offering no communication cost saving. 
Furthermore, \cite{zhao2015stochastic} also requires knowledge of non-homogeneous upper bounds on the per sample gradient. Such knowledge is not currently feasible for deep models; this is also the reason why their approach~\cite{zhao2015stochastic} is only evaluated using an SVM model.} {Subsampling has also been utilized in~\cite{kuchnik2018efficient} in order to reduce the size of required dataset augmentation. The proposed schemes in~\cite{kuchnik2018efficient} require full knowledge of the dataset and targets to improve the model accuracy, in contrast to our goal of reducing the communication load.}
Our sample selection scheme can be applied for deep networks in distributed settings. This approach offers reduction in both communication and computation costs without sacrificing performance.

{There is a large body of research on active learning that aims to perform learning with few labeled instances, if unlabeled data are available but labels are expensive to obtain, by selecting few unlabeled instances to be labeled by an oracle~\cite{settles2009active, settles2008curious, tong2001active, monteleoni2006learning}. In contrast, all our data are labeled and we aim to decide what samples to not transmit to save communication.}

Our work also differs from traditional signal compression
\cite{gersho2012vector}:  our quantization does not aim to maintain the ability to reconstruct data, but instead the ability to perform  gradient updates with low (or no) performance loss.

{{\bf Notations:} In the rest of the paper, we use the following notation convention. We denote with $[a : b]$
the set of integers from $a$ to $b \geq a$; we also use the shorthand $[b]$ to denote integers from 1 to $b$. We use we use lowercase letters for scalars, uppercase letters for constants, and lowercase boldface letters for vectors. Calligraphic upper letters are used to denote sets of vectors or scalars.}

\section{Setup}\label{sec:model}
A learning agent  wants to use data from a space
$\mathcal{Z}\subseteq \mathbb{R}^d$ to generate a model in a space $\mathcal{W}\subseteq \mathbb{R}^{h}$.
The data samples are collected at distributed nodes that can communicate with the agent using at most $r$ bits per transmission. 
The overall dataset $\mathcal{D}=\{\mathbf{z}^{(1)},...,\mathbf{z}^{(N)}\}$ consists of $N$ samples from the space $\mathcal{Z}$ which is assumed to bounded, i.e., $\|\mathbf{z}\|_2\leq B${; we refer to such $\mcal{Z}$ as being $B$-bounded}.
We also assume that $\mcal{W}$ is bounded such that $\|\mathbf{w}-\mathbf{w}'\|_2^2 \leq D^2$, $\forall \; \mathbf{w}, \mathbf{w}' \in \mcal{W}${; we refer to such $\mcal{W}$ as being $D$-relatively bounded}.
The objective of the learning agent is to minimize the empirical risk of the output model. For a given model $\textbf{w}\in \mathcal{W}$, this risk is given by
\begin{equation}
L(\mathbf{w})=\frac{1}{N}\sum_{i=1}^N \ell(\mathbf{w},\mathbf{z}^{(i)}),
\end{equation}
where $\ell: \mathcal{W}\times \mathcal{Z}\to \mathbb{R}^+$ is a loss function. The loss function $\ell$ is  known at all nodes in the system. 

\noindent{\bf Learning algorithm.} We are interested in gradient-based learning algorithms.
Let $\nabla L(\mathbf{w})$ be the gradient of $L(\mathbf{w})$ and  $g_\mathbf{z}(\mathbf{w})=\nabla \ell(\mathbf{w},\mathbf{z})$ be
the gradient of the loss function (w.r.t. $\mathbf{w}$). It is well known that if 
$\mathbf{z}\in \mathbf{S}$ is sampled uniformly at random, $g_\mathbf{z}(\mathbf{w})$ is an unbiased stochastic gradient of $L(\mathbf{w})$, i.e., satisfies $\mathbb{E}[g_\mathbf{z}(\mathbf{w})]=\nabla L(\mathbf{w})$.
The gradient $g_\mathbf{z}(\mathbf{w})$ can be calculated at the node sampling $\mathbf{z}$ from the dataset.
Let $\widehat g_\mathbf{z}(\mathbf{w})$ be a general stochastic estimate of the gradient $\nabla L(\mathbf{w})$, calculated by the learning agent. This can be the true stochastic gradient $g_\mathbf{z}(\mathbf{w})$ estimated from a datapoint $\mathbf{z}$, or a low-precision mapping of $g_\mathbf{z}(\mathbf{w})$~\cite{alistarh2017qsgd,bernstein2018signsgd, suresh2017distributed,wen2017terngrad,mayekar2020limits}, or a function of both $g_\mathbf{z}(\mathbf{w})$ and $\mathbf{z}$.
Using this estimate, the model is updated at iteration $j+1$ as
\begin{align}\label{eq:update_rule_SGD}
	\mathbf{w}^{(j+1)}=\mathbf{w}^{(j)}-\eta \widehat g_{\mathbf{z}^{(j)}}\left(\mathbf{w}^{(j)}\right),
\end{align}
where $\eta$ is the learning rate. 

\noindent{{\bf System operational properties.}
{We assume that the learning agent (sometimes denoted as the server) and the distributed nodes make up a trusted network with the server being the strong capable entity at its center. As a result, we assume no privacy requirements and no computational constraints at the server. 
Additionally, the distributed nodes are connected to the server through a weak wireless communication fabric.
Thus, our focus, in this work, is on how to quantize data such that the learning algorithm converges with the same unquantized rate while reducing the communication overhead}\footnote{{Please note a general fact in wireless that uplink is much more costly than downlink as the later is a single broadcast transmission while uplink requires multiple transmissions depending on the number of nodes in the system.}}.}

\noindent {\bf Quantization.}
At the $j$-th iteration, a node $s_j$  has access to a datapoint $\mathbf{z}^{(j)}$. The learning agent generates
the stochastic gradient $\widehat g_{\mathbf{z}^{(j)}}\left(\mathbf{w}^{(j-1)}\right)$ by receiving information from the $s_j$-th node through an $r$-bit quantizer $Q_j$, where we assume that $Q_j$ has access to both the current datapoint $\mathbf{z}^{(j)}$, and the latest model $\mathbf{w}^{(j-1)}$. An $r$-bit quantizer consists of mappings $(Q^e_j, Q^d_j)$, with an encoder mapping $Q^e_j : \mcal{Z} \times \mcal{W} \to \{0,1\}^{\bar{r}}$ and a decoder mapping $Q_j^d : \{0,1\}^{\bar{r}} \to \mbb{R}^h$, where $\bar{r} \in \{0,r\}$. Note that in our setup we allow the encoder to transmit zero bits (by picking $\bar{r} = 0$) for some samples, which corresponds to not sending the sample.
The overall quantizer $Q_j = Q_j^d \circ Q_j^e$ captures the combined effect of the encoder and decoder.
Note that in the quantization schemes that directly quantize gradients,
the encoder mapping $Q_j^e$  first computes the stochastic gradient $g_{\mathbf{z}^{(j)}}(\mathbf{w}^{(j-1)})$ from the unquantized datapoint $\mathbf{z}^{(j)}$ and then maps $g_{\mathbf{z}^{(j)}}(\mathbf{w}^{(j-1)})$ to $r$ bits. In this paper, we explore an alternative approach where  $Q^e_j$  can make use of the datapoint, $\mathbf{z}^{(j)},$ directly. We highlight that we do not assume any extra information at the distributed nodes, i.e., they still only have access to the latest model and a full-precision datapoint.

\noindent{\bf Assumptions on the loss function.} In the following sections, we assume that the function $\ell(\mathbf{w},\mathbf{z})$ is $C_w$-smooth in $\mathbf{w}$ for all $\mathbf{z}\in \mathcal{Z}$, i.e., the gradient $g_\mathbf{z}(\mathbf{w}) = {\partial \ell(\mathbf{w},\mathbf{z})}/{\partial \mathbf{w}}$ is $C_w$-Lipschitz continuous in $\mathbf{w}$ for all $\mathbf{z}\in \mathcal{Z}$. In other words
\begin{equation}\label{assump1}
\| g_\mathbf{z}(\mathbf{w}) - g_\mathbf{z}(\mathbf{w}')\|_2 {\leq} C_w\|\mathbf{w}-\mathbf{w}'\|_2,\ \ \forall \mathbf{z} {\in} \mcal{Z}.
\end{equation}
Moreover, the gradient $g_\mathbf{z}(\mathbf{w})$ is assumed to be $C_z$-Lipschitz continuous in $\mathbf{z}$ for all $\mathbf{w}\in \mathcal{W}$, i.e.,
\begin{equation}\label{assump2}
\| g_\mathbf{z}(\mathbf{w}) - g_{\mathbf{z}'}(\mathbf{w})\|_2 {\leq} C_z\|\mathbf{z}-\mathbf{z}'\|_2,\ \ \forall \mathbf{w} {\in} \mcal{W}.
\end{equation}

While (\ref{assump1}) is a standard assumption \cite{alistarh2017qsgd,basu2019qsparse,mayekar2020limits}, the assumption in \eqref{assump2} is particular to our setup as we relate gradient updates to the datapoints. For this, we need that if a quantized point $\mathbf{z}_Q$ is close to its original point $\mathbf{z}$, then the gradients $g_\mathbf{z}(\mathbf{w}),g_{\mathbf{z}_Q}(\mathbf{w})$ are also close. We employ the assumption of Lipschitz continuity of the gradient in $\mathbf{z}$ in \eqref{assump2} to formalize this notion. Note that this assumption is implied if $\|\frac{\partial g_\mathbf{z}(\mathbf{w})}{\partial \mathbf{z}}\|_2\leq C_z$ \cite{bubeck2014convex}, which is the case in many loss functions (for instance, in the loss functions in the examples in Section~\ref{sec:prelim} below) since standard theoretical analysis in learning theory assume a bounded space for $\mathbf{z}$ and $\mathbf{w}$ \cite{shalev2014understanding}. We also highlight that in many cases (as in the loss function in the Example 2 in Section~\ref{sec:prelim} below, where $C_z=\mathcal{O}(h)$), $C_z$ might depend on $h$ or $d$. However, we will show that the effect of $C_z$ can be removed from the convergence rate with an increase in the number of bits that is $O(d\log_2(C_z))$. We observed in our numerical evaluation, on the CIFAR-10 dataset with ResNet-18, that our {\tt DaQuSGD} approach gives a ratio $\frac{\| g_\mathbf{z}(\mathbf{w}) - g_{\mathbf{z}'}(\mathbf{w})\|_2}{\|\mathbf{z}-\mathbf{z}'\|_2}$ that is upper bounded by $1$ when calculated using the sampled $\mathbf{z}$ and taking $\mathbf{z}'$ as its quantized version $\mathbf{z}_Q$.

\section{Preliminaries and motivation}\label{sec:prelim}

\subsection{Convergence of stochastic gradient descent.}
In our  setup,  the learning agent aims to learn a hypothesis of dimension $h$,
using stochastic estimates of the gradient of the empirical risk function $L$, where these  estimates are conveyed with a very low precision  representation $\{0,1\}^R$, that is, $R\ll h$. In this work, we are interested in providing convergence guarantees when learning from low precision estimates.

It is well known that for smooth convex risk functions, if stochastic gradient descent is performed with \emph{unbiased} stochastic gradients that have \emph{bounded variance} $\sigma^2$, then it convergences with  $\mathcal{O}\left({\frac{\sigma}{\sqrt{n}}}\right)$~\cite{bubeck2014convex}.
The exact theorem is provided in Appendix~\ref{app:sgdconv} for completeness. It was also shown in~\cite{agarwal2009information} that under the mentioned assumptions, $\mathcal{O}\left({\frac{\sigma}{\sqrt{n}}}\right)$ is a lower bound on the convergence rate of stochastic gradient descent. Hence, throughout the paper we refer to $\mathcal{O}(\frac{1}{\sqrt{n}})$ as the order-optimal convergence rate;
we take away that to achieve it, it suffices to construct a quantized stochastic gradient $\widehat g_\mathbf{z}(\mathbf{w})$ that satisfies the unbiasedness and bounded variance, where the variance bound should be $O(\sigma^2)$ (cannot grow with other system parameters such as $h,d$). We follow this well known technique to prove the order-optimal convergence of $\mathtt{DaQuSGD}$ in Section~\ref{sec:DQSGD}. 
Similar to the smooth convex risk functions case, for smooth non-convex risk functions, the stochastic gradient descent converges to a local optimal point provided that the stochastic gradients are unbiased and have bounded variance. The main theorem statement~\cite{ghadimi2013stochastic}, is reiterated in Appendix~\ref{app:sgdconv} for completeness.

\subsection{Direct gradient quantization is not always efficient.} The work in \cite{mayekar2020limits} shows that
if  gradients $g_\mathbf{z}(\mathbf{w})$ 
of the risk function $L(\mathbf{w})$ lie in a unit $\ell_2$ ball
and  the quantizer has no memory, then the minimum number of bits required for quantizing gradient to guarantee optimal convergence rate (same as unquantized gradient descent) is $\Omega(h)$ bits per iteration; this lower bound is derived using an oracle with access only to $g_\mathbf{z}(\mathbf{w})$  and not $\mathbf{z}$.
For memory-based quantizers, practical implementations typically require $\Omega(h)$ bits per iteration, even when sparsification methods~\cite{strom2015scalable,aji2017sparse,alistarh2018convergence} are used with quantization~\cite{basu2019qsparse}.

{This subsection is dedicated to examples where the $\Omega(h)$ lower bound does not hold. In these examples, we exploit the structure of the gradient to develop efficient custom quantization strategies that break the $\Omega(h)$ lower bound.} These examples do not suffer from the $\Omega(h)$ lower bound since the possible gradient values do not span the whole unit $\ell_2$ ball; the gradients live in a low dimensional space through their implicit dependence on $\mathbf{z}$. {Thus, this subsection motivates why in the following Section~\ref{sec:DQSGD}, we develop an approach that relies on data quantization instead of gradient quantization, while still aiming to achieve the optimal convergence rate.}

{\it Example 1:}
Consider the simple case where $d=1,h\gg d$, and the loss function is $\ell(\mathbf{w},\mathbf{z})=\frac{\mathbf{z}}{\sqrt{h}}\mathbf{1}^T\mathbf{w}$, where $\mathcal{W}=\{\mathbf{w}\in \mathbb{R}^h|\|\mathbf{w}\|_2\leq 1\},$ and $\mathcal{Z}=\{\mathbf{z}\in \mathbb{R}|\ |\mathbf{z}|\leq 1\}$. In this case, the best bound we can get is $\|g_\mathbf{z}(\mathbf{w})\|_2 \leq 1$. 
Consider a quantizer that only sends $1$ bit per data sample to the learning agent and operates as follows. At each time $j$, $Q^e_j, Q^d_j$ are chosen to be
\begin{align}\label{eq:quant}
&Q^e_j=Q^e(x)=\twopartdef
{1}      {\text{with probability } \frac{x+1}{2}}
{0}     {\text{with probability } \frac{1-x}{2}},\nonumber \\
 &Q^d_j(x)=Q^d(x)=2x-1.
\end{align}
The stochastic gradient is chosen to be $\frac{\mathbf{z}_Q^{(j)}}{\sqrt{h}}\mathbf{1}$, where $\mathbf{z}_Q^{(j)}$ is the output of decoder $Q^d_j$.
It is not difficult to see that the constructed gradient is an unbiased estimate of $\nabla L(\mathbf{w})$ with variance that is bounded by $1$. Hence, using this unbiased estimate, we can achieve the optimal convergence rate.
In this example, the gradient takes values only on the line segment $\left\{\frac{\mathbf{z}}{\sqrt{h}}\mathbf{1}\right| |\mathbf{z}|\leq 1\Big\}$; it does not span the whole unit $\ell_2$ ball.

\medskip

{\it Example 2}
Let us consider the loss function $\ell(\mathbf{w},\mathbf{z})=\frac{1}{\sqrt{h}}f(\mathbf{w}^T\mathbf{v}(\mathbf{z}))$, where
\[
\mathbf{v}(\mathbf{z})=[1,\mathbf{z},\mathbf{z}^2,...,\mathbf{z}^{h-1}],
\]
$\mathcal{W}=\{\mathbf{w}\in \mathbb{R}^h|\|\mathbf{w}\|_2\leq 1\}, \mathcal{Z}=\{\mathbf{z}\in \mathbb{R}|\ |\mathbf{z}|\leq 1\}$, 
and $f$ is a general $1$-smooth function. 
For instance, $f$ could be the logistic regression loss, i.e.,
\[
f(\mathbf{w}^T\textbf{v}(\mathbf{z})) = f_y(\mathbf{z}^T\textbf{v}(\mathbf{z})) = \log(1 + \exp(-y \mathbf{w}^T\textbf{v}(\mathbf{z}))).
\]
We here assume that $d=1$ for simplicity, and include the $d>1$ generalization at the end of this part. In this case, the best bound we can get on the norm of the stochastic gradient is $\|g_\mathbf{z}(\mathbf{w})\|_2 \leq 1$ as $\mathbf{z}$ is allowed to take the value $\mathbf{z}=1$.

We show next that the optimal convergence rate can be achieved with only $\log_2(h)$ bits per sample, instead of the expected $\Omega(h)$ bits.
Let $Q^e: [-1,1]\to \{0,1\}, Q^d:\{0,1\}\to [-1,1]$ be defined as in (\ref{eq:quant}).
The algorithm runs as follows. Each node transmits $1 + \ceil{\log_2(h)}$ bits representing the set of values $\{Q^e(f'(\mathbf{w}^T\mathbf{v}(\mathbf{z})))\}\cup \{Q^e(\mathbf{z}^{2^j})\}_{j=1}^{\ceil{\log_2(h)}}$, where $f'$ is the derivative of the function $f$. Let $b_j(i)$ be the $j$-th least significant bit in the binary representation of the integer $i$. 
The learning agent constructs $Q(\mathbf{z}^i)=\prod_{j=1}^{\ceil{\log_2(h)}} Q^d \circ Q^e(\mathbf{z}^{2^j})^{b_j(i)}\forall i\in [0:k]$ and the quantized stochastic gradient $\widehat{g}_\mathbf{z}(\mathbf{w}) = \frac{1}{\sqrt{h}}Q^d\circ Q^e(f'(\mathbf{w}^T\textbf{v}(\mathbf{z})))[1,Q(\mathbf{z}^1),...,Q(\mathbf{z}^k)]^T$. {In Appendix~\ref{app:ex}, we show that this algorithm achieves the optimal convergence rate and that in this example $\Omega(\log(h))$ is a lower bound on the communication cost for any SGD algorithm that achieves the optimal convergence rate.}

\textbf{Extension for $d{>}1$:} If $d{>}1$, the algorithm can be extended by sending a quantized version of $f'(\mathbf{w}^T\textbf{v}(\mathbf{z}))$ together with a quantized version of $[\mathbf{z}_1^{2^j},...,\mathbf{z}_d^{2^j}]$ for each $j\in [1{:}\ceil{\log_2(k)}]$ with $d$ bits using the $\mathtt{DataQ}$ with stochastic quantization that is described in Section \ref{sec:DQSGD} below. This would require $1+d\ceil{\log_2(k)}$ bits in total. Note that when $d>1$, each element of $\textbf{v}(\mathbf{z})$ corresponds to a monomial $\prod_{i=1}^d\mathbf{z}_i^{x_i}, \sum_{i=1}^dx_i\leq k$, hence, the size of the model is $h=O(k^d)$.

\section{Dataset-quantized stochastic gradient descent ($\texttt{DaQuSGD}$)}\label{sec:DQSGD}
This section presents $\mathtt{DaQuSGD}$, our proposed approach for dataset-based quantized stochastic gradient descent, which is summarized below and in the pseudocode in Algorithm~\ref{alg:DQSGD}.
 \begin{algorithm}
  \caption{$\mathtt{DaQuSGD}$}
  \label{alg:DQSGD}  
  \begin{algorithmic}
    \State Initialize $\mathbf{w}^{(0)}$, hyperparameter: $m$ number of levels in data quantizer; Suppose $\eta_j$ follows a learning rate schedule.
    \For{$j=1$ {\bf to} $n$}
    \State {\bf  Node selected in the $j$-th iteration:}
    \State Sample datapoint $\mathbf{z}^{(j)}$
    \State $\mathbf{z}_Q^{(j)}, \textbf{b}^{(j)}_\mathbf{z} \gets \mathtt{DataQ}\left(\mathbf{z}^{(j)},m\right)$;
    \State $\triangleright\ \textbf{b}^{(j)}_\mathbf{z}$ is the lossless binary compression of $\mathbf{z}_Q^{(j)}$
    \State $\Delta \gets g_{\mathbf{z}^{(j)}}\left(\mathbf{w}^{(j-1)}\right) - g_{\mathbf{z}^{(j)}_Q}\left(\mathbf{w}^{(j-1)}\right)$
    \State $ \textbf{b}^{(j)}_g \gets \mathtt{GradCorrQ}\left(\Delta, m\right)$;
    \State $\triangleright\ \textbf{b}^{(j)}_g\!$ is binary compression of estimate $\!\widehat\Delta\!$ of $\Delta$
    \State Send $\textbf{b}^{(j)}_\mathbf{z}$ and $\textbf{b}^{(j)}_g$ to  the learning agent
    \State ~
    \State {\bf Learning Agent:}
    \State Reconstruct $\mathbf{z}_Q^{(j)}$ and $\widehat\Delta$ from $\textbf{b}^{(j)}_\mathbf{z}$ and $\textbf{b}^{(j)}_g$
    \State $\widehat g \gets g_{\mathbf{z}^{(j)}_Q}\left(\mathbf{w}^{(j-1)}\right) + \widehat\Delta$
    \State $\mathbf{w}^{(j)} \gets \mathbf{w}^{(j-1)} - \eta_j\widehat g$
    \State Broadcast $\mathbf{w}^{(j)}$ to all  nodes
    \EndFor
    \State ~
    \State {\textbf{Define \emph{$\mathtt{DataQ}(\mathbf{z}, m)$}:}}
    \State {\texttt{ // The DataQ subroutine}}
    \State {\indent \underline{Hyperparameters}: Bound $B$ on the value of $\|\mathbf{z}\|_2$.}
    \State {\indent $\mathbf{z}^+ \gets \max(\mathbf{z},0)$, \quad $\mathbf{z}^- \gets \max(-\mathbf{z},0)$}
    \State {\indent $\textbf{a} \gets \left\lfloor\frac{(m-1)\mathbf{z}^+}{B}\right\rfloor$, \quad $\textbf{b} \gets \left\lfloor\frac{(m-1)\mathbf{z}^-}{B}\right\rfloor$; \quad $\textbf{a}, \textbf{b} \in \mbb{N}^d$}
    
    \State {\indent $\mathcal{S}=\left\{(\textbf{a},\textbf{b}) \in \mbb{N}^d \times \mbb{N}^d\ \right| \left.\ \|\textbf{a}\|_1 + \|\textbf{b}\|_1 \leq (m-1)^2\right\}$.}
    \State {\indent $\textbf{b}_\mathbf{z}^{(\textbf{a},\textbf{b})} \gets \texttt{Index}_\mathcal{S}\left((\textbf{a},\textbf{b})\right)$} {\texttt{//Index of $(a,b)$ in set $\mathcal{S}$ by enumeration}}
    \State {\indent {\bf Reconstruction.} $\mathbf{z}_Q \gets \left[\textbf{a}-\textbf{b}\right]\frac{B}{m-1}$; 
    }
    \State {\indent {\bf Return} $\mathbf{z}_Q$, $\textbf{b}_\mathbf{z}^{(\textbf{a},\textbf{b})}$}
    \State {\indent {\bf Comment:} The reconstruction step is also performed at the learning agent.}
  \end{algorithmic}
\end{algorithm}

The encoder $Q^e$ consists of two components: a datapoint quantization step, followed by a gradient correction step.
Given a datapoint $\mathbf{z}$, a node applies a data quantizer $\mathtt{DataQ}(\cdot,m)$ parameterized by an integer $m$, to create $\mathbf{z}_Q = \mathtt{DataQ}(\mathbf{z},m)$, the quantized version of $\mathbf{z}$. $\mathtt{DataQ}$ uniformly quantizes each feature of $|\mathbf{z}|$ into $m$ quantization levels as well as implicitly communicates the sign of each feature. 
At a high level, the quantizer $\mathtt{DataQ}$ can produce $\mathbf{z}_Q$ such that $\|g_{\mathbf{z}_Q}(\mathbf{w})-g_\mathbf{z}(\mathbf{w})\|_2$ is reasonably small. However, the quantized estimate $g_{\mathbf{z}_Q}(\mathbf{w})$ is no longer an unbiased estimate of $\nabla L(\mathbf{w})$. {In this case, we can only guarantee the convergence of $\nabla L(\mathbf{w}^{(n)})$ to a neighborhood of zero (i.e., to a neighborhood of a local optimal $\mathbf{w}$).}

{If we want to guarantee convergence to a local optimal point, then in next step a very low-precision amendment of $g_{\mathbf{z}_Q}(\mathbf{w})$ is sent} to ensure that the central agent has access to an unbiased estimate of $\nabla L(\mathbf{w})$ with bounded variance. Although the error $\Delta = g_{\mathbf{z}}(\mathbf{w}) - g_{\mathbf{z}_Q}(\mathbf{w})$ exists in a large dimensional space $\mbb{R}^h$, the fact that $\|g_{\mathbf{z}_Q}(\mathbf{w})-g_\mathbf{z}(\mathbf{w})\|_2$ is small  enables us to quantize $\Delta$ with a small number of bits (either 1 bit if the terminal nodes and the algorithm have shared randomness or $\lceil\log_2(h)\rceil + 1$ otherwise). Thus
the main quantization load is done in the data space of dimension $d\ll h$. 

{In the following subsections, we describe and analyze the two components of $\mathtt{DaQuSGD}$: Datapoint Quantization $\mathtt{DataQ}$ and Gradient Correction~$\mathtt{GradCorrQ}$.
For the different versions of $\mathtt{DaQuSGD}$ (with and without gradient correction), the order-wise convergence rates and their associated communication costs are summarized in Table~\ref{comp_table} and compared with the state-of the-art memoryless gradient quantization schemes.}
\begin{table*}[t]
\caption{Comparison of $\mathtt{DaQuSGD}$ (with and without gradient correction) with gradient compression schemes.}
\label{comp_table}
\begin{center}
{\begin{small}
\begin{sc}
\def\arraystretch{1.2}
\begin{tabular}{|l|c|c|c|c|}
\hline
\shortstack[l]{\textbf{Method}\\~} & \shortstack[l]{\textbf{Convergence}\\ \textbf{Guarantee}} & \shortstack[l]{\textbf{Communication} \\ \textbf{Cost/Iteration}} & {\shortstack[l]{\textbf{Learner} \\ \textbf{computes}\\\textbf{Gradient}}} & {\shortstack[l]{\textbf{Nodes} \\ \textbf{Computes}\\\textbf{Gradient}}}\\
\hline
$\mathtt{DaQuSGD}\  $($\mathtt{DataQ}$)    & $O(\frac{1}{\sqrt{n}} + \frac{1}{h})$ & $O(d\log_2(h))$ & {$\checkmark$} & {\xmark} \\
\hline
$\mathtt{DaQuSGD}\ $($\mathtt{DataQ}$ + $\mathtt{GradCorrQ}$) &  $O(\frac{1}{\sqrt{n}})$ & $O(d\log_2(h))$ & {$\checkmark$} & {$\checkmark$}\\
\hline
\shortstack[l]{$\text{Memoryless Gradient}$\\ $\text{Compression:}$\\QSGD~\cite{alistarh2017qsgd}, RATQ~\cite{mayekar2020limits}} &  \shortstack{~\\$O(\frac{1}{\sqrt{n}})$\\~} & \shortstack{~\\$O(h)$\\~} & {\xmark} & {$\checkmark$}\\
\hline
\end{tabular}
\end{sc}
\end{small}
}
\end{center}
\end{table*}

\subsection{Datapoint quantization \emph{[$\mathtt{DataQ}$]}}
    
$\mathtt{DataQ}(\cdot,m)$ 
is parametrized by  a tuning parameter $m$ capturing the number of levels used to quantize each feature (see also pseudocode in Algorithm~\ref{alg:DQSGD}).
Given a dataset sample $\mathbf{z}\in \mathbb{R}^d$, where $\|\mathbf{z}\|_2\leq B$,
we express it as $\mathbf{z}=\mathbf{z}^+-\mathbf{z}^-$, where $\mathbf{z}^+=\max(\mathbf{z},0)$ is a vector that captures the magnitude of the positive elements of $\mathbf{z}$ (negative values are replaced by zero) and similarly $\mathbf{z}^-=\max(-\mathbf{z},0)$ for the negative elements.
We create the non-negative concatenation vector  $\widetilde{\mathbf{z}} = [\mathbf{z}^+, \mathbf{z}^-]^T \in \mbb{R}_+^{2d}$, that inherits the following properties from $\mathbf{z}$: (i) $\widetilde{\mathbf{z}}$ has the same $\ell_2$ norm as $\mathbf{z}$; (ii) $\widetilde{\mathbf{z}}$ implicitly captures the sign of the $i$-th element of $\mathbf{z}$ by observing whether the $i$-th component of the subvector $\mathbf{z}^+$ is zero or not; (iii) If $\|\mathbf{z}\|_2 \leq B$, then $ 0 \leq \widetilde{\mathbf{z}} \leq B$ elementwise.
Then, for each coordinate $i$, we choose $m$ equally spaced quantization levels in the interval $[0,B]$, where the $i$-th quantization level is $q_i=\frac{iB}{m-1}$, $i\in [m\!-\!1]$.  
 Now, $\forall j \in [d]$, let 
\begin{align}\label{eq:a_b_definition}
	&\textbf{a}_j(\mathbf{z})  =\argmax_{i \in [m-1]}\{q_i|q_i\leq |\mathbf{z}^+_j|\},\nonumber \\
	&\textbf{b}_j(\mathbf{z})  =\argmax_{i \in [m-1]}\{q_i|q_i\leq |\mathbf{z}^-_j|\}.
\end{align}
The integer vectors $\textbf{a} = [\textbf{a}_1,..,\textbf{a}_d]^T$ and $\textbf{b}=[\textbf{b}_1,..,\textbf{b}_d]^T$ capture the indices of the levels that are just below the values in $\widetilde{\mathbf{z}}$; $\mathtt{DataQ}$ maps the values in $\widetilde{\mathbf{z}}$ to exactly the values indexed by $\textbf{a}$ and $\textbf{b}$. In particular, $\mathbf{z}^+$ is quantized to $\textbf{a} \frac{B}{m-1}$ and $\mathbf{z}^-$ is quantized to $\textbf{b} \frac{B}{m-1}$. Thus, As a result, we get that 
\begin{align*}
\mathbf{z}_Q = \mathtt{DataQ}(\mathbf{z},m) = (\textbf{a} - \textbf{b}) \frac{B}{m-1}.
\end{align*}

{\bf Communication Cost.} The quantized point $\mathbf{z}_Q$ generated by $\mathtt{DataQ}$ is uniquely represented by the integer vectors $\textbf{a}$ and $\textbf{b}$.
Thus, the communication cost equals the number of bits needed for lossless compression of $\textbf{a}$ and $\textbf{b}$.
Let us define the set $\mcal{S}$ to be 
\begin{align}\label{eq:set_S}
\mcal{S} = \Big\{(\textbf{v}^{(1)},\textbf{v}^{(2)}) \!\in\! \mbb{N}^d\!\times\!\mbb{N}^d\ \Big|\ \|\textbf{v}^{(1)}\|_1\!+\!\|\textbf{v}^{(2)}\|_1 \leq (m\!-\!1)^2 \Big\}.
\end{align}
We will argue that for any $(\textbf{a},\textbf{b})$ generated by $\mathtt{DataQ}$ as in~\eqref{eq:a_b_definition}, $(\textbf{a},\textbf{b}) \in \mcal{S}$ with probability 1.
As a result, we only need at most $\lceil \log_2(|\mcal{S}|)\rceil$ bits per sample to communicate $(\textbf{a},\textbf{b})$ to the central node.
We almost proved Theorem~\ref{th1} (the complete proof is provided in Appendix \ref{app:th1}) that provides 
an upper bound on the communication cost.
    

\begin{theorem}\label{th1}
	The proposed $\mathtt{DataQ}$ algorithm satisfies the following statements: (i) For the integer vectors $(\textbf{a},\textbf{b})$ uniquely defining $\mathbf{z}_Q$, we have that $ (\textbf{a},\textbf{b}) \in \mcal{S}$ with probability one; (ii) $\mathtt{DataQ}$ uses at most $2\log_2(m)+\min\{2d\log_2(e\frac{2d+m^2}{2d}),m^2\log_2(e\frac{2d+m^2}{m^2})\}$ bits per sample for communication; (iii) For the generated $\mathbf{z}_Q$, we have that $\|\mathbf{z}-{\mathbf{z}_Q}\|_\infty\leq \frac{B}{m-1}$ and $\|\mathbf{z}_Q\|_2\leq (1+\frac{\sqrt{d}}{m-1})B$.

\end{theorem}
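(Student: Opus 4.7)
My proof plan treats the three claims sequentially, with (i) supplying the key structural fact that underpins the communication bound in (ii), while (iii) reduces to a routine calculation.

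For claim (i), the plan is to exploit the non-negative concatenation $\widetilde{\mathbf{z}} = [\mathbf{z}^+,\mathbf{z}^-]^T$, which satisfies $\|\widetilde{\mathbf{z}}\|_2 = \|\mathbf{z}\|_2 \leq B$ because $\mathbf{z}^+_j$ and $\mathbf{z}^-_j$ cannot be simultaneously nonzero, so $(\mathbf{z}^+_j)^2 + (\mathbf{z}^-_j)^2 = \mathbf{z}_j^2$. From the definitions \eqref{eq:a_b_definition}, each coordinate satisfies $\textbf{a}_j \leq (m-1)\mathbf{z}^+_j/B$ and $\textbf{b}_j \leq (m-1)\mathbf{z}^-_j/B$, so that $\|\textbf{a}\|_2^2 + \|\textbf{b}\|_2^2 \leq \frac{(m-1)^2}{B^2}\|\widetilde{\mathbf{z}}\|_2^2 \leq (m-1)^2$. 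The key trick is then that every $x \in \mbb{N}$ obeys $x \leq x^2$, so for integer-valued $\textbf{a},\textbf{b}$ we have $\|\textbf{a}\|_1 + \|\textbf{b}\|_1 \leq \|\textbf{a}\|_2^2 + \|\textbf{b}\|_2^2 \leq (m-1)^2$, placing $(\textbf{a},\textbf{b}) \in \mcal{S}$ with probability one. I expect this integer $\ell_1$-vs-$\ell_2^2$ step to be the most easily overlooked subtlety.

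For claim (ii), since (i) guarantees $(\textbf{a},\textbf{b}) \in \mcal{S}$, any enumeration of $\mcal{S}$ suffices to losslessly communicate the pair. The cardinality $|\mcal{S}|$ is the number of non-negative integer vectors of length $2d$ with $\ell_1$ norm at most $(m-1)^2$, which by stars-and-bars equals $\binom{2d+(m-1)^2}{(m-1)^2}$. I would then apply both forms of the standard binomial estimate $\binom{n+k}{k} \leq \bigl(\tfrac{e(n+k)}{k}\bigr)^k$ and $\binom{n+k}{k} \leq \bigl(\tfrac{e(n+k)}{n}\bigr)^n$ with $n=2d$, $k=(m-1)^2$, and use $(m-1)^2 \leq m^2$ together with monotonicity of $x\mapsto x\log_2(A/x)$ on $[0,A/e]$ (here $A = e(2d+m^2)$ and $A/e = 2d+m^2 \geq m^2$) to rewrite the exponent and denominator as $m^2$, yielding the $\min\{\cdot,\cdot\}$ term. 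The additive $2\log_2(m)$ absorbs the small overhead for sending $\|\textbf{a}\|_1+\|\textbf{b}\|_1 \in [0:(m-1)^2]$ as a separate prefix (at most $\lceil\log_2(m^2)\rceil$ bits), which allows the enumeration bound to be applied within each $\ell_1$-norm layer. Matching the precise form of the stated bit count is the main technical bookkeeping step.

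For claim (iii), the grid step in $[0,B]$ is $\tfrac{B}{m-1}$, so coordinatewise the construction enforces $0 \leq \mathbf{z}^+_j - \textbf{a}_j\tfrac{B}{m-1} \leq \tfrac{B}{m-1}$ and similarly for $\mathbf{z}^-$. Since for each $j$ at most one of $\mathbf{z}^+_j,\mathbf{z}^-_j$ is nonzero (forcing the other of $\textbf{a}_j,\textbf{b}_j$ to vanish), the two error terms do not combine, giving $|\mathbf{z}_{Q,j}-\mathbf{z}_j| \leq \tfrac{B}{m-1}$ and hence $\|\mathbf{z}-\mathbf{z}_Q\|_\infty \leq \tfrac{B}{m-1}$. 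The second estimate then follows from the triangle inequality and $\|\cdot\|_2 \leq \sqrt{d}\|\cdot\|_\infty$: $\|\mathbf{z}_Q\|_2 \leq \|\mathbf{z}\|_2 + \|\mathbf{z}-\mathbf{z}_Q\|_2 \leq B + \sqrt{d}\cdot\tfrac{B}{m-1} = \bigl(1+\tfrac{\sqrt{d}}{m-1}\bigr)B$.
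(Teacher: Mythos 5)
Your proposal is correct and follows essentially the same route as the paper: the integer inequality $a_j+b_j\leq a_j^2+b_j^2$ combined with $\|\mathbf{z}^+\|_2^2+\|\mathbf{z}^-\|_2^2\leq B^2$ for (i), a stars-and-bars count of $\mcal{S}$ with the standard binomial estimates for (ii), and grid spacing plus the triangle inequality for (iii). The only (harmless) variation is in (ii): you count $\mcal{S}$ exactly as $\binom{2d+(m-1)^2}{(m-1)^2}$ via a slack variable and pass from $(m-1)^2$ to $m^2$ by monotonicity of $x\mapsto x\log_2(A/x)$, which in fact makes the additive $2\log_2(m)$ term (your prefix-code bookkeeping) unnecessary, whereas the paper sums the per-layer counts $\binom{2d+q-1}{q}$ and absorbs the $(m-1)^2+1\leq m^2$ layers into that $2\log_2(m)$ term.
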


\begin{remark} [Bounded Second Moment]
{\rm 
If the second moment satisfies $\mathbb{E}\left[\|\mathbf{z}\|^2_2\right]\leq B^2$, but $\|\mathbf{z}\|_2$ is not bounded almost surely,  we can send $\|\mathbf{z}\|_2$ with full precision and then use $\mathtt{DataQ}$ to quantize $\mathbf{z}/\|\mathbf{z}\|_2$, with $B=1$. This adds an overhead of sending (only) one scalar in full precision; moreover, we can now guarantee that $\mathbb{E}[\|\mathbf{z}_Q\|_2^2]\leq (1+\frac{\sqrt{d}}{m-1})^2$ instead of the universal bound stated on $\|\mathbf{z}_Q\|_2$ in Theorem~\ref{th1}. Additionally, we have that $\mathbb{E}[\|\mathbf{z}-{\mathbf{z}_Q}\|_\infty]\leq \frac{B}{m-1}$.
}
\end{remark}
\begin{remark}[Splitting $\mathbf{z}$]
{\rm 
  From Theorem~\ref{th1}, we have that the dependency of the number of bits on the dataset dimension $d$ is $\Omega(\log_2(d))$ when $m$ is small. However, without splitting $\mathbf{z}$ into $\mathbf{z}^+,\mathbf{z}^-$, we cannot represent the quantized values of $\mathbf{z}$, with a set of positive integers similar to $\mcal{S}$, without directly sending the signs in $\mathbf{z}$ which requires at least $d$ bits. Note that for small values of $d$, splitting might require larger number of bits than without splitting, however, we are interested in how the number of bits grow with $d$.
}
\end{remark}
\begin{remark}[Quantization of gradient using $\mathtt{DataQ}$]
 	{\rm
 	$\mathtt{DataQ}$ with stochastic quantization~\cite{alistarh2017qsgd,gray1993dithered}\footnote{Instead of mapping to a level below the feature, we map it either above or below with a probability depending on the distance.} if applied to the gradient $g_\mathbf{z}(\mathbf{w})$ with $m=\sqrt{h}+1$ would result in $\widehat g_\mathbf{z}(\mathbf{w})$ that satisfies 
 	$\mathbb{E}[\widehat g_\mathbf{z}(\mathbf{w})|g_\mathbf{z}(\mathbf{w})]=g_\mathbf{z}(\mathbf{w}),\|\widehat g_\mathbf{z}(\mathbf{w})-g_\mathbf{z}(\mathbf{w})\|_2\leq 2B$, hence, achieves the optimal convergence rate of $O(\frac{1}{\sqrt{n}})$.
  This uses at most $\log_2(2h)+2h\log_2(3e)=\mathcal{O}(h)$ bits, thus, achieving the communication lower bound in~\cite{mayekar2020limits}. Unlike the results in \cite{alistarh2017qsgd, suresh2017distributed} which provide a guarantee on communication cost in terms of expectation, this provides a uniform upper bound on the required number of bits.
 	}
 \end{remark}

\begin{remark}[Convergence of only $\mathtt{DataQ}$]
{\rm 
Note that, using~\eqref{assump2} and Theorem~\ref{th1}, we get
\begin{align}\label{eq:bounded_bias}
&\|g_{\mathbf{z}}(\mathbf{w})-g_{\mathbf{z}_Q}(\mathbf{w})\|_2 \stackrel{\eqref{assump2}}{\leq} C_z\|\mathbf{z}-\mathbf{z}_Q\|_2\nonumber \\
 &\quad \stackrel{\rm Th.~\ref{th1}}{\leq} C_z B\frac{\sqrt{d}}{m-1},\ \ \forall \mathbf{w} {\in} \mcal{W},
\end{align}
i.e., $g_{\mathbf{z}_Q}(\mathbf{w})$ is a biased estimate of $\nabla L(\mathbf{w})$, with a bounded bias given by~\eqref{eq:bounded_bias}. With this bias bound, we get the following convergence guarantee by appealing to the result in~\cite{agarwal2018cpsgd}.
 \begin{corollary}\label{cor:convergence_no_gradcorr}
    Let $\ell(\mathbf{w},\mathbf{z})$ be a function satisfying the assumptions in Section~\ref{sec:model}. By using the $\mathtt{DataQ}$ mechanism with parameter $m=h\sqrt{d}$, there exists a decaying learning rate $\eta$ such that
    \begin{equation}
    \mathbb{E}\left[\frac{1}{n}\sum_{i=1}^n \left\|\nabla L\left(\mathbf{w}^{(i)}\right)\right\|_2^2\right]\leq O\left(\frac{1}{\sqrt{n}} + \frac{1}{h}\right),    
    \end{equation}
    while using at most $O(d\log_2(h))$ bits per iteration.
\end{corollary}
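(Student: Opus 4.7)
The plan is to recast $g_{\mathbf{z}^{(j)}_Q}(\mathbf{w})$ as a biased stochastic estimator of $\nabla L(\mathbf{w})$ with a small, controllable bias, a bounded second moment, and then invoke an off-the-shelf non-convex biased-SGD convergence theorem (the one cited from cpSGD~\cite{agarwal2018cpsgd}). The corollary then follows by choosing the quantization resolution $m = h\sqrt{d}$ so that the bias contribution shrinks like $O(1/h)$, and separately checking from Theorem~\ref{th1}(ii) that this choice costs only $O(d\log_2(h))$ bits.

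First I would bound the bias. Since data sampling already gives $\mathbb{E}_\mathbf{z}[g_\mathbf{z}(\mathbf{w})] = \nabla L(\mathbf{w})$, the only source of bias is the replacement of $\mathbf{z}$ by $\mathbf{z}_Q$. Using the Lipschitz assumption~\eqref{assump2} in $\mathbf{z}$ together with Theorem~\ref{th1}(iii), I get the pointwise estimate $\|g_\mathbf{z}(\mathbf{w}) - g_{\mathbf{z}_Q}(\mathbf{w})\|_2 \le C_z B \sqrt{d}/(m-1)$, which upon taking expectations yields $\|\mathbb{E}[g_{\mathbf{z}_Q}(\mathbf{w})] - \nabla L(\mathbf{w})\|_2 \le C_z B \sqrt{d}/(m-1) = O(1/h)$ under the choice $m = h\sqrt{d}$. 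Second, I would bound the second moment: by Theorem~\ref{th1}(iii) the quantized points satisfy $\|\mathbf{z}_Q\|_2 \le (1 + \sqrt{d}/(m-1)) B \le 2B$ for the chosen $m$, so applying~\eqref{assump2} once more around an arbitrary reference gradient gives a uniform constant bound on $\|g_{\mathbf{z}_Q}(\mathbf{w})\|_2$, and hence on $\mathbb{E}[\|g_{\mathbf{z}_Q}(\mathbf{w})\|_2^2]$, independent of $h$ and $d$. Finally, the communication bound of Theorem~\ref{th1}(ii) with $m = h\sqrt{d}$ gives $2\log_2(h\sqrt{d}) + 2d\log_2\!\bigl(e(2d + h^2 d)/(2d)\bigr) = O(d\log_2 h)$, matching the claim.

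With these pieces in hand, the last step is to plug bias $\delta = O(1/h)$ and bounded second moment into the biased non-convex SGD template. Writing $L(\mathbf{w}^{(t+1)}) \le L(\mathbf{w}^{(t)}) - \eta \nabla L(\mathbf{w}^{(t)})^\top \mathbb{E}[\widehat{g}] + \tfrac{C_w \eta^2}{2}\mathbb{E}[\|\widehat{g}\|_2^2]$, splitting the cross term by Cauchy–Schwarz and AM–GM as $-\eta \nabla L(\mathbf{w}^{(t)})^\top \mathbb{E}[\epsilon] \le \tfrac{\eta}{2}\|\nabla L(\mathbf{w}^{(t)})\|_2^2 + \tfrac{\eta}{2}\delta^2$, telescoping, and tuning $\eta_j = \Theta(1/\sqrt{n})$, yields $\tfrac{1}{n}\sum_{i=1}^n \mathbb{E}[\|\nabla L(\mathbf{w}^{(i)})\|_2^2] \le O(1/\sqrt{n}) + O(\delta^2) = O(1/\sqrt{n} + 1/h^2) \subseteq O(1/\sqrt{n} + 1/h)$, which is exactly the claim. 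The main obstacle I anticipate is ensuring the second-moment bound is truly dimension-free: a naive use of~\eqref{assump2} relative to a fixed anchor could introduce a $\sqrt{d}$ factor, so one must be careful to anchor the gradient of $\mathbf{z}_Q$ against that of the original $\mathbf{z}$ (whose second moment is controlled by the standard stochastic-gradient bounded-variance hypothesis) and pass the extra quantization-error term through Theorem~\ref{th1}(iii), which is exactly the regime where the choice $m = h\sqrt{d}$ pays off.
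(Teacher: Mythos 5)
Your proposal is correct and follows essentially the same route as the paper's proof: treat $g_{\mathbf{z}_Q}(\mathbf{w})$ as a biased stochastic gradient whose bias is bounded by $C_zB\sqrt{d}/(m-1)=O(1/h)$ via~\eqref{assump2} and Theorem~\ref{th1}(iii), control the second moment by anchoring against the true stochastic gradient, run the biased non-convex descent-lemma analysis with $\eta=\Theta(1/\sqrt{n})$, and read off the $O(d\log_2 h)$ bit count from Theorem~\ref{th1}(ii) with $m=h\sqrt{d}$. The only (harmless) deviation is your Young-inequality treatment of the cross term, which yields a squared-bias contribution $O(1/h^2)$, whereas the paper bounds it by Cauchy--Schwarz against $\|\nabla L(\mathbf{w})\|_2\leq\widetilde{D}$ and gets $O(1/h)$; both land within the claimed $O(1/\sqrt{n}+1/h)$.
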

The proof of Corollary~\ref{cor:convergence_no_gradcorr} is included in Appendix~\ref{app:proof_corollary_DataQ} for completeness.
The corollary implies that using $\mathtt{DataQ}$ guarantees convergence with rate $O(\frac{1}{\sqrt{n}})$ to a point with gradient $\nabla L(\mathbf{w}^{(n)})$ falling in a neighborhood of zero with radius $O(\frac{1}{h})$. Since $h$ is typically large, in our experiments in Section~\ref{sec:numerics}, we are able to train models to sufficiently good performance using only $\mathtt{DataQ}$.
}
\end{remark}
{To guarantee convergence to a local optimal point (with $\nabla L(\mathbf{w}) = 0$, instead of a neighborhood), in the following subsection, we introduce a gradient correction mechanism.}

\subsection{Gradient correction \emph{[$\mathtt{GradCorrQ}$]}}
\begin{algorithm}
	\caption{$\mathtt{GradCorrQ}(\Delta, m)$}
	\label{alg:GradCorrQ}  
	\begin{algorithmic}
		\State \underline{Hyperparameters} : $C_z$, Lipschitz constant of loss function in $\mathbf{z}$; $h$, dimension of hypothesis space; $d$, dimension of dataset.
		\State Initialize $\widehat \Delta = \mathbf{0}^h$
		\State Pick integer index $i^\star \in [h]$ uniformly at random.
		\State $t(i^\star) \gets$ binary representation of $i^\star$ using $\log_2(h)$ bits.
		\State $p \gets \frac{\Delta_{i^\star}}{2C_z B\frac{\sqrt{d}}{m-1}} + \frac{1}{2}$.
		\State $e_g \gets \texttt{Bernoulli}(p)$
		\State $\widehat\Delta_{i^\star} \gets (2e_g-1)C_z Bh\frac{\sqrt{d}}{m-1}$.
		\State {\bf Return} $\widehat{\Delta}$, $(t(i^\star), e_g)$.
		\State {\bf Comment:} Using $(t(i^\star), e_g)$, the central node can recreate $\widehat \Delta$.		
	\end{algorithmic}
\end{algorithm}
We here describe the gradient correction procedure $\mathtt{GradCorrQ}$ (see also the pseudocode in Algorithm~\ref{alg:GradCorrQ}) that augments the quantized gradient estimated from $\mathbf{z}_Q = \mathtt{DataQ}(\mathbf{z},m)$ to achieve $O(1/\sqrt{n})$ convergence.

Let $\Delta$ be the error in computing the gradient using $\mathbf{z}_Q$ defined as $\Delta=g_{\mathbf{z}}(\mathbf{w}){-}g_{\mathbf{z}_Q}(\mathbf{w})$. The node that sent $\mathbf{z}_Q$ to the learning agent can also communicate an estimate of $\Delta$, by quantizing it as follows.
The learning agent and the  node agree on a random number generator that generates $i^\star\in [1:h]$ uniformly at random. If there is no common randomness, then the  node can send $i^\star$ to the agent using $\lceil\log_2(h)\rceil$ bits. The  node also sends a single stochastic bit $e_g$, where $e_g$
\begin{align}
{\rm Pr}\Big\{e_g = 1\Big\} = \frac{C_z B\frac{\sqrt{d}}{m-1}-\Delta_{i^\star}}{2C_z B\frac{\sqrt{d}}{m-1}}.
\end{align}
This, in a way, uses random coordinate selection and stochastic quantization \cite{alistarh2017qsgd}. The agent constructs an estimate $\widehat \Delta$ and uses it to create the quantized stochastic gradient  as follows
\begin{align}
   &\widehat g_\mathbf{z}(\mathbf{w}) = g_{\mathbf{z}_Q}(\mathbf{w}) + \widehat{\Delta}, \mbox{ where }\nonumber \\
   &\widehat \Delta_i = \begin{cases}
(2 e_g - 1) C_z Bh\frac{\sqrt{d}}{m-1} & \text{if } i = i^\star, \\
0 & \text{otherwise }.
\end{cases}
\end{align}
Lemma~\ref{lem:grad_unbiased_estimate}, which is proved in Appendix \ref{app:lem2}, shows the unbiasedness and variance properties of the gradient estimate. 
From this, Theorem~\ref{thm:convergence_DQSGD}, which summarizes the convergence guarantees for $\mathtt{DaQuSGD}$ follows.
\begin{lemma}\label{lem:grad_unbiased_estimate}
	The quantized stochastic gradient $\widehat{g}_\mathbf{z}(\mathbf{w})$ is an unbiased estimate of  $ \nabla L(\mathbf{w})$ and satisfies $\|\widehat{g}_\mathbf{z}(\mathbf{w})-\nabla L(\mathbf{w})\|_2\leq C_zB\left(2 + (h+1)\frac{\sqrt{d}}{m-1}\right)$.
\end{lemma}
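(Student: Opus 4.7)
The plan is to prove the two claims in sequence: first the unbiasedness by showing that both the random coordinate selection and the stochastic bit $e_g$ are calibrated so that $\mathbb{E}[\widehat\Delta \mid \mathbf{z}] = \Delta$, and then the deterministic norm bound by triangle-inequality decomposition.

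For unbiasedness, I would set the shorthand $\beta = C_z B \sqrt{d}/(m-1)$. A necessary preliminary is to verify that the Bernoulli parameter $p = \Delta_{i^\star}/(2\beta) + 1/2$ lies in $[0,1]$. This follows by combining Theorem~\ref{th1} (which gives $\|\mathbf{z}-\mathbf{z}_Q\|_\infty \le B/(m-1)$, hence $\|\mathbf{z}-\mathbf{z}_Q\|_2 \le \sqrt{d}B/(m-1)$) with the Lipschitz assumption \eqref{assump2}, which together yield $|\Delta_i| \le \|\Delta\|_2 \le \beta$ coordinate-wise. Then, conditioning first on $i^\star$ and then averaging over its uniform choice in $[h]$, I compute for each coordinate $i$:
\begin{align*}
\mathbb{E}[\widehat\Delta_i \mid \mathbf{z}] &= \Pr\{i^\star = i\}\cdot \mathbb{E}[(2e_g-1)\beta h \mid i^\star = i, \mathbf{z}] \\
&= \tfrac{1}{h}\cdot (2p-1)\beta h = \Delta_i.
\end{align*}
Hence $\mathbb{E}[\widehat\Delta \mid \mathbf{z}] = \Delta = g_\mathbf{z}(\mathbf{w}) - g_{\mathbf{z}_Q}(\mathbf{w})$, which gives $\mathbb{E}[\widehat g_\mathbf{z}(\mathbf{w})\mid \mathbf{z}] = g_\mathbf{z}(\mathbf{w})$. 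Taking the outer expectation over the uniform sample $\mathbf{z}$ and using $\mathbb{E}_\mathbf{z}[g_\mathbf{z}(\mathbf{w})] = \nabla L(\mathbf{w})$ finishes the unbiasedness claim.

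For the norm bound, I would add and subtract $g_\mathbf{z}(\mathbf{w})$ to write
\[
\widehat g_\mathbf{z}(\mathbf{w}) - \nabla L(\mathbf{w}) = \underbrace{\widehat\Delta}_{\text{coord.\ noise}} \;-\; \underbrace{\Delta}_{\text{quantization gap}} \;+\; \underbrace{\bigl(g_\mathbf{z}(\mathbf{w}) - \nabla L(\mathbf{w})\bigr)}_{\text{sampling gap}},
\]
and apply the triangle inequality. The three terms are controlled as follows. The quantization gap satisfies $\|\Delta\|_2 \le \beta$ by the argument above. The coordinate noise has exactly one nonzero entry of magnitude $\beta h$, so $\|\widehat\Delta\|_2 = \beta h$. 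Finally, for the sampling gap, I write $\nabla L(\mathbf{w}) = \mathbb{E}_{\mathbf{z}'}[g_{\mathbf{z}'}(\mathbf{w})]$, pull the norm inside the expectation by Jensen, and bound $\|g_\mathbf{z}(\mathbf{w}) - g_{\mathbf{z}'}(\mathbf{w})\|_2 \le C_z\|\mathbf{z}-\mathbf{z}'\|_2 \le 2C_zB$ using \eqref{assump2} together with the $B$-boundedness of $\mcal{Z}$. Summing the three bounds gives $\beta + \beta h + 2C_zB = C_zB\bigl(2 + (h+1)\sqrt{d}/(m-1)\bigr)$, as required.

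I do not expect a true obstacle here; the only subtle point is justifying that the constructed probability $p$ is a valid Bernoulli parameter, which is precisely where Theorem~\ref{th1} combines with the $\mathbf{z}$-Lipschitz assumption~\eqref{assump2}, and this is also exactly the reason the scheme needs the factor $\beta = C_zB\sqrt{d}/(m-1)$ in its scaling.
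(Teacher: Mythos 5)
Your proposal is correct and follows essentially the same route as the paper's proof: conditioning to show $\mathbb{E}[\widehat\Delta\mid\Delta]=\Delta$ (hence unbiasedness via the tower property and $\mathbb{E}_{\mathbf{z}}[g_\mathbf{z}(\mathbf{w})]=\nabla L(\mathbf{w})$), and then the triangle inequality over the same three terms $\widehat\Delta$, $\Delta$, and $g_\mathbf{z}(\mathbf{w})-\nabla L(\mathbf{w})$, with the last bounded by $2C_zB$ through the empirical-average representation of $\nabla L$ and the Lipschitz assumption \eqref{assump2}. Your explicit check that $p=\Delta_{i^\star}/(2\beta)+1/2\in[0,1]$ via $|\Delta_{i^\star}|\leq\|\Delta\|_2\leq\beta$ is a worthwhile detail the paper leaves implicit, but it does not change the argument.
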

\begin{theorem}\label{thm:convergence_DQSGD}
$(1)$ Let $\mathcal{W}$ be convex and let $\ell(\mathbf{w},\mathbf{z})$ be a convex function  satisfying the loss function assumptions in Section~\ref{sec:model}. Let {$\mcal{Z}$ be $B$-bounded} and  $\sup_{\mathbf{w}\in \mathcal{W}}\|\mathbf{w}-\mathbf{w}^{(0)}\|^2_2 {=} \widetilde{D}^2$, where $\mathbf{w}^{(0)}{\in} \mathcal{W}$ is the initial model. Assume that the stochastic gradient descent uses the quantized  gradients $\widehat g_\mathbf{z}(\mathbf{w})$ obtained through {\rm $\mathtt{DaQuSGD}$}, with step size $\eta = \left(C_w+\frac{1}{\gamma}\right)^{-1}$, where $\gamma = \frac{\widetilde{D}}{\widehat{\sigma}}\sqrt{\frac{2}{n}}$. Then
	\begin{align}
	&\mathbb{E}\left[L\left(\frac{1}{n}\sum_{i=1}^n\mathbf{w}^{(i)}\right)\right]-L(\mathbf{w}^*)\leq \widetilde{D}\widehat{\sigma}\sqrt{\frac{2}{n}}+\frac{C_w \widetilde{D}^2}{n},\nonumber \\
	&\mbox{ with } \widehat\sigma = C_zB\left(2 + (h+1)\sqrt{d}/m-1\right). \nonumber
 	\end{align}
	
	$(2)$ Let $\ell(\mathbf{w},\mathbf{z})$ be a function (possibly non-convex) satisfying the loss function assumptions in Section~\ref{sec:model}, and $\|\nabla L(\mathbf{w})\|_2\!\leq\! \widetilde{D},\ \forall \mathbf{w}\!\in\!\!\mathcal{W}$\footnote{Note that the assumptions in Section~\ref{sec:model} imply that $\|\nabla L(\mathbf{w})\|_2$ is bounded almost surely.}. Let $L(\mathbf{w}^{(0)})-L(\mathbf{w}^*)\!=\! D_0$, where $\mathbf{w}^{(0)}{\in} \mathcal{W}$ is the initial model and $\mathbf{w}^*$ is the optimal model. Then, {\rm $\mathtt{DaQuSGD}$} with step size $\eta = \min\{C_w^{-1},\gamma\}$, where $\gamma = \frac{1}{\widehat\sigma}\sqrt{\frac{2D_0}{nC_w}}$, satisfies	
  \begin{equation}
	\frac{1}{n}\!\sum_{t=1}^n \mathbb{E}[\|\nabla L(\mathbf{w}^{(t)})\|_2^2] \leq 2\widehat\sigma \sqrt{\frac{2C_wD_0}{n}}+\frac{2D_0C_w}{n},\!
	\end{equation}
	where $\widehat\sigma$ is as before, and expectation is over the random selection of points, and randomness in {\rm $\mathtt{DaQuSGD}$}.
\end{theorem}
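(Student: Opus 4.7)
The plan is to reduce both parts of Theorem~\ref{thm:convergence_DQSGD} to the standard convergence theorems for stochastic gradient descent on smooth objectives with unbiased stochastic gradients of bounded second moment. The main technical leverage is Lemma~\ref{lem:grad_unbiased_estimate}, which already guarantees that the quantized gradient $\widehat g_\mathbf{z}(\mathbf{w})$ produced by $\mathtt{DaQuSGD}$ satisfies $\mathbb{E}[\widehat g_\mathbf{z}(\mathbf{w})]=\nabla L(\mathbf{w})$ and has an almost-sure error bounded by $\widehat\sigma = C_z B(2+(h+1)\sqrt{d}/(m-1))$. This immediately yields $\mathbb{E}\|\widehat g_\mathbf{z}(\mathbf{w})-\nabla L(\mathbf{w})\|_2^2\leq \widehat\sigma^2$, which is the ``bounded variance'' hypothesis required by the classical analyses.

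For part (1), I would invoke the standard convergence theorem for SGD on a $C_w$-smooth convex function over a convex set (the version re-stated in Appendix~\ref{app:sgdconv} from \cite{bubeck2014convex}). With step size $\eta=(C_w+1/\gamma)^{-1}$ and $\gamma=\widetilde D \sqrt{2/n}/\widehat\sigma$, that theorem gives
\begin{equation*}
\mathbb{E}\left[L\left(\tfrac{1}{n}\sum_{i=1}^n\mathbf{w}^{(i)}\right)\right]-L(\mathbf{w}^*) \;\leq\; \widetilde D\,\widehat\sigma\sqrt{2/n}+\frac{C_w\widetilde D^2}{n},
\end{equation*}
once unbiasedness and the variance bound from Lemma~\ref{lem:grad_unbiased_estimate} are plugged in. Everything here is a direct substitution; no new calculation beyond verifying that $\widetilde D$ plays the role of the diameter $\sup_{\mathbf w\in\mathcal W}\|\mathbf w-\mathbf w^{(0)}\|_2$ is required.

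For part (2), the corresponding result for $C_w$-smooth (possibly non-convex) $L$ with unbiased, variance-bounded stochastic gradients is the Ghadimi--Lan theorem \cite{ghadimi2013stochastic}, also re-stated in Appendix~\ref{app:sgdconv}. With $\eta=\min\{C_w^{-1},\gamma\}$ and $\gamma=\tfrac{1}{\widehat\sigma}\sqrt{2D_0/(nC_w)}$, that theorem produces the advertised bound
\begin{equation*}
\frac{1}{n}\sum_{t=1}^n\mathbb{E}\|\nabla L(\mathbf{w}^{(t)})\|_2^2 \;\leq\; 2\widehat\sigma\sqrt{\frac{2C_w D_0}{n}}+\frac{2D_0 C_w}{n}.
\end{equation*}
Again the only substitutions needed are $L(\mathbf w^{(0)})-L(\mathbf w^*)=D_0$ and $\widehat\sigma$ from Lemma~\ref{lem:grad_unbiased_estimate}.

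The only non-routine step is to confirm that Lemma~\ref{lem:grad_unbiased_estimate}'s almost-sure norm bound indeed implies the variance bound in the exact form required by the two classical theorems (both of those theorems are stated for $\mathbb{E}\|\widehat g-\nabla L\|_2^2\leq \widehat\sigma^2$, which follows from $\|\widehat g-\nabla L\|_2\leq \widehat\sigma$ a.s.\ via a single application of Jensen/monotonicity). Beyond this, the proof is mechanical: unbiasedness from Lemma~\ref{lem:grad_unbiased_estimate}, the loss-function smoothness assumption (\ref{assump1}), and the diameter/initialization assumptions of the theorem statement meet the hypotheses of \cite{bubeck2014convex} and \cite{ghadimi2013stochastic} respectively. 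I do not expect any obstacle; the entire proof is a two-line invocation of existing SGD theorems once Lemma~\ref{lem:grad_unbiased_estimate} is in hand.
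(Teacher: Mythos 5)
Your proposal is correct and follows exactly the paper's own route: the paper proves Theorem~\ref{thm:convergence_DQSGD} by combining the unbiasedness and almost-sure error bound of Lemma~\ref{lem:grad_unbiased_estimate} (which trivially gives the variance bound $\widehat\sigma^2$) with the standard convex and non-convex SGD convergence theorems restated in Appendix~\ref{app:sgdconv}. No gaps; the substitutions you describe are all that is needed.
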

Using the definition of $\widehat\sigma$ in Theorem~\ref{thm:convergence_DQSGD} we can find the condition required for optimal convergence using $\mathtt{DaQuSGD}$, which is stated by the following corollary.

\begin{corollary}
For $m=h\sqrt{d}$, {\rm $\mathtt{DaQuSGD}$} achieves the optimal convergence rate, using at most $1 + \log_2(h)+2\log_2(h\sqrt{d})+2d\log_2(e(1+h^2/2))$ bits per iteration.
\end{corollary}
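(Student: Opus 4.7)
The plan is to verify both claims---convergence rate and bit count---by substituting $m=h\sqrt{d}$ into the two earlier results, Theorem~\ref{th1} and Theorem~\ref{thm:convergence_DQSGD}, and doing the bookkeeping.

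For the convergence rate, I would plug $m=h\sqrt{d}$ into the variance-proxy $\widehat\sigma = C_zB\bigl(2 + (h+1)\sqrt{d}/(m-1)\bigr)$ from Theorem~\ref{thm:convergence_DQSGD}. In the regime $h\sqrt{d}\geq 2$, the fraction $(h+1)\sqrt{d}/(h\sqrt{d}-1)$ can be upper bounded by $2(h+1)/h \leq 4$, so $\widehat\sigma$ is bounded by a constant ($\leq 6C_zB$) that does not grow with $n$, $h$, or $d$. Substituting this $O(1)$ bound on $\widehat\sigma$ into part $(1)$ of Theorem~\ref{thm:convergence_DQSGD} yields $\mathbb{E}[L(\frac{1}{n}\sum_i \mathbf{w}^{(i)})]-L(\mathbf{w}^*) = O(\widetilde D/\sqrt{n} + C_w\widetilde D^2/n) = O(1/\sqrt{n})$ in the convex case; analogously, part $(2)$ gives $\frac{1}{n}\sum_t \mathbb{E}[\|\nabla L(\mathbf{w}^{(t)})\|_2^2] = O(1/\sqrt{n})$ in the non-convex case, each matching the known lower bound $\Omega(1/\sqrt{n})$ recalled in Section~\ref{sec:prelim}.

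For the per-iteration bit count, I would add the costs of $\mathtt{DataQ}$ and $\mathtt{GradCorrQ}$. By Theorem~\ref{th1}(ii), $\mathtt{DataQ}$ uses at most $2\log_2(m)+\min\{2d\log_2(e(2d+m^2)/(2d)),\ m^2\log_2(e(2d+m^2)/m^2)\}$ bits. With $m=h\sqrt{d}$, the first term inside the minimum evaluates to $2d\log_2(e(1+h^2/2))$, while the second evaluates to $h^2 d\log_2(e(1+2/h^2))$; the former is smaller in the relevant regime (e.g., whenever $h$ is moderately large), so it is the active branch. Hence $\mathtt{DataQ}$ contributes $2\log_2(h\sqrt d)+2d\log_2(e(1+h^2/2))$ bits. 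Reading Algorithm~\ref{alg:GradCorrQ}, $\mathtt{GradCorrQ}$ transmits only an index $i^\star \in [h]$ using $\lceil\log_2(h)\rceil$ bits and a single Bernoulli bit $e_g$, for a total of $\log_2(h)+1$ bits. Summing yields exactly the stated $1+\log_2(h)+2\log_2(h\sqrt d)+2d\log_2(e(1+h^2/2))$ bits per iteration.

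The only non-trivial checks are (i) confirming that the first branch of the minimum in the $\mathtt{DataQ}$ bit count is active at $m=h\sqrt d$, and (ii) handling the degenerate case $h\sqrt d<2$ when bounding $\widehat\sigma$, which only arises for $h=d=1$ and can be absorbed into the $O(\cdot)$ constants. Both are routine, so the corollary follows by direct substitution and summation, and I do not expect any genuine obstacle.
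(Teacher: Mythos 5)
Your proposal is correct and follows essentially the same route the paper intends: substitute $m=h\sqrt{d}$ into $\widehat\sigma$ from Theorem~\ref{thm:convergence_DQSGD} to see it becomes an $O(C_zB)$ constant (giving the $O(1/\sqrt{n})$ rate), and sum the Theorem~\ref{th1} bit bound for $\mathtt{DataQ}$ with the $\lceil\log_2(h)\rceil+1$ bits of $\mathtt{GradCorrQ}$. Your check (i) is actually unnecessary: since the corollary only claims an upper bound, the minimum in Theorem~\ref{th1} is always at most its first argument $2d\log_2\bigl(e(2d+m^2)/(2d)\bigr)=2d\log_2\bigl(e(1+h^2/2)\bigr)$, so the stated bound holds whether or not that branch is the smaller one.
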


\section{Selecting samples to transmit}\label{sec:redund}
In this part, we aim to further reduce the communication cost of $\mathtt{DaQuSGD}$ using sample selection. The high level incentive is the following: \emph{If a data sample contributes minimally to the model learning, we do not transmit it}. This can intuitively be viewed as a generalization of support vectors in SVM; we want to only send samples that are necessary for designing the classifier, and we also want to decide
which samples are necessary in an \emph{online}
way, \emph{without the knowledge of the full dataset}. Note that, samples that are transmitted are still quantized using $\mathtt{DaQuSGD}$. 

Our sample selection method assumes terminal nodes have enough resources to perform the forward pass of the model that we want to train at the central node. 
First, for a new data sample in the $i$-th iteration, we apply the last version of the model available at the terminal node $\mathbf{w}^{(i-1)}$ on the new data sample $\mathbf{z}$ to compute the loss $\ell(\mathbf{w}^{(i-1)},\textbf{z})$. Using this loss, we quantize and transmit the point to the central node only if the loss exceeds a threshold, i.e., if $\ell(\mathbf{w}^{(i-1)},\mathbf{z}){>} \ell_{th}^{(i)}$, where $\ell_{th}^{(i)}$ is the threshold in iteration $i$.

 The intuition behind this loss thresholding approach is that for classification tasks, samples close to the classification boundary (points in P2 in Fig.~\ref{fig:linear_boosting_example}) will typically have higher loss (important for the classifier design), while the samples that are well away from the boundary will exhibit smaller loses and thus have a minimal effects on the model (P1 and P3 in Fig.~\ref{fig:linear_boosting_example}).
\begin{figure}
	\begin{center}
	\includegraphics[width=0.6\linewidth]{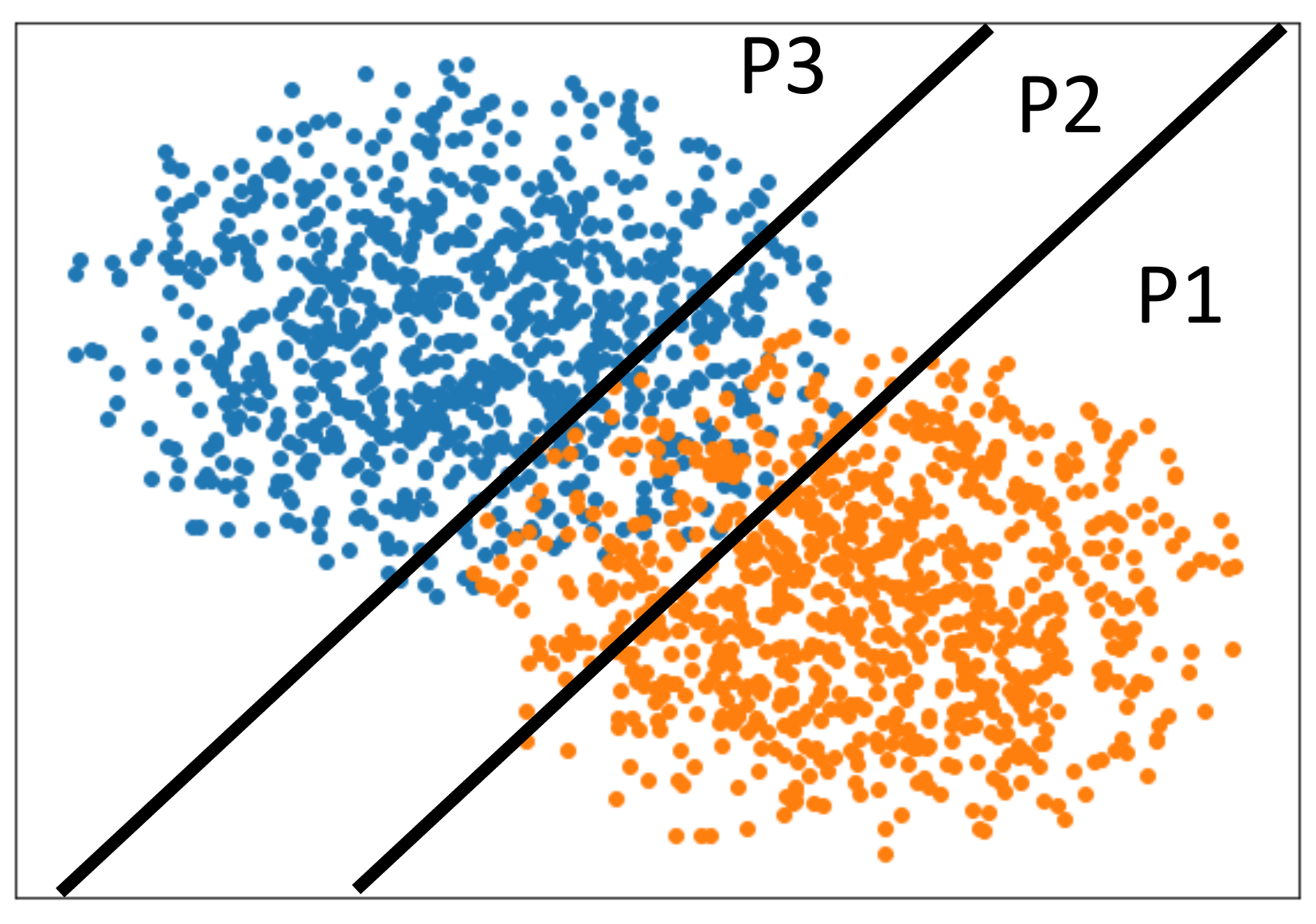}
	\end{center}
	\caption{Illustrative example for sample selection.}\label{fig:linear_boosting_example}
	\vskip -0.2in
\end{figure}

Formally, in the $i$-th iteration of the learning algorithm, by applying the selection based on thresholding, the algorithm minimizes the risk function $\hat{L}(\mathbf{w})=\frac{1}{N}\sum_{i=1}^N \max \{\ell_{th}^{(i)},\ell(\mathbf{w},\mathbf{z}^{(i)})\}$. Hence, by decreasing the threshold $\ell_{th}^{(i)}$ as the algorithm progresses, the algorithm approaches the optimal point for a convex loss function (or a local optimal for non-convex loss function). Ideally, we want the loss of points that are not communicated to not be detrimental to the convergence rate of the algorithm. This cannot be guaranteed in general. The following theorem (proved in Appendix~\ref{app:sampsel}) provides a sufficient condition on designing the thresholds $\{\ell_{th}^{(i)}\}_{i=1}^n$ such that our thresholding approach can converge with rate $\mcal{O}(\frac{1}{\sqrt{n}})$, same as the vanilla SGD.
\begin{theorem}\label{thm:thresholding}
	Let $\mathcal{W}$ be convex, $L(\mathbf{w})$ be convex, $C_w$-smooth, and $\sup_{\mathbf{w}\in \mathcal{W}}\|\mathbf{w}-\mathbf{w}^{(0)}\|^2_2 {=} \widetilde{D}^2$, where $\mathbf{w}^{(0)}{\in} \mathcal{W}$ is the initial hypothesis. Let  $\mathbf{w}^*=\arg\min_{\mathbf{w}\in \mathcal{W}}L(\mathbf{w})$ and assume that $L(\mathbf{w}^\star) \geq 0$. Assume that SGD is performed with stochastic gradients $\widehat g(\mathbf{w})$ that satisfy (i) $\mathbb{E}[\widehat{g}(\mathbf{w})]=\nabla L(\mathbf{w})$ (unbiasedness), and (ii) $\mathbb{E}[\|\widehat{g}(\mathbf{w})-\nabla L(\mathbf{w})\|^2_2]\leq \widetilde{B}^2$ (bounded variance). Additionally, assume that the loss thresholds satisfy that $\sum_{i=1}^n\sqrt{\ell_{th}^{(i)}}\leq \sqrt{n}$, then at iteration $n$, if the step size is $\eta = \min\{C_w^{-1},\left(\sqrt{n}\widetilde{B}\right)^{-1}\}$, we have
	\begin{align}
		\mathbb{E}&\left[L\left(\frac{1}{n}\sum_{i=1}^n\mathbf{w}^{(i)}\right)\right]-L(\mathbf{w}^*)\leq \nonumber \\
		&\qquad\frac{\widetilde{B}(\widetilde{D}/2+1)+\widetilde{D}\sqrt{2C_w}}{\sqrt{n}}+\frac{3C_w\widetilde{D}\sqrt{2C_w}}{n\widetilde{B}}.
	\end{align}
\end{theorem}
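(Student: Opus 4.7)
The plan is to view the loss-threshold selection rule as a biased stochastic gradient oracle, and to adapt the standard convex-smooth SGD distance recursion so that the bias contribution from skipped samples is absorbed by the hypothesis $\sum_{i=1}^n \sqrt{\ell_{th}^{(i)}}\leq \sqrt n$. At iteration $i$ the effective gradient is $\tilde g^{(i)} = \widehat g^{(i)}\,\mathbb{I}\{\ell(\mathbf{w}^{(i-1)},\mathbf{z}^{(i)})>\ell_{th}^{(i)}\}$, whose conditional mean differs from $\nabla L(\mathbf{w}^{(i-1)})$ by a bias vector $\mathbf{e}^{(i)} = \mathbb{E}[\widehat g^{(i)}\,\mathbb{I}\{\ell\leq \ell_{th}^{(i)}\}\mid \mathbf{w}^{(i-1)}]$. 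The key enabling fact is the self-bounding property of any $C_w$-smooth non-negative function: applying the descent lemma along $-\nabla \ell/C_w$ and using $\ell\geq 0$ gives $\|\nabla \ell(\mathbf{w},\mathbf{z})\|_2\leq \sqrt{2C_w\,\ell(\mathbf{w},\mathbf{z})}$, so restricting to $\{\ell\leq \ell_{th}^{(i)}\}$ and taking expectation yields the clean bias bound $\|\mathbf{e}^{(i)}\|_2 \leq \sqrt{2C_w\,\ell_{th}^{(i)}}$. The same self-bounding inequality applied to $L$ itself (using $\nabla L(\mathbf{w}^*)=0$ at the minimizer and $L^*\geq 0$) gives $\|\nabla L(\mathbf{w})\|_2^2 \leq 2C_w(L(\mathbf{w})-L^*)$, which I would use to control the second moment of $\tilde g^{(i)}$.

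With these in place, the argument follows the standard template. Start from $\|\mathbf{w}^{(i)}-\mathbf{w}^*\|_2^2 = \|\mathbf{w}^{(i-1)}-\mathbf{w}^*\|_2^2 - 2\eta\langle \tilde g^{(i)}, \mathbf{w}^{(i-1)}-\mathbf{w}^*\rangle + \eta^2\|\tilde g^{(i)}\|_2^2$ and take conditional expectation. Convexity of $L$ converts the inner-product term into the suboptimality $L(\mathbf{w}^{(i-1)})-L^*$; Cauchy--Schwarz combined with $\|\mathbf{w}^{(i-1)}-\mathbf{w}^*\|_2\leq \widetilde D$ and the bias bound contributes $2\eta\widetilde D\sqrt{2C_w\,\ell_{th}^{(i)}}$; and $\mathbb{E}[\|\tilde g^{(i)}\|_2^2]\leq \widetilde B^2 + \|\nabla L(\mathbf{w}^{(i-1)})\|_2^2 \leq \widetilde B^2 + 2C_w(L(\mathbf{w}^{(i-1)})-L^*)$ produces an $\eta^2\widetilde B^2$ noise piece together with a $2\eta^2 C_w(L-L^*)$ piece that, provided $\eta \leq C_w^{-1}$, is absorbed into the left-hand side to leave an effective coefficient $2\eta(1-\eta C_w)$ in front of $\mathbb{E}[L(\mathbf{w}^{(i-1)})-L^*]$. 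Summing over $i$, telescoping the distance terms with $\|\mathbf{w}^{(0)}-\mathbf{w}^*\|_2^2\leq \widetilde D^2$, and invoking $\sum_i\sqrt{\ell_{th}^{(i)}}\leq \sqrt n$ to collapse the accumulated bias, I obtain a bound on $\tfrac{1}{n}\sum_i \mathbb{E}[L(\mathbf{w}^{(i-1)})-L^*]$ with three contributions of order $\widetilde D^2/(\eta n)$, $\widetilde D\sqrt{2C_w/n}$ and $\eta\widetilde B^2$, each modulated by $(1-\eta C_w)^{-1}$. Jensen's inequality applied to convex $L$ then lifts this average-iterate bound to the claimed bound on $\mathbb{E}[L(\tfrac{1}{n}\sum_i\mathbf{w}^{(i)})]-L^*$.

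Substituting $\eta = \min\{C_w^{-1}, (\sqrt n\widetilde B)^{-1}\}$ splits into two regimes. When $\sqrt n\widetilde B > C_w$, the $(\sqrt n\widetilde B)^{-1}$ arm is active and yields the $1/\sqrt n$ leading terms of the form $\widetilde B\widetilde D/\sqrt n + \widetilde D\sqrt{2C_w}/\sqrt n + \widetilde B/\sqrt n$ that match $(\widetilde B(\widetilde D/2+1)+\widetilde D\sqrt{2C_w})/\sqrt n$; the $C_w^{-1}$ arm takes over in the small-$n$ regime and, together with the residual $(1-\eta C_w)^{-1}$ factor, is what generates the $C_w\widetilde D\sqrt{2C_w}/(n\widetilde B)$ correction. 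The main obstacle I expect is the edge of the step-size window at $\eta=C_w^{-1}$, where $(1-\eta C_w)^{-1}$ is singular; to recover the exact constants I would redo that boundary regime via the descent-lemma form of smoothness, $L(\mathbf{w}^{(i)})\leq L(\mathbf{w}^{(i-1)}) - \eta\langle \nabla L,\tilde g^{(i)}\rangle + (C_w\eta^2/2)\|\tilde g^{(i)}\|_2^2$, which avoids the blowup but introduces its own cross term $\langle \nabla L,\mathbf{e}^{(i)}\rangle$ that needs a Young-style split, and then splice the two regimes together to pin down the precise constants $\widetilde B(\widetilde D/2+1)$ and $3C_w\widetilde D\sqrt{2C_w}$ appearing in the statement.
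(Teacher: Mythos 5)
Your key lemma is exactly the paper's: view the thresholded update as a biased oracle, bound the bias of the dropped points by the self-bounding inequality $\|\nabla \ell(\mathbf{w},\mathbf{z})\|_2\le\sqrt{2C_w\,\ell(\mathbf{w},\mathbf{z})}$ (hence $\le\sqrt{2C_w\,\ell_{th}^{(i)}}$ on the untransmitted set), sum the bias using $\sum_i\sqrt{\ell_{th}^{(i)}}\le\sqrt{n}$, bound the second moment by $\widetilde B^2+\|\nabla L\|_2^2$, and finish with telescoping and Jensen. Where you diverge is the workhorse recursion: you run the pure distance-to-optimum recursion and absorb the $2\eta^2C_w(L-L^*)$ term into the left-hand side, which leaves a $(1-\eta C_w)^{-1}$ prefactor. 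That is a genuine gap for this theorem as stated: with $\eta=\min\{C_w^{-1},(\sqrt n\widetilde B)^{-1}\}$ the regime $\sqrt n\widetilde B\le C_w$ puts you exactly at $\eta=C_w^{-1}$, where your bound is vacuous, and even in the other regime the factor $\bigl(1-C_w/(\sqrt n\widetilde B)\bigr)^{-1}$ degrades without bound as $\sqrt n\widetilde B\downarrow C_w$, so the clean additive constants $\widetilde B(\widetilde D/2+1)+\widetilde D\sqrt{2C_w}$ and $3C_w\widetilde D\sqrt{2C_w}$ cannot be recovered from your inequality alone. You flag this and propose to ``splice'' in a descent-lemma argument at the boundary, but that splice is precisely the unproved part — and it is, in effect, the paper's entire proof: the paper combines the smoothness quadratic bound at $\mathbf{w}^{(j+1)}$ with convexity at $\mathbf{w}^{(j)}$ and completes the square in $\|\mathbf{w}^{(j)}-\mathbf{w}^*\|_2$, which leaves only benign factors like $(1-\eta C_w/2)$ valid up to and including $\eta=C_w^{-1}$, so no regime splitting is needed.

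One smaller point of attribution: the correction term $3C_w\widetilde D\sqrt{2C_w}/(n\widetilde B)$ does not come from the small-$n$ arm of the step size. In the paper it arises from the bias cross-terms $\Delta_j^T\nabla L(\mathbf{w}^{(j)})$ and $\|\Delta_j\|_2^2$ (bounded via $\|\Delta_j\|_2\le 2C_w\widetilde D$ and $\|\nabla L\|_2\le 2C_w\widetilde D$), which contribute $\widetilde D\sqrt{2C_w\,\ell_{th}^{(j)}}\cdot 3\eta C_w$ per iteration and become $3C_w\widetilde D\sqrt{2C_w}/(n\widetilde B)$ after substituting $\eta=(\sqrt n\widetilde B)^{-1}$ — i.e., it is present in the large-$n$ regime, not generated by the $C_w^{-1}$ arm. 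So your plan needs those extra gradient-norm and bias-norm bounds (both consequences of smoothness plus $\ell\ge 0$) in addition to the $\sqrt{2C_w\ell_{th}}$ bound, if you want the stated constants to emerge.
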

Hence, this approach reduces communication cost, and computation cost (since backpropagation is only applied on a subset of the samples) without sacrificing the order of the convergence rate. In fact, we show numerically, in Section~\ref{sec:numerics}, that thresholding can provide an improvement in terms of how fast do we converge to a good performance model.
\begin{remark}[Sample Selection + $\mathtt{DaQuSGD}$]
{\rm 
Theorem~\ref{thm:thresholding} assumes only the unbiasedness and bounded variance properties of the gradient estimates. Thus, we can apply the sample selection through thresholding on top of our $\mathtt{DaQuSGD}$ quantization approach described earlier in Section~\ref{sec:DQSGD} without a penalty in the order of convergence, since it generates gradient estimates with these properties at the central node.
}
\end{remark}
\section{Experimental Evaluation}\label{sec:numerics}
We evaluate the practical gain $\mathtt{DaQuSGD}$ achieves in terms of communicated bits for training ResNet models on the CIFAR-10 and ImageNet datasets. 

\textbf{ImageNet Experimental Setup.} For our ImageNet experiments, we train a ResNet-50~\cite{he2016deep} ($h=25,557,032$) on the ImageNet \cite{deng2009imagenet} dataset ($d=150,528$). We use a learning rate schedule with a base learning rate of 1e-1 with a piece-wise decay of 0.25 introduced every 30 epochs and batch size of 128. {We consider a scenario with a single learning server and 10,000 distributed nodes, each holding a unique set of 128 ImageNet images locally. The collection of all disjoint image sets from all distributed nodes make up the ImageNet training set.}
The training was done using SGD with momentum of 0.9.

\textbf{CIFAR-10 Experimental Setup.} For CIFAR-10 ($d=3,072$), we train a ResNet-18 ($h=11,173,962$).  We use a learning rate schedule consisting of a base learning rate of 1e-3 with a piece-wise decay of 0.1 introduced at epoch 80 and batch size of 128. {We assume a single central server that learns the classifier using data from 390 distributed nodes, where each distributed node stores 128 CIFAR images locally (totaling 49,920 total data points)}. Similar to the ImageNet models, the ResNet-18 networks were trained using SGD with momentum of 0.9.

{\bf {DaQuSGD} Implementation.} We make some implementation adjustments to the $\mathtt{DaQuSGD}$ algorithm theoretically analyzed for convergence in Section~\ref{sec:DQSGD}.
Our  implementation  differs as follows.
\begin{figure*}[t!]
  \centering
\subfigure[top-1 accuracy for~$\mathtt{DaQuSGD}$ and~$\mathtt{QTopK\mbox{-}SGD}$.]{\includegraphics[width=0.38\linewidth]{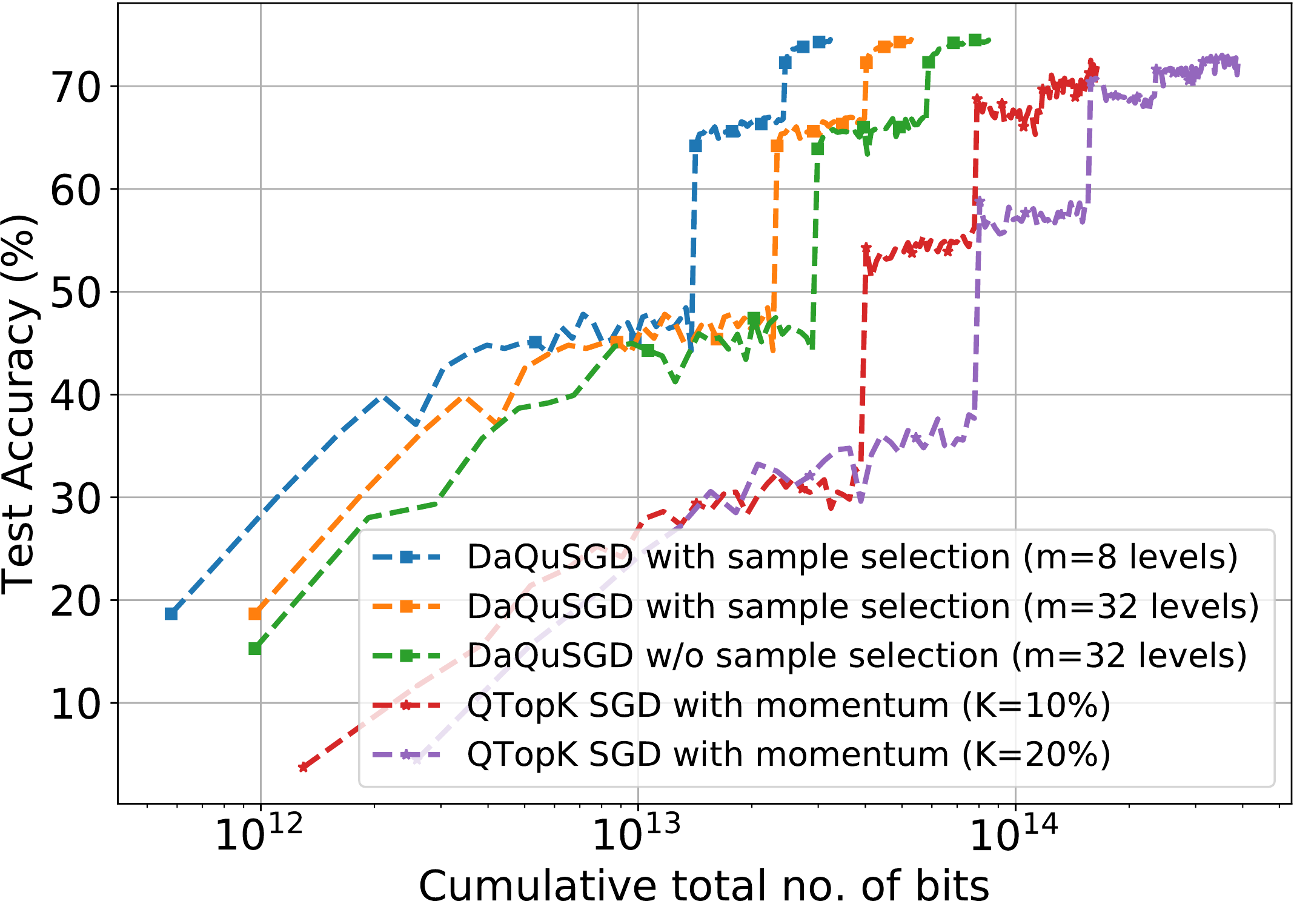}}\label{fig:ImageNet}
\subfigure[Number of bits required to achieve $72\%$ top-1 accuracy.]{\hspace{0.4in}\includegraphics[width=0.38\linewidth]{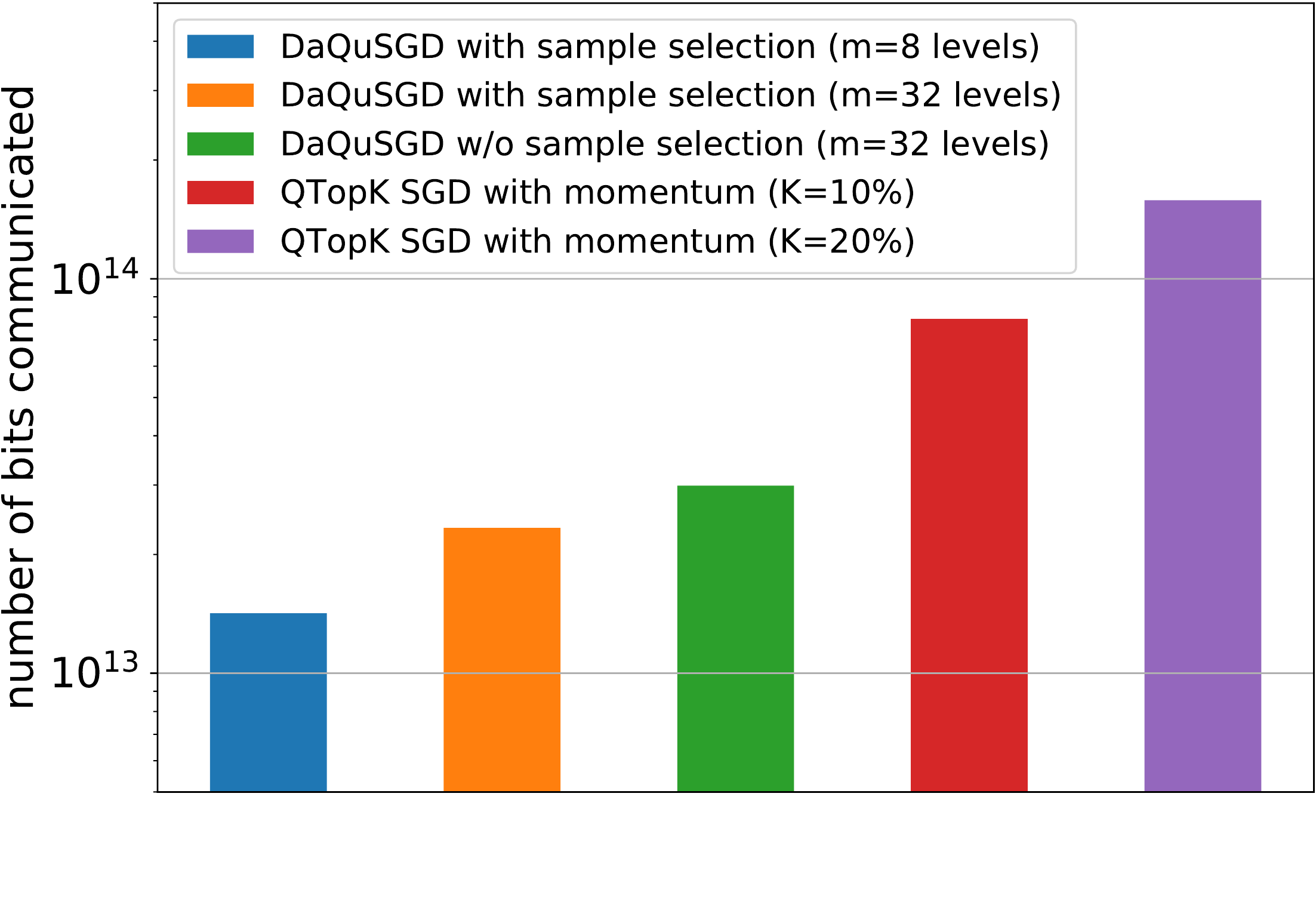}\hspace{0.4in}}\label{fig:ImageNet_comm_to_acc}
\caption{Performance of $\mathtt{DaQuSGD}$ in comparison with $\mathtt{QTopK\mbox{-}SGD}$ when training ResNet-50 on ImageNet.}\label{fig:cumulative_both_imagenet}
  \vskip -0.1in
\end{figure*}
\begin{figure*}[t!]
\centering
 \subfigure[top-1 accuracy for~$\mathtt{DaQuSGD}$ and~$\mathtt{QTopK\mbox{-}SGD}$.]{\includegraphics[width=0.38\linewidth]{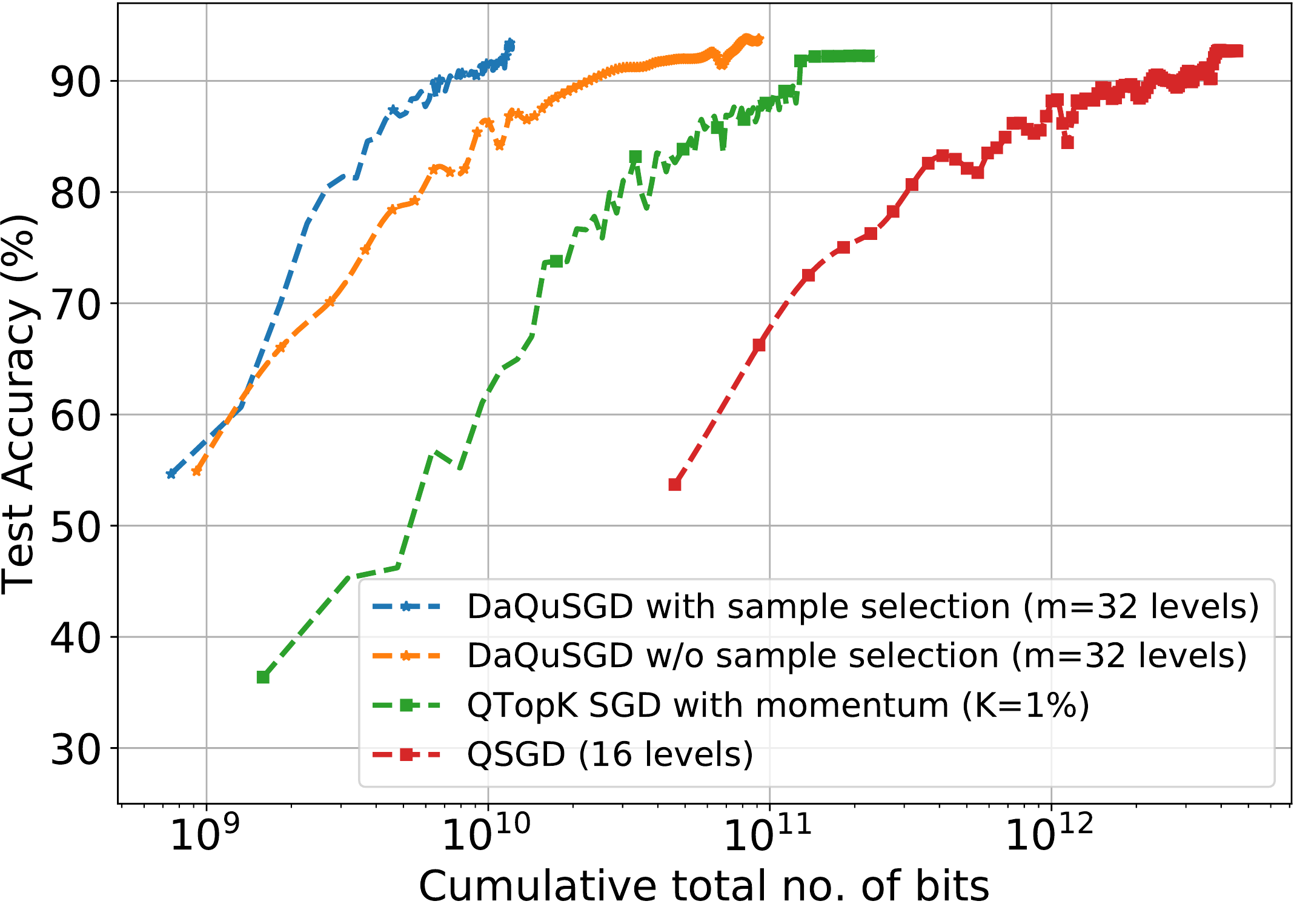}}\label{fig:cifar}
  \subfigure[Number of bits required to achieve $92.3\%$ top-1 accuracy.]{\hspace{0.4in}\includegraphics[width=0.38\linewidth]{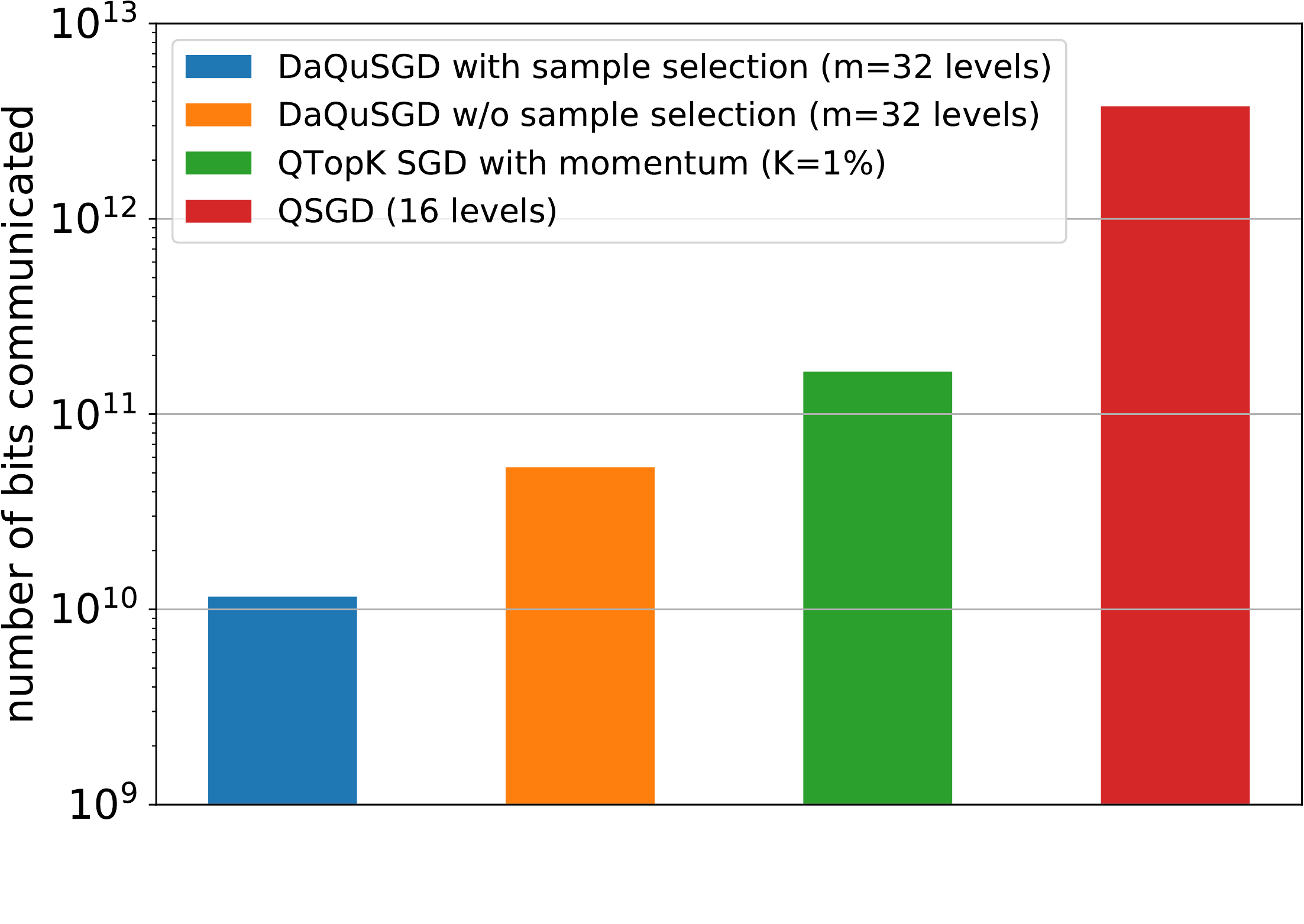}\hspace{0.4in}}\label{fig:cifar_comm_to_acc}
  \caption{Performance of $\mathtt{DaQuSGD}$ vs. $\mathtt{QTopK\mbox{-}SGD}$ and $\mathtt{QSGD}$ when training ResNet-18 on CIFAR-10.}\label{fig:cumulative_both_cifar}
  \vskip -0.1in
\end{figure*}
First, instead of scaling each data vector of the batch by its own $\ell_2$-norm before applying $\mathtt{DataQ}$, we scale all of them  using $\max_{i \in [1:128]}\|\mathbf{z}^{(i)}\|_\infty$, the maximum feature value observed in the batch. This helps retain more values close to their original unquantized values and allows to get away with communicating a single full-precision scaling factor instead of a factor for each datapoint. 

{As observed through Corollary~\ref{cor:convergence_no_gradcorr} and Theorem~\ref{thm:convergence_DQSGD}: when $\mathtt{GradCorrQ}$ is not used, the convergence guarantee on $\nabla L(\mathbf{w})$ is worsened from being to zero to instead being to a ball of radius $O(\frac{1}{h})$ around zero. This is substantially small for $h > 10^7$ as in ResNet models.}
We also saw minimal performance gains in our experiments when $\mathtt{GradCorr}$ was used. 
Thus in our experiments, we opted to not use $\mathtt{GradCorrQ}$, and instead use only $\mathtt{DataQ}$. 
As a result, terminal nodes at most only need to calculate the loss function and do not need to run backpropagation. 



When sample selection is applied, the threshold at the beginning of each epoch is set to be $0.2$ of the average loss of the transmitted samples of the previous epoch.

{\bf Comparison to direct gradient quantization.}
On ImageNet, we compare the performance of $\mathtt{DaQuSGD}$ with sample selection to the state-of-the art gradient sparsification and quantization approach $\mathtt{QTopK\mbox{-}SGD}$ with memory \cite{basu2019qsparse} and momentum \cite{singh2020squarm}. $\mathtt{QTopK\mbox{-}SGD}$ quantizes the top $K\%$ gradient values per model block using 16 bits each and performs error compensation by keeping memory of the error between true and compressed gradients. 

The ResNet-50 model was trained on the ImageNet dataset for $90$ epochs using our $\mathtt{DaQuSGD}$ approach and $150$ epochs using the sparse $\mathtt{QTopK\mbox{-}SGD}$ approach. 

Figure~\ref{fig:cumulative_both_imagenet}(a) shows the growth in cumulative communication budget during training versus the test accuracy (recorded at every epoch). {For gradient quantization schemes, the number of bits communicated is the number of bits used to represent the quantized gradient that is generated at the local node using each of the considered schemes. Note that, this value is independent of the batch size in case of schemes that quantize gradients. For our proposed data quantization, each node communicates a number of bits dependent on its local batch, which is equal to the number of bits used to represent each point multiplied by the number of points that were allowed through by the sample selection module (or the full batch size if sample selection is not used).} 

{The final accuracies for $\mathtt{DaQuSGD}$ with sample selection ($m{=}8$), $\mathtt{DaQuSGD}$ without sample selection ($m{=}8$), and $\mathtt{QTopK\mbox{-}SGD}$ ($K=10\%$) are $73.63\%,73.48\%,72.16\%$, respectively}. 
We see that our proposed approach offers a saving of up to a factor of $7$ over $\mathtt{QTopK\mbox{-}SGD}$, for the same accuracy. 

Figure~\ref{fig:cumulative_both_imagenet}(b) illustrates that $\mathtt{QTopK\mbox{-}SGD}$ requires a higher communication budget to converge to the same accuracy for the $\mathtt{DaQuSGD}$ with sample selection.
This higher communication budget is manifested through the larger number of training epochs ($33\%$ more than what $\mathtt{DaQuSGD}$ utilized) in order to achieve the same accuracy. Larger number of epochs  translates to  both a longer training time as well as additional computational cost. That is, $\mathtt{DaQuSGD}$ with sample selection does not require memory, and saves complexity in two ways: it allows to converge to the same top-1 accuracy  using fewer epochs and it only requires to compute the gradient for a subset of samples at every epoch. 

For training ResNet-18 over the CIFAR-10 dataset, our proposed model $\mathtt{DaQuSGD}$ and $\mathtt{QSGD}$ was trained for 100 epochs; $\mathtt{QTopK\mbox{-}SGD}$ was trained for 150 epochs.
On CIFAR-10, we observed even more significant gains for our $\mathtt{DaQuSGD}$ approach over the $\mathtt{QTopK\mbox{-}SGD}$ and the $\mathtt{QSGD}$~\cite{alistarh2017qsgd} as seen in Fig.~\ref{fig:cumulative_both_cifar}(a) and Fig.~\ref{fig:cumulative_both_cifar}(b). In particular, our approach provided a factor of $14$ reduction in the communication cost over $\mathtt{QTopK\mbox{-}SGD}$ and a factor of $300$ over $\mathtt{QSGD}$, to achieve the same top-1 accuracy. {The final accuracies for $\mathtt{DaQuSGD}$ with sample selection, $\mathtt{DaQuSGD}$ without sample selection, $\mathtt{QTopK\mbox{-}SGD}$ and $\mathtt{QSGD}$ are $92.57\%,93.15\%,92.31\%,92.39\%$, respectively}.




\appendices
\section{SGD standard convergence theorems}\label{app:sgdconv}
The central theorem used to prove convergence of stochastic gradient descent for smooth and convex risk functions is~\cite[Theorem 6.3]{bubeck2014convex}, which modified to our notation is expressed as: 
\begin{theorem}[{\cite[Theorem 6.3]{bubeck2014convex}}]\label{thm:sgd_convergence}
	Let $\mathcal{W}$ be convex, $L(\mathbf{w})$ be convex, $C_w$-smooth, and $\sup_{\mathbf{w}\in \mathcal{W}}\|\mathbf{w}-\mathbf{w}^{(0)}\|^2_2 {=} \widetilde{D}^2$, where $\mathbf{w}^{(0)}{\in} \mathcal{W}$ is the initial hypothesis. Let  $\mathbf{w}^*=\arg\min_{\mathbf{w}\in \mathcal{W}}L(\mathbf{w})$. Assume that stochastic gradient descent is performed with stochastic gradients $\widehat g(\mathbf{w})$ that satisfy (i) $\mathbb{E}[\widehat{g}(\mathbf{w})]=\nabla L(\mathbf{w})$ (unbiasedness), and (ii) $\mathbb{E}[\|\widehat{g}(\mathbf{w})-\nabla L(\mathbf{w})\|^2_2]\leq \widetilde{B}^2$ (bounded variance), then at iteration $n$, if the step size is $\eta = \left(C_w+\frac{1}{\gamma}\right)^{-1}$ and $\gamma = \frac{\widetilde{D}}{\widetilde{B}}\sqrt{\frac{2}{n}}$, 
	\begin{equation}
	\mathbb{E}\left[L\left(\frac{1}{n}\sum_{i=1}^n\mathbf{w}^{(i)}\right)\right]-L(\mathbf{w}^*)\leq \widetilde{D}\widetilde{B}\sqrt{\frac{2}{n}}+\frac{C_w\widetilde{D}^2}{n}.
	\end{equation}
\end{theorem}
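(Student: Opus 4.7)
\textbf{Proof plan for Theorem~\ref{thm:sgd_convergence}.}
The plan is to follow the standard mirror-descent / projected-SGD template for smooth convex functions with noisy gradients, adapted to match the specific choice of step size. First I would decompose the stochastic gradient as $\widehat g(\mathbf{w}^{(i)}) = \nabla L(\mathbf{w}^{(i)}) + \xi^{(i)}$ with $\mathbb{E}[\xi^{(i)}\mid \mathbf{w}^{(i)}] = 0$ and $\mathbb{E}[\|\xi^{(i)}\|_2^2\mid \mathbf{w}^{(i)}] \leq \widetilde{B}^2$, and expand the one-step potential
\begin{equation*}
\|\mathbf{w}^{(i+1)}-\mathbf{w}^\star\|_2^2 \leq \|\mathbf{w}^{(i)}-\mathbf{w}^\star\|_2^2 - 2\eta\langle \widehat g(\mathbf{w}^{(i)}), \mathbf{w}^{(i)}-\mathbf{w}^\star\rangle + \eta^2\|\widehat g(\mathbf{w}^{(i)})\|_2^2,
\end{equation*}
using non-expansiveness of the Euclidean projection onto the convex set $\mathcal{W}$ whenever projection is needed. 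Taking conditional expectation kills the inner product coming from $\xi^{(i)}$ and yields $\langle \nabla L(\mathbf{w}^{(i)}), \mathbf{w}^{(i)}-\mathbf{w}^\star\rangle$, which by convexity of $L$ dominates $L(\mathbf{w}^{(i)})-L(\mathbf{w}^\star)$.

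Next I would control $\mathbb{E}\|\widehat g(\mathbf{w}^{(i)})\|_2^2 \leq \|\nabla L(\mathbf{w}^{(i)})\|_2^2 + \widetilde{B}^2$ and then invoke the smoothness/co-coercivity inequality $\|\nabla L(\mathbf{w}^{(i)})\|_2^2 \leq 2C_w\bigl(L(\mathbf{w}^{(i)})-L(\mathbf{w}^\star)\bigr)$, which is a standard consequence of $C_w$-smoothness plus convexity. Substituting these bounds gives a recurrence of the form
\begin{equation*}
\mathbb{E}\|\mathbf{w}^{(i+1)}-\mathbf{w}^\star\|_2^2 \leq \mathbb{E}\|\mathbf{w}^{(i)}-\mathbf{w}^\star\|_2^2 - 2\eta(1-\eta C_w)\bigl(\mathbb{E}L(\mathbf{w}^{(i)})-L(\mathbf{w}^\star)\bigr) + \eta^2\widetilde{B}^2.
\end{equation*}
Rearranging, summing $i=1,\dots,n$, and telescoping the $\|\cdot-\mathbf{w}^\star\|_2^2$ terms (using the diameter bound $\widetilde{D}^2$ on the numerator at $i=0$) gives an average-regret bound
\begin{equation*}
\frac{1}{n}\sum_{i=1}^n \mathbb{E}[L(\mathbf{w}^{(i)})]-L(\mathbf{w}^\star) \leq \frac{\widetilde{D}^2}{2\eta(1-\eta C_w)n} + \frac{\eta \widetilde{B}^2}{2(1-\eta C_w)}.
\end{equation*}
Finally, I would apply Jensen's inequality to the left side via convexity of $L$ to transfer the bound to $L\bigl(\tfrac{1}{n}\sum \mathbf{w}^{(i)}\bigr)$, and plug in the prescribed step size $\eta = (C_w + 1/\gamma)^{-1}$ with $\gamma = \tfrac{\widetilde{D}}{\widetilde{B}}\sqrt{2/n}$; writing $1/\eta = C_w + 1/\gamma$ cleanly separates the two error terms, yielding exactly $\widetilde{D}\widetilde{B}\sqrt{2/n}$ from the $1/\gamma$ contribution and $C_w\widetilde{D}^2/n$ from the $C_w$ contribution.

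The main obstacle I anticipate is handling the bookkeeping around the factor $(1-\eta C_w)$ so that the algebra with the chosen $\eta$ yields precisely the stated constants rather than loose ones; in particular one has to verify $\eta C_w \leq \eta/\gamma$ to cleanly absorb the $1-\eta C_w$ denominator. Beyond this, if $\mathcal{W}$ is a proper convex subset, a minor subtlety is to argue that the non-expansiveness of the Euclidean projection is what preserves the quadratic inequality after projection; this is standard but should be stated explicitly before the telescoping step.
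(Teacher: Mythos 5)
You are importing this statement from the literature: the paper gives no proof of Theorem~\ref{thm:sgd_convergence} at all, it is a restatement (in the paper's notation) of Theorem~6.3 in \cite{bubeck2014convex}, so the relevant comparison is with the standard proof there. Your plan is the familiar distance-recursion argument and it does yield the correct $O(1/\sqrt{n})$ rate, but two of its steps do not deliver the theorem as stated. First, the co-coercivity bound $\|\nabla L(\mathbf{w})\|_2^2 \le 2C_w\bigl(L(\mathbf{w})-L(\mathbf{w}^*)\bigr)$ is only valid when $\nabla L(\mathbf{w}^*)=0$. Here $\mathbf{w}^*=\arg\min_{\mathbf{w}\in\mathcal{W}}L(\mathbf{w})$ over a convex, bounded $\mathcal{W}$ (you yourself invoke projection onto $\mathcal{W}$), and for a boundary minimizer smoothness plus convexity only give $\|\nabla L(\mathbf{w})-\nabla L(\mathbf{w}^*)\|_2^2\le 2C_w\bigl(L(\mathbf{w})-L(\mathbf{w}^*)\bigr)$, with $\nabla L(\mathbf{w}^*)$ possibly large; the inequality you need can fail badly (think of a mildly curved loss whose constrained minimizer carries a large gradient). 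Second, the constants: with $\eta=(C_w+1/\gamma)^{-1}$ one has \emph{identically} $1-\eta C_w=\eta/\gamma$, so your anticipated obstacle of ``verifying $\eta C_w\le \eta/\gamma$'' is moot, but carrying the algebra through your recurrence gives
\begin{equation*}
\frac{1}{n}\sum_{i}\bigl(\mathbb{E}[L(\mathbf{w}^{(i)})]-L(\mathbf{w}^*)\bigr)\;\le\; \frac{C_w\widetilde{D}^2}{n}+\frac{\gamma C_w^2\widetilde{D}^2}{2n}+\frac{\widetilde{D}^2}{2\gamma n}+\frac{\gamma\widetilde{B}^2}{2},
\end{equation*}
where the last two terms are indeed at most $\widetilde{D}\widetilde{B}\sqrt{2/n}$, but the second term, of order $C_w^2\widetilde{D}^3/(\widetilde{B}\,n^{3/2})$, is not dominated by the stated right-hand side unless $n\gtrsim (C_w\widetilde{D}/\widetilde{B})^2$. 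So ``exactly the stated constants'' does not follow from this route. (A smaller issue: your telescoping bounds the average over $\mathbf{w}^{(0)},\dots,\mathbf{w}^{(n-1)}$, while the statement averages $\mathbf{w}^{(1)},\dots,\mathbf{w}^{(n)}$.)

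The proof in the cited source avoids both problems and is the route you should take if you want the exact bound: instead of convexity at $\mathbf{w}^{(i)}$ plus co-coercivity, apply the descent lemma ($C_w$-smoothness) at the post-update point $\mathbf{w}^{(i+1)}$, combine it with the three-point/projection inequality for the projected step, and exploit the step-size identity $1/\eta=C_w+1/\gamma$ to split the quadratic term $\tfrac{1}{2\eta}\|\mathbf{w}^{(i+1)}-\mathbf{w}^{(i)}\|_2^2$: the $C_w$ part is absorbed by smoothness, the $1/\gamma$ part telescopes against $\|\mathbf{w}^{(i)}-\mathbf{w}^*\|_2^2-\|\mathbf{w}^{(i+1)}-\mathbf{w}^*\|_2^2$, and the noise $\xi^{(i)}$ enters only through a cross term split as $\xi^{(i)T}(\mathbf{w}^*-\mathbf{w}^{(i)})+\xi^{(i)T}(\mathbf{w}^{(i)}-\mathbf{w}^{(i+1)})$, whose first piece has zero mean and whose second piece is bounded by Young's inequality by roughly $\gamma\widetilde{B}^2/2$. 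This argument works verbatim for constrained $\mathcal{W}$ (no condition $\nabla L(\mathbf{w}^*)=0$ is needed), naturally produces the average over the post-update iterates, and yields exactly $\widetilde{D}\widetilde{B}\sqrt{2/n}+C_w\widetilde{D}^2/n$. If you prefer to keep your distance-recursion argument, you must restrict to an interior/unconstrained minimizer and accept the extra lower-order term, i.e., a weaker statement than the one claimed.
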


For smooth non-convex risk functions, the stochastic gradient descent converges to a local optimal point. The main result we use in our analysis is from \cite{ghadimi2013stochastic} is stated below.
\begin{theorem}[\cite{ghadimi2013stochastic}]\label{thm:non-convex_convergence}
Let $L(\mathbf{w})$ be $C_w$-smooth, and $\|\nabla L(\mathbf{w})\|_2\leq \widetilde{D},\ \forall \mathbf{w}\in \mathcal{W}$. Let $L(\mathbf{w}^{(0)})-L(\mathbf{w}^*)= D_0$, where $\mathbf{w}^{(0)}{\in} \mathcal{W}$ is the initial hypothesis and $\mathbf{w}^*$ is the optimal hypothesis. Assume that stochastic gradient descent is performed with stochastic gradients $\widehat g(\mathbf{w})$ that satisfy (i) $\mathbb{E}[\widehat g(\mathbf{w})]=\nabla L(\mathbf{w})$ (unbiasedness), and (ii) $\mathbb{E}[\|\widehat g(\mathbf{w})-\nabla L(\mathbf{w})\|^2_2]\leq \widetilde{B}^2$ (bounded variance), then at iteration $n$  with step size $\eta = \min\{C_w^{-1},\gamma\}$ and $\gamma = \frac{1}{\widetilde{B}}\sqrt{\frac{2D_0}{nC_w}}$, we have that
	\begin{equation}
	\frac{1}{n} \sum_{t=1}^n \mathbb{E}[\|\nabla L(\mathbf{w}^{(t)})\|_2^2]\leq 2\widetilde{B}\sqrt{\frac{2C_wD_0}{n}}+\frac{2D_0C_w}{n},
	\end{equation}
	where the expectation is taken over the distribution of the stochastic gradient $\widehat g(\mathbf{w})$.
\end{theorem}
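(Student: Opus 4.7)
The plan is the standard descent-lemma analysis for SGD on $C_w$-smooth non-convex objectives, following \cite{ghadimi2013stochastic}. First, I would apply the quadratic upper bound implied by $C_w$-smoothness of $L$ to consecutive iterates, substitute the update $\mathbf{w}^{(t+1)}=\mathbf{w}^{(t)}-\eta\,\widehat g(\mathbf{w}^{(t)})$, and take the conditional expectation given $\mathbf{w}^{(t)}$. Unbiasedness, $\mathbb{E}[\widehat g(\mathbf{w}^{(t)})\mid \mathbf{w}^{(t)}]=\nabla L(\mathbf{w}^{(t)})$, collapses the linear term into $-\eta\|\nabla L(\mathbf{w}^{(t)})\|_2^2$; the quadratic term is controlled by $\tfrac{C_w\eta^2}{2}(\|\nabla L(\mathbf{w}^{(t)})\|_2^2+\widetilde{B}^2)$ via the bias--variance decomposition $\mathbb{E}[\|\widehat g\|_2^2]=\|\nabla L\|_2^2+\mathbb{E}[\|\widehat g-\nabla L\|_2^2]$ together with the variance bound $\widetilde{B}^2$.

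Second, since $\eta=\min\{C_w^{-1},\gamma\}\leq 1/C_w$, the coefficient $1-\tfrac{C_w\eta}{2}$ in front of $\|\nabla L(\mathbf{w}^{(t)})\|_2^2$ is at least $1/2$. Taking total expectation, telescoping the one-step inequality from $t=0$ to $n-1$, and using $D_0=L(\mathbf{w}^{(0)})-L(\mathbf{w}^\ast)\geq L(\mathbf{w}^{(0)})-\mathbb{E}[L(\mathbf{w}^{(n)})]$ yields the canonical ``raw'' bound $\tfrac{1}{n}\sum_{t=1}^{n}\mathbb{E}[\|\nabla L(\mathbf{w}^{(t)})\|_2^2]\leq \tfrac{2D_0}{n\eta}+C_w\eta\widetilde{B}^2$.

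Third, I would optimize in $\eta$. The unconstrained minimizer of the raw bound is exactly $\gamma=\widetilde{B}^{-1}\sqrt{2D_0/(nC_w)}$, at which the two terms balance and each evaluates to $\widetilde{B}\sqrt{2C_wD_0/n}$, giving the stated $\mcal{O}(1/\sqrt{n})$ leading term. In the corner case $\gamma>1/C_w$, equivalently $\widetilde{B}^2<2C_wD_0/n$, the admissible choice $\eta=1/C_w$ produces $\tfrac{2D_0C_w}{n}+\widetilde{B}^2$; the defining inequality of this regime gives $\widetilde{B}^2\leq \widetilde{B}\sqrt{2C_wD_0/n}$, so the extra variance term is absorbed into the first summand, still matching the theorem's right-hand side.

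The only real subtleties are the case split on the two branches of $\eta=\min\{C_w^{-1},\gamma\}$ and keeping the bias/variance decomposition of $\mathbb{E}[\|\widehat g(\mathbf{w}^{(t)})\|_2^2]$ clean so that the surviving $\|\nabla L\|_2^2$ factor can be absorbed on the left-hand side. Unlike the convex analysis in Theorem~\ref{thm:sgd_convergence}, one cannot exploit convexity or averaged iterates, so the guarantee is necessarily on the average squared gradient norm along the trajectory; this is also why the bound $\|\nabla L(\mathbf{w})\|_2\leq \widetilde{D}$ never needs to be invoked in the proof itself, with the result depending only on the initial suboptimality $D_0$ and the variance $\widetilde{B}^2$.
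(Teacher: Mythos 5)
Your argument is correct: the descent lemma, conditional unbiasedness to collapse the cross term, the bias--variance split $\mathbb{E}[\|\widehat g\|_2^2]=\|\nabla L\|_2^2+\mathbb{E}[\|\widehat g-\nabla L\|_2^2]$, absorption of the surviving gradient term using $\eta\le 1/C_w$, telescoping against $D_0$, and the two-branch analysis of $\eta=\min\{C_w^{-1},\gamma\}$ all go through, and your observation that the bound $\|\nabla L(\mathbf{w})\|_2\le\widetilde D$ is never actually needed is accurate. Note that the paper itself does not prove this statement --- it is quoted from \cite{ghadimi2013stochastic} in Appendix~\ref{app:sgdconv} ``for completeness'' --- so there is no in-paper proof to diverge from; your route is the standard one from the cited reference and is the same descent-lemma machinery the authors do spell out for their own variants (the proof of Corollary~\ref{cor:convergence_no_gradcorr} in Appendix~\ref{app:proof_corollary_DataQ} repeats exactly this structure with an added bias term). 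The only cosmetic point is the index range: the theorem averages over $t=1,\dots,n$ while your telescoping runs over $t=0,\dots,n-1$; aligning the two is a relabeling of which iterates the one-step inequality is applied to and does not affect the bound.
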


\section{Extended discussion for Example 2 in Section~\ref{sec:prelim}} \label{app:ex}
{In this appendix, we show that the algorithm proposed for Example 2 in Section~\ref{sec:prelim} achieves the optimal convergence rate with $\mathcal{O}(\log(h))$ bits and that $\Omega(\log(h))$ is a lower bound on the number of bits used by any SGD algorithm that achieves the optimal convergence rate for this example.}

{To show that the algorithm achieves the optimal convergence rate, it suffices to prove that it outputs and unbiased estimate of the gradient with a variance bounded by a constant.} It is easy to see that any number in the set $\{1,...,k\}$ can be expressed as the sum of power of two of numbers from the set $\{1,...,\lceil\log_2(k)\rceil\}$, where each element appears in the sum at most once. With this, we now show that $\widehat{g}_\mathbf{z}(\mathbf{w})$ is an unbiased stochastic gradient. By definition, we have that
\begin{align}
	\mathbb{E}[\widehat g_\mathbf{z}(\mathbf{w})_i|\mathbf{z}]
	&=\frac{1}{\sqrt{h}}\mathbb{E}[Q^d\circ Q^e(f'(\mathbf{w}^T\textbf{v}(\mathbf{z})))\nonumber \\
	&\qquad\qquad\prod_{j=1}^{\ceil{\log_2(k)}}(Q^d \circ Q^e(\mathbf{z}^{2^j}))^{b_j(i)}|\mathbf{z}]\nonumber \\
	&\stackrel{[i]}=\frac{1}{\sqrt{h}}\mathbb{E}[Q^d\circ Q^e(f'(\mathbf{w}^T\textbf{v}(\mathbf{z})))|\mathbf{z}] \nonumber \\
	&\qquad\qquad\prod_{j=1}^{\ceil{\log_2(k)}}\mathbb{E}[(Q^d\circ Q^e(\mathbf{z}^{2^j}))^{b_j(i)}|\mathbf{z}]\nonumber \\
	&\stackrel{[ii]}=\frac{1}{\sqrt{h}}\mathbb{E}[Q^d\circ Q^e(f'(\mathbf{w}^T\textbf{v}(\mathbf{z})))|\mathbf{z}] \nonumber \\
	&\qquad\qquad\prod_{j=1}^{\ceil{\log_2(k)}}\mathbb{E}[Q^d \circ Q^e(\mathbf{z}^{2^j})|\mathbf{z}]^{b_j(i)}\nonumber \\
	&\stackrel{[iii]}=\frac{1}{\sqrt{h}}f'(\mathbf{w}^T\textbf{v}(\mathbf{z})) \mathbf{z}^{\sum_{j=1}^{\ceil{\log_2(k)}} b_j(i)2^j}\nonumber \\
	&=\frac{1}{\sqrt{h}}f'(\mathbf{w}^T\textbf{v}(\mathbf{z}))\mathbf{z}^i={g_\mathbf{z}(\mathbf{w})}_i,
\end{align}
where: $[i]$ follows from the fact that conditioned on $\mathbf{z}$ each of the quantized values are independent; $[ii]$ follows from the fact  that $b_j(i) \in \{0,1\}$ and therefore, we either we take the expectation over the variable or have the expectation over 1; $[iii]$ follows from the fact that the quantized values using $Q^d\circ Q^e$ are unbiased estimators of their unquantized values.

The fact that $\|\widehat g_\mathbf{z}(\mathbf{w})\|_2\leq 1$ is obvious since the range of $Q^doQ^e$ is in $\{-1,1\}^k$.


\textbf{Lower bound:} For simplicity we only consider symmetric schemes. We will show that any symmetric quantization scheme that satisfies $\mathbb{E}[Q(\textbf{v}(\mathbf{z}))|\mathbf{z}]=\textbf{v}(\mathbf{z})$, $\forall \mathbf{z}\in \{\mathbf{z}\in \mathbb{R}| |\mathbf{z}|\leq 1\}$ uses at least $\log_2(d)$ bits. Consider a scheme that uses $k$ bits, hence, for any $\mathbf{z}$ with $|\mathbf{z}|\leq 1$, $Q(\textbf{v}(\mathbf{z}))$ takes one of $2^k$ values denoted as $e_1,...,e_{2^k}$. Let $d_i=\mathbb{E}[Q^d(e_i)]$, where $Q^d(e_i)$ is the decoded value of $e_i$. Then, $\mathbb{E}[Q(\textbf{v}(\mathbf{z}))|\mathbf{z}]$ is some convex combination of $d_1,...,d_{2^k}$. As a result, $\mathbb{E}[Q(\textbf{v}(\mathbf{z}))|\mathbf{z}]=\textbf{v}(\mathbf{z})$, $\forall \mathbf{z}\in \{\mathbf{z}\in \mathbb{R}| |\mathbf{z}|\leq 1\}$ implies that the set $\{\mathbf{z}\in \mathbb{R}| |\mathbf{z}|\leq 1\}$ is contained in the convex hull of the points $d_1,...,d_{2^k}$. 

Now consider the matrix with columns $[\mathbf{v}(\frac{1}{n}),\mathbf{v}(\frac{2}{n}),...,\mathbf{v}(1)]$. This matrix has full rank, and clearly $2^k$ is lower bounded by the rank of this matrix. Hence, $k\geq \log_2(d)$.

\section{Proof of Theorem \ref{th1}} \label{app:th1}
First, we reiterate Theorem~\ref{th1} from the paper for readability.

\begin{theorem*}
	The proposed $\mathtt{DataQ}$ algorithm satisfies the following statements: (1) For the integer vectors $(a,b)$ uniquely defining $\mathbf{z}_Q$, we have that $ (a,b) \in \mcal{S}$ with probability one; (2) $\mathtt{DataQ}$ uses at most $2\log_2(m)+\min\{2d\log_2(e\frac{2d+m^2}{2d}),m^2\log_2(e\frac{2d+m^2}{m^2})\}$ bits per sample for communication; (3) For the generated $\mathbf{z}_Q$, we have that $\|\mathbf{z}-{\mathbf{z}_Q}\|_\infty\leq \frac{B}{m-1}$ and $\|\mathbf{z}_Q\|_2\leq (1+\frac{\sqrt{d}}{m-1})B$.
\end{theorem*}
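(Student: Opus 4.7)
The three claims are essentially independent, and I would tackle them in the order (1), (3), (2): statements (1) and (3) are clean consequences of the explicit floor definitions of $\textbf{a},\textbf{b}$ in the pseudocode and do not depend on one another, while (2) is a counting bound on $|\mcal{S}|$ that presupposes $(\textbf{a},\textbf{b})\in\mcal{S}$. The only facts I really need from the algorithm are $\textbf{a}_j=\lfloor(m-1)\mathbf{z}^+_j/B\rfloor$, $\textbf{b}_j=\lfloor(m-1)\mathbf{z}^-_j/B\rfloor$, together with $\mathbf{z}_Q=(\textbf{a}-\textbf{b})B/(m-1)$ and the boundedness $\|\mathbf{z}\|_2\leq B$.

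For (1), the floor gives pointwise upper bounds $\textbf{a}_j\leq(m-1)\mathbf{z}^+_j/B$ and $\textbf{b}_j\leq(m-1)\mathbf{z}^-_j/B$. Squaring, summing, and using that $\mathbf{z}^+$ and $\mathbf{z}^-$ have disjoint coordinate supports (so $\|\mathbf{z}^+\|_2^2+\|\mathbf{z}^-\|_2^2=\|\mathbf{z}\|_2^2\leq B^2$) yields
\[
\|\textbf{a}\|_2^2+\|\textbf{b}\|_2^2\;\leq\;\frac{(m-1)^2}{B^2}\,\|\mathbf{z}\|_2^2\;\leq\;(m-1)^2.
\]
The last step converts an $\ell_2^2$ bound into the required $\ell_1$ bound using the elementary observation that every entry of $\textbf{a},\textbf{b}$ is a non-negative integer, so $k\leq k^2$ coordinate-wise, giving $\|\textbf{a}\|_1+\|\textbf{b}\|_1\leq\|\textbf{a}\|_2^2+\|\textbf{b}\|_2^2\leq(m-1)^2$ and hence $(\textbf{a},\textbf{b})\in\mcal{S}$ deterministically.

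For (3), the floor definition immediately gives $0\leq\mathbf{z}^+_j-\textbf{a}_jB/(m-1)<B/(m-1)$, and analogously for $\mathbf{z}^-$. Because the supports of $\mathbf{z}^+_j$ and $\mathbf{z}^-_j$ are disjoint across $j$, each entry of $\mathbf{z}-\mathbf{z}_Q=(\mathbf{z}^+-\mathbf{z}^-)-(\textbf{a}-\textbf{b})B/(m-1)$ coincides with exactly one of those sub-errors, so $\|\mathbf{z}-\mathbf{z}_Q\|_\infty\leq B/(m-1)$. The $\ell_2$ bound on $\|\mathbf{z}_Q\|_2$ then follows from the triangle inequality together with $\|\cdot\|_2\leq\sqrt{d}\,\|\cdot\|_\infty$: $\|\mathbf{z}_Q\|_2\leq\|\mathbf{z}\|_2+\|\mathbf{z}-\mathbf{z}_Q\|_2\leq B+\sqrt{d}\,B/(m-1)=(1+\sqrt{d}/(m-1))B$.

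For (2), since the encoder transmits the index of $(\textbf{a},\textbf{b})$ under a fixed enumeration of $\mcal{S}$, the bit count is at most $\lceil\log_2|\mcal{S}|\rceil$ plus auxiliary overhead. I would count $|\mcal{S}|$ by stars-and-bars (introducing a slack variable to rewrite $\leq(m-1)^2$ as equality), obtaining $|\mcal{S}|=\binom{2d+(m-1)^2}{2d}$. Monotonicity of $\binom{n}{k}$ in $n$ at fixed $k$ lets me relax $(m-1)^2\to m^2$ to get $|\mcal{S}|\leq\binom{2d+m^2}{2d}=\binom{2d+m^2}{m^2}$, after which two applications of the standard bound $\binom{n}{k}\leq(en/k)^k$ (with $k=2d$ and with $k=m^2$), followed by taking the minimum, produce the advertised $\min\{\cdots\}$ expression. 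The additive $2\log_2 m$ term absorbs the ceiling in $\lceil\log_2|\mcal{S}|\rceil$ together with any slack lost in the $(m-1)^2\to m^2$ relaxation; cleanly confirming that this overhead really suffices (essentially a short monotonicity check on the scalar map $x\mapsto x\log_2(e(2d+x)/x)$) is the most tedious bookkeeping step and is where a careless write-up could leave a gap, so I view it as the main obstacle.
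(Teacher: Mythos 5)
Your proposal is correct and, for parts (1) and (3), follows essentially the same route as the paper: the integrality trick $a_j\leq a_j^2$, $b_j\leq b_j^2$ to pass from the $\ell_2^2$ bound $\sum_j(a_j^2+b_j^2)\leq(m-1)^2$ to the $\ell_1$ bound defining $\mcal{S}$, and the per-coordinate floor error of at most $\frac{B}{m-1}$ followed by the triangle inequality and $\|\cdot\|_2\leq\sqrt{d}\|\cdot\|_\infty$, are exactly the paper's arguments. The only real difference is in part (2): the paper writes $|\mcal{S}|$ as a sum over exact $\ell_1$ levels $q\in[0:(m-1)^2]$ of $\binom{2d+q-1}{q}$, bounds each summand by $\binom{2d+m^2}{2d}$, and uses the number of summands (at most $m^2$) to generate the additive $2\log_2(m)$ term; you instead count $|\mcal{S}|$ exactly via a slack variable as $\binom{2d+(m-1)^2}{2d}$ and relax $(m-1)^2\to m^2$ by monotonicity of the binomial coefficient in its upper index, after which both arguments coincide (symmetry of the binomial coefficient and $\binom{n}{k}\leq(en/k)^k$ with $k=2d$ and $k=m^2$). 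Your variant is marginally tighter, and the bookkeeping you flag as the main obstacle is in fact immediate: with the exact count, the only overhead left to absorb is the $+1$ from the ceiling, and $2\log_2(m)\geq 2$ for any $m\geq 2$, so no monotonicity check on $x\mapsto x\log_2\bigl(e(2d+x)/x\bigr)$ is needed.
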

\begin{enumerate}
	\item \underline{\bf Proof that \textbf{$(a,b)\in \mcal{S}$:}}\\
	Recall that for datapoint $\mathbf{z}$, the definition of $(a,b)$ is given by 
	\begin{subequations}\label{eq:a_b_definition2}
		\begin{align}
	    \forall & j \in \{1,2,,\cdots,d\} :\nonumber \\
	& a_j(\mathbf{z})  \!=\!\!\argmax_{i \in \{0,1,\cdots,m-1\}}\!\!\!\bigg\{\!q_i\left|q_i = \frac{iB}{m-1}{\leq} |\mathbf{z}^+_j|\right\},\\
			&b_j(\mathbf{z})  \!=\!\!\argmax_{i \in \{0,1,\cdots,m-1\}}\!\!\!\bigg\{\!q_i\left|q_i = \frac{iB}{m-1}{\leq} |\mathbf{z}^-_j|\right\}.
		\end{align}
	\end{subequations}
	
	Thus, given the upper bound $B$ on $\|\mathbf{z}\|_2$, we have
	\begin{align}
		B^2&\geq \|\mathbf{z}\|_2^2=\|\mathbf{z}^+\|_2^2+\|\mathbf{z}^-\|_2^2 \nonumber \\
	&\stackrel{\eqref{eq:a_b_definition2}}\geq \sum_{j=1}^d\frac{B^2(a_j^2+b_j^2)}{(m-1)^2} \stackrel{(i)}\geq \sum_{j=1}^d\frac{B^2(a_j+b_j)}{(m-1)^2},
	\end{align}
	where $(i)$ follows since $a_j, b_j \in \mbb{N}$. From the above inequality, we now have that $$\sum_{j=1}^d (a_j+b_j)\leq (m-1)^2$$,
	which implies that $(a,b)\in \mcal{S} = \Big\{(\textbf{v}^{(1)},\textbf{v}^{(2)}) \in \mbb{N}^d \times \mbb{N}^d\ \Big|\ \|\textbf{v}^{(1)}\|_1 + \|\textbf{v}^{(2)}\|_1 \leq (m-1)^2 \Big\}$.
	
	\item \underline{\textbf{Proof of the upper bound on number of bits:}}\\
	It suffices to show that $\log_2(|\mcal{S}|)\leq 2\log_2(m)+\min\{d\log_2(e\frac{2d+m^2}{2d}),m^2\log_2(e\frac{2d+m^2}{m^2})\}$. 
	Note that $|\mcal{S}|$ can be written as 
	\begin{align}
	|\mcal{S}| = \sum_{q=0}^{(m-1)^2} &\left|\left\{ (\textbf{v}^{(1)},\textbf{v}^{(2)}) \in \mbb{N}^d \times \mbb{N}^d\right. \right.  \nonumber \\
	&\quad \left. \left. \Big|\ \|\textbf{v}^{(1)}\|_1 + \|\textbf{v}^{(2)}\|_1 \leq q\right\}\right|.  
	\end{align}
	For a given integer $q$, the number of positive integral solutions to the equation $\sum_{i=1}^d (\textbf{a}_i+\textbf{b}_i){=}q$ is a classical counting problem and its solution is given in closed form\footnote{The closed form relies on a standard approach in combinatorics called the ``stars and bars method''.} as $\binom{2d+q-1}{q}$~\cite{feller1957introduction}.
	Thus, we can write $|\mcal{S}|$ as
	\begin{align}
		&|\mathcal{S}| = \sum_{q=0}^{(m-1)^2}\binom{2d+q-1}{q} \leq \sum_{q=0}^{(m-1)^2}\binom{2d+q}{2d}\nonumber \\
	&\ \leq \sum_{q=0}^{(m-1)^2}\binom{2d+m^2}{2d}\nonumber \\
		&\ \stackrel{(i)}\leq  m^2 \min\left\{\binom{2d+m^2}{2d}, \binom{2d+m^2}{m^2}\right\}\nonumber\\ 
		&\ \stackrel{(ii)}\leq\!\! m^2\!\min\left\{\!\!\!\left(e\frac{2d+m^2}{2d}\right)^{2d}\!\!,\left(e\frac{2d+m^2}{m^2}\right)^{m^2}\!\right\},
	\end{align}
	where: $(i)$ is due to the symmetry of the binomial coefficient. Now by taking the logarithm of both sides, we get the intended upper bound for $\log_2(|\mcal{S}|)$; $(ii)$ uses the upper bound on the binomial coefficient based on Sterling's bounding of the factorial.
	
	\item \underline{\bf Proof that $|\textbf{z}_j-{\textbf{z}_Q}_j|\leq \frac{B}{m-1}$, \textbf{$\|\textbf{z}\|_2\leq (1+\frac{\sqrt{d}}{m-1})B$:}}\\
	$\|\textbf{z}_j-{\textbf{z}_Q}_j\|_\infty \leq \frac{B}{m-1}$ is directly due to the quantization scheme in \texttt{DataQ} since the distance between any two quantization levels is given by $\frac{B}{m-1}$.
	
	Now to prove the bound on $\|\textbf{z}\|_2$, note that
	\begin{align}
		&\|\textbf{z}_Q\|_2\stackrel{(i)}\leq \|\textbf{z}\|_2+\|\textbf{z}-\textbf{z}_Q\|_2\nonumber \\
		&\ \leq B+\sqrt{\|\textbf{z}-\textbf{z}_Q\|_2^2}= B+\sqrt{\sum_{j=1}^d |{\textbf{z}}_j-{\textbf{z}_Q}_j|^2}\nonumber \\
		&\ \leq B+\sqrt{\sum_{j=1}^d \frac{B^2}{(m-1)^2}}=\left(1+\frac{\sqrt{d}}{m-1}\right)B,
	\end{align}
	where $(i)$ follows from the triangle inequality.
\end{enumerate}

\section{Proof of Corollary~\ref{cor:convergence_no_gradcorr}}\label{app:proof_corollary_DataQ}
Here, we prove the the convergence guarantee when only $\mathtt{DataQ}$ is applied, which istated in Corollary~\ref{cor:convergence_no_gradcorr}.
The proof is similar to the SGD proof in~\cite{agarwal2018cpsgd,ghadimi2013stochastic}.

Throughout the proof, we use the following notaiton: (1) The maximum difference in risk $L(\textbf{w})$ is denoted as $D_L = \argmax_{\textbf{w}^{(1)}, \textbf{w}^{(2)} \in \mathcal{W}} | L(\textbf{w}^{(1)}) - L(\textbf{w}^{(2)})|$; (2) the upper bound on the gradient $\|\nabla L(\textbf{w})\|_2 \leq \widetilde{D},\ \forall \textbf{w} \in \mathcal{W}$.
We will assume that the learning rate used satisfies that $\eta \leq 1/C_w$. 

Additionally, recall that $\ell(\textbf{w},\textbf{z})$ is $C_w$-smooth in $\textbf{w}$ and $C_z$-smooth in $\textbf{z}$. 
Additionally, recall that at iteration $j$, we update $\textbf{w}^{(j)}$ using $g_{\textbf{z}_Q^{(j)}}(\textbf{w})$ which is the gradient computed from applying the model with parameters $\textbf{w}^{(j)}$ on the quantized version of $\textbf{z}^{(j)}$. 
Define, $\delta_j = g_{\textbf{z}_Q^{(j)}}(\textbf{w}^{(j)}) - \nabla L(\textbf{w}^{(j)})$.
Now, from the smoothness of $\ell(\textbf{w},\textbf{z})$, we have the following 
\begin{align}\label{eq:cor_eq1}
    &\mathbb{E}_{\textbf{z}^{(j)}}[L(\textbf{w}^{(j+1)}) {-} L(\textbf{w}^{(j)})] 
    \nonumber \\
	&{\leq} \nabla\! L(\textbf{w}^{(j)})^T\!\mathbb{E}_{\textbf{z}^{(j)}}\![\textbf{w}^{(j+1)}{-}\textbf{w}^{(j)}] {+} \frac{C_w}{2}\mathbb{E}_{\textbf{z}^{(j)}}\![ \|\textbf{w}^{(j+1)}{-}\textbf{w}^{(j)}\|_2^2] \nonumber \\
    &= {-} \eta\!\nabla\! L(\!\textbf{w}^{(j)}\!)^T\!\mathbb{E}_{\textbf{z}^{(j)}}\!\!\left[\!g_{\textbf{z}_Q^{(j)}}(\!\textbf{w}^{(j)}\!)\!\right] \!{+} \frac{C_w \eta^2}{2} \mathbb{E}_{\textbf{z}^{(j)}}\!\!\left[\! \left\|g_{\textbf{z}_Q^{(j)}}(\!\textbf{w}^{(j)}\!)\right\|_2^2\!\right] \nonumber \\
    &= - \eta \left\|\nabla L(\textbf{w}^{(j)})\right\|_2^2 - \eta\nabla L(\textbf{w}^{(j)})^T\mathbb{E}_{\textbf{z}^{(j)}}\left[\delta_j\right] \nonumber \\
	&\hspace{0.2in}+ \frac{C_w \eta^2}{2} \mathbb{E}_{\textbf{z}^{(j)}}\left[ \left\|\nabla L(\textbf{w}^{(j)}) + \delta_j\right\|_2^2\right] \nonumber \\
    &= - \eta \left\|\nabla L(\textbf{w}^{(j)})\right\|_2^2 - \eta\nabla L(\textbf{w}^{(j)})^T\mathbb{E}_{\textbf{z}^{(j)}}\left[\delta_j\right] \nonumber \\
    &\hspace{0.2in}+ \frac{C_w \eta^2}{2} \mathbb{E}_{\textbf{z}^{(j)}}\!\!\left[\left\|\nabla L(\textbf{w}^{(j)})\right\|_2^2 {+} 2\nabla L(\textbf{w}^{(j)})^T \delta_j {+} \left\|\delta_j\right\|_2^2 \right] \nonumber \\    
    &= - \eta \left(1-\frac{\eta C_w}{2}\right)\left\|\nabla L(\textbf{w}^{(j)})\right\|_2^2 \nonumber \\
	&\hspace{0.1in}+\eta\!\left(1{-}\eta C_w\right)\!\nabla L(\textbf{w}^{(j)})^T\!\mathbb{E}_{\textbf{z}^{(j)}}\!\!\left[\delta_j\right] {+} \frac{C_w \eta^2}{2}\! \mathbb{E}_{\textbf{z}^{(j)}}\!\!\left[\!\left\|\delta_j\!\right\|_2^2\!\right].
\end{align}
Reorganizing~\eqref{eq:cor_eq1}, we get that
\begin{align}
    &\|\nabla L(\textbf{w}^{(j)})\|_2^2 \nonumber \\
    &\leq \frac{2}{\eta(2\!-\!\eta C_w)}\left(L(\!\textbf{w}^{(j)}\!) {-} L(\!\textbf{w}^{(j+1)}) {+} \frac{\eta^2C_w}{2}\mathbb{E}_{\textbf{z}^{(j)}}\!\!\left[\|\delta_j\|_2^2\right]\right) \nonumber \\
    &\qquad + \frac{2\left(1-C_w \eta\right)}{\left(2-C_w \eta\right)}\nabla L(\textbf{w}^{(j)})^T\mathbb{E}_{\textbf{z}^{(j)}}\left[\delta_j\right] \nonumber \\
    &\leq \frac{2}{\eta(2{-}\eta C_w)}\left(L(\!\textbf{w}^{(j)}\!) - L(\!\textbf{w}^{(\star)}\!)
    + \frac{\eta^2C_w}{2}\mathbb{E}_{\textbf{z}^{(j)}}\left[\|\delta_j\|_2^2\right]\right)\nonumber \\ 
    &\qquad + \frac{2\left(1-C_w \eta\right)}{\left(2-C_w \eta\right)}\nabla L(\textbf{w}^{(j)})^T\mathbb{E}_{\textbf{z}^{(j)}}\left[\delta_j\right].
	\end{align}
Hence,
\begin{align}\label{eq:cor_eq2}
    &\|\nabla L(\textbf{w}^{(j)})\|_2^2 \nonumber \\
    &\stackrel{(a)}\leq \frac{2}{\eta(2-\eta C_w)}\left(L(\textbf{w}^{(j)}) - L(\textbf{w}^{(\star)})
	+ \frac{\eta^2C_w}{2}\mathbb{E}_{\textbf{z}^{(j)}}\left[\|\delta_j\|_2^2\right]\right) \nonumber \\
    &\qquad + \frac{2\left(1-C_w \eta\right)}{\left(2-C_w \eta\right)}\left\|\nabla L(\textbf{w}^{(j)})\right\|_2\left\|\mathbb{E}_{\textbf{z}^{(j)}}\left[\delta_j\right]\right\|_2,
	\end{align}
where, $(a)$ follows from the Cauchy-Schwartz inequality and the fact that $\eta \leq 1/C_w$.
Now, we note that
\begin{align}\label{eq:cor_eq_delta}
&\left\|\mathbb{E}_{\textbf{z}^{(j)}}\!\!\left[\delta_j\right]\right\|_2 = \left\|\mathbb{E}_{\textbf{z}^{(j)}}\!\!\left[g_{\textbf{z}_Q^{(j)}}(\textbf{w}^{(j)})\right] - \nabla L(\textbf{w}^{(j)}) \right\|_2 \nonumber\\ 
&= \left\|\mathbb{E}_{\textbf{z}^{(j)}}\!\!\left[g_{\textbf{z}_Q^{(j)}}(\textbf{w}^{(j)}) - g_{\textbf{z}^{(j)}}(\textbf{w}^{(j)})\right] \right\|_2 \nonumber\\ 
&\leq \mathbb{E}_{\textbf{z}^{(j)}}\!\!\left[\!\left\|g_{\textbf{z}_Q^{(j)}}(\!\textbf{w}^{(j)}\!) - g_{\textbf{z}^{(j)}}(\!\textbf{w}^{(j)}\!) \right\|_2\right] \stackrel{(a)}\leq C_z B \frac{\sqrt{d}}{(m-1)}.
\end{align}
where $(a)$ follows from the bound in~\eqref{eq:bounded_bias}. Additionally, we also have that
\begin{align}\label{eq:cor_eq_delta_2}
&\mathbb{E}_{\textbf{z}^{(j)}}\!\!\left[\left\|\delta_j\right\|_2^2\right] \nonumber \\
&= \mathbb{E}_{\textbf{z}^{(j)}}\!\!\left[\!\left\|\!\left(\!g_{\textbf{z}_Q^{(j)}}(\!\textbf{w}^{(j)}\!) {-} g_{\textbf{z}^{(j)}}(\!\textbf{w}^{(j)}\!)\!\right) \!{-}\! \left(\!g_{\textbf{z}^{(j)}}(\!\textbf{w}^{(j)}\!) {-} \nabla L(\!\textbf{w}^{(j)}\!)\right)\! \right\|_2^2 \right] \nonumber \\
&\leq 3\mathbb{E}_{\textbf{z}^{(j)}}\!\!\!\left[\!\left\|g_{\textbf{z}_Q^{(j)}}(\!\textbf{w}^{(j)}\!) {-} g_{\textbf{z}^{(j)}}(\!\textbf{w}^{(j)}\!) \!\right\|_2^2\! + \!\left\|g_{\textbf{z}^{(j)}}(\!\textbf{w}^{(j)}\!) {-} \nabla L(\!\textbf{w}^{(j)}\!) \!\right\|_2^2\! \right] \nonumber\\ 
&\stackrel{(a)}\leq 3 \left(C_z B \frac{d}{(m-1)^2} + \sigma^2\right),
\end{align}
where $\sigma^2$ is the bounded variance of the unquantized stochastic gradient and $(a)$ follows from the bound in~\eqref{eq:bounded_bias}.

By setting $m = h \sqrt{d}$ in~\eqref{eq:cor_eq_delta} and~\eqref{eq:cor_eq_delta_2} and substituting in~\eqref{eq:cor_eq2} we have that 
\begin{align}
    &\|\nabla L(\textbf{w}^{(j)})\|_2^2 \nonumber \\
    &\leq \frac{2}{\eta(2{-}\eta C_w)}\!\Bigg(L(\!\textbf{w}^{(j)}\!) {-} L(\!\textbf{w}^{(\star)}\!) 
	+ \eta^2 C_w\overbrace{\frac{3}{2}\left(\frac{2 C_z B}{h^2}{+} \sigma^2\right)}^{\widehat{\sigma}^2}\Bigg) \nonumber \\
    &\qquad  + \frac{4\left(1-C_w \eta\right)}{\left(2-C_w \eta\right)}\left\|\nabla L(\textbf{w}^{(j)})\right\|_2\frac{C_z B}{h}.
\end{align}
By averaging across $j \in [n]$, we get
\begin{align}
    &\frac{1}{n}\sum_{j=1}^n\left\|\nabla L(\textbf{w}^{(j)})\right\|_2^2  \nonumber \\
    &\leq \frac{2}{n\eta(2-\eta C_w)} \left(L(\textbf{w}^{(0)})- L(\textbf{w}^\star) + n\ \eta^2 C_w \widehat{\sigma}^2 \right) \nonumber \\
    &+ \frac{4\left(1-C_w \eta\right)}{\left(2-C_w \eta\right)}\frac{C_z B}{h} \frac{1}{n}\sum_{j=1}^n\left\|\nabla L(\textbf{w}^{(j)})\right\|_2.
\end{align}
Hence,
\begin{align}
    &\frac{1}{n}\sum_{j=1}^n\left\|\nabla L(\textbf{w}^{(j)})\right\|_2^2  \nonumber \\
&\leq \frac{2}{n\eta(2{-}\eta C_w)} \left(\! \left(\!L(\!\textbf{w}^{(0)}\!){-} L(\!\textbf{w}^\star\!)\!\right)  {+} n\ \eta^2 C_w \widehat{\sigma}^2 \!\right) {+} \frac{2C_z B}{h} \widetilde{D} \nonumber \\
    &\leq \frac{2}{n\eta(2-\eta C_w)} \left( D_L  + n\ \eta^2 C_w \widehat{\sigma}^2 \right) + \frac{2C_z B}{h} \widetilde{D} \nonumber \\
    &\stackrel{(a)}\leq \frac{2}{n\eta} \left( D_L  + n\ \eta^2 C_w \widehat{\sigma}^2 \right) + \frac{2C_z B}{h} \widetilde{D} \nonumber \\
    &= \frac{2}{n\eta}  D_L + 2\ \eta C_w \widehat{\sigma}^2 + \frac{2C_z B}{h} \widetilde{D},
\end{align}
where $(a)$ follows from assuming $\eta \leq 1/C_w$.
By setting $\eta=\min\{\frac{1}{C_w}, \frac{\sqrt{2D_L}}{\widehat{\sigma}\sqrt{C_w n}}\}$, we get that
\begin{align}
    &\frac{1}{n}\sum_{j=1}^n\left\|\nabla L(\textbf{w}^{(j)})\right\|_2^2  \nonumber \\
    &\leq \frac{2D_L}{n} \max\left\{C_w,\frac{\widehat{\sigma}\sqrt{C_w n}}{\sqrt{2D_L}}\right\} \nonumber \\
	&\quad + 2\ C_w \widehat{\sigma}^2 \min\left\{\frac{1}{C_w}, \frac{\sqrt{2D_L}}{\widehat{\sigma}\sqrt{C_w n}}\right\} + \frac{2C_z B}{h} \widetilde{D} \nonumber \\
    &\leq \frac{2D_LC_w}{n} + \frac{\widehat{\sigma}\sqrt{2D_L C_w}}{\sqrt{n}} +  2\frac{\widehat{\sigma}\sqrt{2D_LC_w}}{\sqrt{n}} + \frac{2C_z B}{h} \widetilde{D} \nonumber \\
    &\leq \frac{2D_LC_w}{n} {+}  3\frac{\widehat{\sigma}\sqrt{2D_LC_w}}{\sqrt{n}} {+} \frac{2C_z B}{h} \widetilde{D} = O(\frac{1}{\sqrt{n}} {+} \frac{1}{h}).
\end{align}
This concludes the proof of Corollary~\ref{cor:convergence_no_gradcorr}.

\section{Proof of Lemma \ref{lem:grad_unbiased_estimate}}\label{app:lem2}
\begin{lemma*}
	The quantized stochastic gradient $\widehat{g}_\textbf{z}(\textbf{w})$ is an unbiased estimate of  $ \nabla L(\textbf{w})$ and satisfies $\|\widehat{g}_\textbf{z}(\textbf{w})-\nabla L(\textbf{w})\|_2\leq C_zB\left(2 + (h+1)\frac{\sqrt{d}}{m-1}\right)$.
\end{lemma*}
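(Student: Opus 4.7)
\noindent\textit{Proof proposal for Lemma~\ref{lem:grad_unbiased_estimate}.} The plan is to prove unbiasedness by conditioning on the realized $\mathbf{z}$ (and hence $\mathbf{z}_Q$, since $\mathtt{DataQ}$ is deterministic in $\mathbf{z}$), then to bound the error by a triangle inequality that splits the total error into the quantization/correction error and the intrinsic stochastic‐gradient error.

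\textbf{Step 1 (Unbiasedness).} Fix a datapoint $\mathbf{z}$. Since $\mathbf{z}_Q = \mathtt{DataQ}(\mathbf{z},m)$ is a deterministic function of $\mathbf{z}$, the vector $\Delta = g_{\mathbf{z}}(\mathbf{w}) - g_{\mathbf{z}_Q}(\mathbf{w})$ is deterministic given $\mathbf{z}$, and the only remaining randomness in $\widehat{g}_\mathbf{z}(\mathbf{w})$ comes from $(i^\star, e_g)$. First I would verify from the definition $p = \frac{\Delta_{i^\star}}{2C_z B\sqrt{d}/(m-1)} + \tfrac12$ that $\mathbb{E}[2e_g-1\mid i^\star,\mathbf{z}] = 2p-1 = \Delta_{i^\star}/(C_z B\sqrt{d}/(m-1))$, so that the single active coordinate of $\widehat{\Delta}$ satisfies $\mathbb{E}[\widehat{\Delta}_{i^\star}\mid i^\star,\mathbf{z}] = h\,\Delta_{i^\star}$. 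Then, since $i^\star$ is uniform on $[h]$ and $\widehat{\Delta}$ is zero off of $i^\star$, for every coordinate $i$ I get $\mathbb{E}[\widehat{\Delta}_i\mid \mathbf{z}] = \tfrac{1}{h}\cdot h\,\Delta_i = \Delta_i$, i.e.\ $\mathbb{E}[\widehat{\Delta}\mid \mathbf{z}] = \Delta$. Combined with $\widehat{g}_\mathbf{z}(\mathbf{w}) = g_{\mathbf{z}_Q}(\mathbf{w}) + \widehat{\Delta}$, this yields $\mathbb{E}[\widehat{g}_\mathbf{z}(\mathbf{w})\mid \mathbf{z}] = g_\mathbf{z}(\mathbf{w})$, and taking expectation over $\mathbf{z}$ (uniform on $\mathcal{D}$) gives $\mathbb{E}[\widehat{g}_\mathbf{z}(\mathbf{w})] = \nabla L(\mathbf{w})$.

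\textbf{Step 2 (Deterministic error bound).} Apply the triangle inequality
\begin{equation}
\|\widehat{g}_\mathbf{z}(\mathbf{w}) - \nabla L(\mathbf{w})\|_2 \leq \|\widehat{g}_\mathbf{z}(\mathbf{w}) - g_\mathbf{z}(\mathbf{w})\|_2 + \|g_\mathbf{z}(\mathbf{w}) - \nabla L(\mathbf{w})\|_2.
\end{equation}
For the first term, $\widehat{g}_\mathbf{z}(\mathbf{w}) - g_\mathbf{z}(\mathbf{w}) = \widehat{\Delta} - \Delta$; since $\widehat{\Delta}$ has a single nonzero entry of magnitude $C_z B h\sqrt{d}/(m-1)$, we have $\|\widehat{\Delta}\|_2 = C_z B h\sqrt{d}/(m-1)$, and by the Lipschitz assumption~\eqref{assump2} together with the quantization error bound $\|\mathbf{z}-\mathbf{z}_Q\|_2 \leq B\sqrt{d}/(m-1)$ from Theorem~\ref{th1}, we get $\|\Delta\|_2 \leq C_z B\sqrt{d}/(m-1)$. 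Summing yields $\|\widehat{\Delta}-\Delta\|_2 \leq C_z B(h+1)\sqrt{d}/(m-1)$. For the second term, using~\eqref{assump2} applied at $\mathbf{z}$ and at the origin (or any fixed reference point implicit in the loss function setup) gives $\|g_\mathbf{z}(\mathbf{w})\|_2 \leq C_z B$, whence $\|g_\mathbf{z}(\mathbf{w}) - \nabla L(\mathbf{w})\|_2 \leq \|g_\mathbf{z}(\mathbf{w})\|_2 + \|\mathbb{E}_{\mathbf{z}'}[g_{\mathbf{z}'}(\mathbf{w})]\|_2 \leq 2 C_z B$. Combining the two bounds delivers exactly the claimed inequality.

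\textbf{Main obstacle.} The only delicate part is the conditional‐expectation calculation in Step~1: I need to keep track of the fact that the coordinate index $i^\star$ and the Bernoulli bit $e_g$ are drawn jointly while $\mathbf{z}_Q$ (and hence $\Delta$) is held fixed, so the factor $h$ in the definition of $\widehat{\Delta}_{i^\star}$ is precisely what cancels the $1/h$ from uniformly sampling $i^\star$. The bound in Step~2 is then just two applications of the triangle inequality using Theorem~\ref{th1} and assumption~\eqref{assump2}.
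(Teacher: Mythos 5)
Your Step 1 is correct and is essentially the paper's own unbiasedness argument (condition on $\mathbf{z}$, note $\mathbf{z}_Q$ and hence $\Delta$ are then fixed, check $\mathbb{E}[\widehat\Delta_i\mid\mathbf{z}]=\Delta_i$ because the factor $h$ in $\widehat\Delta_{i^\star}$ cancels the $1/h$ from sampling $i^\star$, then average over $\mathbf{z}$); your use of the Algorithm~\ref{alg:GradCorrQ} form of $p$ is consistent with the computation in the paper's appendix. The first half of Step 2 also matches the paper: $\|\widehat\Delta\|_2 = C_zBh\frac{\sqrt{d}}{m-1}$ and $\|\Delta\|_2\leq C_z\|\mathbf{z}-\mathbf{z}_Q\|_2\leq C_zB\frac{\sqrt{d}}{m-1}$ via Theorem~\ref{th1} and \eqref{assump2}, giving the $(h+1)$ term. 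However, there is a genuine gap in the second half of Step 2: you bound $\|g_{\mathbf{z}}(\mathbf{w})\|_2\leq C_zB$ by applying \eqref{assump2} ``at $\mathbf{z}$ and at the origin,'' which tacitly assumes $g_{\mathbf{0}}(\mathbf{w})=\mathbf{0}$ (or the existence of some reference datapoint with vanishing gradient). Nothing in the setup supplies such a point: \eqref{assump2} only controls differences $g_{\mathbf{z}}(\mathbf{w})-g_{\mathbf{z}'}(\mathbf{w})$, never the magnitude of a single gradient. Concretely, for $\ell(\mathbf{w},\mathbf{z})=\mathbf{c}^T\mathbf{w}$ the gradient is independent of $\mathbf{z}$, so $C_z=0$, yet $\|g_{\mathbf{z}}(\mathbf{w})\|_2=\|\mathbf{c}\|_2$ can be arbitrarily large; the intermediate inequality $\|g_{\mathbf{z}}(\mathbf{w})\|_2\leq C_zB$ is therefore false in general, even though the final claim survives in that example precisely because of cancellation between $g_{\mathbf{z}}(\mathbf{w})$ and $\nabla L(\mathbf{w})$ that your term-by-term bound discards.

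The repair is one line and is exactly what the paper does: since $\nabla L(\mathbf{w})=\frac{1}{N}\sum_{i=1}^N g_{\mathbf{z}^{(i)}}(\mathbf{w})$, write
\begin{equation*}
\|g_{\mathbf{z}}(\mathbf{w})-\nabla L(\mathbf{w})\|_2=\Big\|\frac{1}{N}\sum_{i=1}^N\big(g_{\mathbf{z}}(\mathbf{w})-g_{\mathbf{z}^{(i)}}(\mathbf{w})\big)\Big\|_2\leq \frac{1}{N}\sum_{i=1}^N C_z\|\mathbf{z}-\mathbf{z}^{(i)}\|_2\leq 2C_zB,
\end{equation*}
using convexity of the norm, \eqref{assump2} applied to each pair of datapoints, and $\|\mathbf{z}-\mathbf{z}^{(i)}\|_2\leq 2B$ from the $B$-boundedness of $\mathcal{Z}$. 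With this substitution your proof coincides in structure with the paper's: the same unbiasedness computation and the same three-term triangle-inequality decomposition yielding $C_zB\left(2+(h+1)\frac{\sqrt{d}}{m-1}\right)$.
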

\begin{proof}
	Recall that $\Delta = g_\textbf{z}(\textbf{w}) - g_{\textbf{z}_Q}(\textbf{w})$, and that, we have
	\begin{equation}\label{eq:Delta_hat_defin}
		\widehat \Delta_i=\begin{cases}
			{0}   &   {\text{with probability } 1-\frac{1}{h}} \\
			{C_zBh\frac{\sqrt{d}}{m-1}}   &  {\text{with probability } \frac{\Delta_i+C_z B\frac{\sqrt{d}}{m-1}}{2C_z B\frac{\sqrt{d}}{m-1}}\frac{1}{h}}\\
			{-C_zBh\frac{\sqrt{d}}{m-1}} & {\text{with probability } \frac{C_z B\frac{\sqrt{d}}{m-1}-\Delta_i}{2C_z C\frac{\sqrt{d}}{m-1}}\frac{1}{h}.}
		\end{cases}
	\end{equation}
	Hence, we can prove the unbiasedness property of $\widehat{\Delta}$ by direct computation as follows
	\begin{align}\label{eq:Delta_unbias}
		\mathbb{E}[{\widehat \Delta}_i|\Delta] &= \mathbb{E}[{\widehat \Delta}_i|\Delta_i]\nonumber \\
		&= 0 + \left({C_zBh\frac{\sqrt{d}}{m-1}}\right)\frac{\Delta_i+C_z B\frac{\sqrt{d}}{m-1}}{2C_z B\frac{\sqrt{d}}{m-1}}\frac{1}{h} \nonumber \\
	&\quad + \left(-{C_zBh\frac{\sqrt{d}}{m-1}}\right)\frac{C_z B\frac{\sqrt{d}}{m-1}-\Delta_i}{2C_z C\frac{\sqrt{d}}{m-1}}\frac{1}{h}\nonumber \\
		&= \left(2{C_zBh\frac{\sqrt{d}}{m-1}}\right)\frac{2\Delta_i}{2C_z B\frac{\sqrt{d}}{m-1}h} = \Delta_i.
	\end{align}
	
	Next, we can prove the unbiasedness of $\widehat g_\textbf{z}(\textbf{w})$ as follows
	\begin{align}
		\mathbb{E}[\widehat g_\textbf{z}(\textbf{w})|\textbf{z}]&=\mathbb{E}_{\textbf{z}_Q}\left[\mathbb{E}[\widehat g_\textbf{z}(\textbf{w})|\textbf{z},\textbf{z}_Q]\right]
		\nonumber \\
	&\stackrel{(i)}=\mathbb{E}_{\textbf{z}_Q}\left[\mathbb{E}[g_{\textbf{z}_Q}(\textbf{w})+\widehat \Delta|\textbf{z},\textbf{z}_Q]\right]\nonumber\\
		&\stackrel{(ii)}=\mathbb{E}_{\textbf{z}_Q}\left[\mathbb{E}[g_{\textbf{z}_Q}(\textbf{w})+\mbb{E}[\widehat \Delta|\Delta,\textbf{z},\textbf{z}_Q]|\textbf{z},\textbf{z}_Q]\right]\nonumber\\
		&\stackrel{(iii)}=\mathbb{E}_{\textbf{z}_Q}\left[\mathbb{E}[g_{\textbf{z}_Q}(\textbf{w})+\Delta|\textbf{z},\textbf{z}_Q]\right]\nonumber \\
		&\stackrel{(iv)}=\mathbb{E}_{\textbf{z}_Q}\left[\mathbb{E}[g_\textbf{z}(\textbf{w})|\textbf{z},\textbf{z}_Q]\right]\nonumber\\
		&=g_\textbf{z}(\textbf{w}),
	\end{align}
	where: $(i)$ follows from the definition of $\widehat g_\textbf{z}(\textbf{w})$; $(ii)$ follows by the tower property of expectation and the fact that $\widehat \Delta$ is constructed by stochastic quantization of $\Delta$;
	$(iii)$ follows from the unbiasedness of $\widehat{\Delta}$ in~\eqref{eq:Delta_unbias};
	$(iv)$  is due to the fact that $\Delta = g_\textbf{z}(\textbf{w}) - g_{\textbf{z}_Q}(\textbf{w})$.
	
	To prove the bound on the variance, we note that
	\begin{align}
		\|&\widehat g_\textbf{z}(\textbf{w})-\nabla L(\textbf{w})\|_2\stackrel{(i)}\leq \|g_{\textbf{z}_Q}(\textbf{w})-\nabla L(\textbf{w})\|_2+\|\widehat \Delta\|_2\nonumber \\
		&\stackrel{(i)}\leq \|g_\textbf{z}(\textbf{w})-\nabla L(\textbf{w})\|_2+\|g_\textbf{z}(\textbf{w})-g_{\textbf{z}_Q}(\textbf{w})\|_2+\|\widehat{\Delta}\|_2\nonumber\\
		&\stackrel{(ii)}\leq \|g_\textbf{z}(\textbf{w})-\nabla L(\textbf{w})\|_2+C_z B\frac{\sqrt{d}}{m-1}+\|\widehat{\Delta}\|_2 \nonumber\\
		&\stackrel{(iii)}\leq \|g_\textbf{z}(\textbf{w})-\nabla L(\textbf{w})\|_2+C_z B\frac{\sqrt{d}}{m-1}+C_zBh\frac{\sqrt{d}}{m-1} \nonumber\\
		&= \|\frac{1}{N} \sum_{i=1}^N (g_{\textbf{z}^{(i)}}(\textbf{w})-g_\textbf{z}(\textbf{w}))\|_2\nonumber \\
	&\quad +C_z B\frac{\sqrt{d}}{m-1}+C_zBh\frac{\sqrt{d}}{m-1} \nonumber \\ 
		&\stackrel{(iv)}\leq \frac{1}{N}\sum_{i=1}^N\|g_{\textbf{z}^{(i)}}(\textbf{w})-g_\textbf{z}(\textbf{w})\|_2\nonumber \\
	&\quad +C_z B\frac{\sqrt{d}}{m-1}+C_zBh\frac{\sqrt{d}}{m-1} \nonumber \\
		&\stackrel{(v)}\leq \frac{1}{N}\sum_{i=1}^N C_z\|\textbf{z}^{(i)} - \textbf{z}\|_2 +C_z B\frac{\sqrt{d}}{m-1}+C_zBh\frac{\sqrt{d}}{m-1} \nonumber\\
		&\leq 2 C_z B +C_z B\frac{\sqrt{d}}{m-1}+C_zBh\frac{\sqrt{d}}{m-1},
	\end{align}
	where: $(i)$ is due to the triangle inequality;
	$(ii)$ follows from the $C_z$-Lipschitz continuity of $g_\textbf{z}$ and the definition of \texttt{DataQ};
	$(iii)$ follows from~\eqref{eq:Delta_hat_defin};
	$(iv)$ follows due to the convexity of the norm; $(v)$ follows from the $C_z$-Lipschitz continuity of $g_\textbf{z}(\textbf{w})$ in $\textbf{z}$.
\end{proof}
\section{Proof of Theorem \ref{thm:thresholding}}\label{app:sampsel}
In this part we prove the convergence of SGD when our sample selection scheme through thresholding is applied. We denote the stochastic gradient with loss thresholding at the $j$-th iteration to be $\widetilde{g}_{\textbf{z}^{(j)}}(\textbf{w}^{(j)})$ which is given by
\begin{align*}
\widetilde{g}_{\textbf{z}^{(j)}}(\textbf{w}^{(j)}) = \widehat{g}_{\textbf{z}^{(j)}}(\textbf{w}^{(j)}) \mbb{I}\left(\ell(\textbf{w}^{(j)},\textbf{z}^{(i)})\leq \ell_{th}^{(j)}\right),
\end{align*}
where $\mbb{I}(.)$ is the indicator function.
In the following we present a simple proof of our result by adapting a standard proof of SGD convergence. From convexity of $L(\textbf{w})$, we have that
\begin{equation}\label{conv}
	L(\textbf{w}^{(j)})\leq L(\textbf{w}^*)+\nabla L(\textbf{w}^{(j)})^T(\textbf{w}^{(j)}-\textbf{w}^*).
\end{equation} 
It is well known that for convex function, smoothness implies a quadratic upper bound \cite{bubeck2014convex}, i.e., we have that 
\begin{align}
	L(\textbf{w}^{(j+1)})&\leq L(\textbf{w}^{(j)})+\nabla L(\textbf{w}^{(j)})^T(\textbf{w}^{(j+1)}-\textbf{w}^{(j)})\nonumber \\
	&\quad +\frac{C_w}{2}\|\textbf{w}^{(j+1)}-\textbf{w}^{(j)}\|_2^2\nonumber \\
	&\stackrel{(a)}= L(\textbf{w}^{(j)})-\eta \nabla L(\textbf{w}^{(j)})^T(\widetilde g_{\textbf{z}^{(j)}}(\textbf{w}^{(j)}))\nonumber \\
	&\quad +\frac{\eta^2 C_w}{2}\|\widetilde g_{\textbf{z}^{(j)}}(\textbf{w}^{(j)})\|_2^2,
\end{align}
where $(a)$ follows from the definition of the model update rule using SGD in~\eqref{eq:update_rule_SGD} with gradient $\widetilde g_{\textbf{z}^{(j)}}(\textbf{w}^{(j)})$.

Taking the expectation of both sides over the randomness in $\widetilde g_{\textbf{z}^{(j)}}(\textbf{w}^{(j)})$, we get
\begin{align}\label{eq:thm3_expectation_taken}
	\mbb{E}\left[L(\textbf{w}^{(j+1)})\right]
	&\leq L(\textbf{w}^{(j)})-\eta \nabla L(\textbf{w}^{(j)})^T(\mbb{E}\left[\widetilde g_{\textbf{z}^{(j)}}(\textbf{w}^{(j)})\right])\nonumber \\
	&\quad +\frac{\eta^2 C_w}{2}\mbb{E}\left[\|\widetilde g_{\textbf{z}^{(j)}}(\textbf{w}^{(j)})\|_2^2\right],
\end{align}
Note that by definition of $\widetilde g_{\textbf{z}^{(j)}}(\textbf{w}^{(j)})$, we have that the expectation of the computed gradient is $\mathbb{E}\left[\widetilde g_{\textbf{z}^{(j)}}(\textbf{w}^{(j)})\right] = \nabla L(\textbf{w}^{(j)})- \Delta_j$, where $\Delta_j =\frac{1}{N} \sum_{i=1}^N\nabla \ell(\textbf{w},\textbf{z}^{(i)})\mbb{I}\left(\ell(\textbf{w}^{(j)},\textbf{z}^{(i)})\leq \ell_{th}^{(j)}\right)$. Additionally, we have that
\begin{align}\label{eq:second_moment_thresh}
\mbb{E}\left[\left(\widetilde{g}_{\textbf{z}^{(j)}}(\textbf{w}^{(j)}\right)^2\right] 
\leq 
\mbb{E}\left[\left(\widehat{g}_{\textbf{z}^{(j)}}(\textbf{w}^{(j)}\right)^2\right] = \|\nabla L(\textbf{w})\|_2^ 2 + \widetilde{B}^2,
\end{align}
Thus, by substituting this bounds on first and second moments of $\widetilde g_{\textbf{z}^{(j)}}$ in~\eqref{eq:thm3_expectation_taken}, and using the fact that $\eta \leq \frac{1}{C_w}$, we have that
\begin{align}\label{quad}
	\mathbb{E}&\left[L(\textbf{w}^{(j+1)})\right]\leq L(\textbf{w}^{(j)})-\eta \nabla L(\textbf{w}^{(j)})^T(\nabla L(\textbf{w}^{(j)})- \Delta_j)\nonumber \\
	&\quad +\frac{\eta^2 C_w}{2}\| \nabla L(\textbf{w}^{(j)})- \Delta_j\|_2^2+ \frac{\eta^2 C_w}{2}\widetilde{B}^2\nonumber \\
	&\quad -\frac{\eta^2 C_w}{2}\|\Delta_j\|_2^2+\eta^2 C_w \Delta_j^T\nabla L(\textbf{w}^{(j)}) \nonumber \\
	=& L(\textbf{w}^{(j)})-\eta(1-\frac{\eta C_w}{2})\| \nabla L(\textbf{w}^{(j)})- \Delta_j\|_2^2+ \frac{\eta^2 C_w}{2}\widetilde{B}^2\nonumber \\
	&\quad -\eta(1-\eta C_w) \Delta_j^T\nabla L(\textbf{w}^{(j)})+\eta (1-\frac{\eta C_w}{2}) \|\Delta_j\|_2^2\nonumber \\
	\leq& L(\textbf{w}^{(j)})-\frac{\eta}{2}\| \nabla L(\textbf{w}^{(j)})- \Delta_j\|_2^2\nonumber \\
	& -\eta(1-\eta C_w) \Delta_j^T\nabla L(\textbf{w}^{(j)}) + \frac{\eta}{2}(\widetilde{B}^2+2\|\Delta_j\|_2^2).
\end{align}
Combining \eqref{conv}, \eqref{quad}, we get
\begin{align}
	\mathbb{E}&\left[L(\textbf{w}^{(j+1)})\right]\leq L(\textbf{w}^*)+\nabla L(\textbf{w}^{(j)})^T(\textbf{w}^{(j)}-\textbf{w}^*)\nonumber \\
	&-\frac{\eta}{2}\| \nabla L(\textbf{w}^{(j)})- \Delta_j\|_2^2+ \frac{\eta}{2}(\widetilde{B}^2+2\|\Delta_j\|_2^2)\nonumber \\
	&-\eta(1-\eta C_w) \Delta_j^T\nabla L(\textbf{w}^{(j)}).
\end{align}
By adding and subtracting $\frac{1}{2\eta}\|\textbf{w}^{(j)}-\textbf{w}^*\|_2^2$ to complete square we get that
\begin{align}\label{eq:Thm3_SGD}
	\mathbb{E}&\left[L(\textbf{w}^{(j+1)})\right]\leq L(\textbf{w}^*)+\frac{1}{2\eta}\|\textbf{w}^{(j)}-\textbf{w}^*\|_2^2\nonumber \\
	&-\frac{1}{2\eta}\|\textbf{w}^{(j)}-\textbf{w}^*-\eta(\nabla L(\textbf{w}^{(j)})- \Delta_j)\|_2^2\nonumber \\
	&+\Delta_j^T(\textbf{w}^{(j)}-\textbf{w}^*)-\eta(1-\eta C_w) \Delta_j^T\nabla L(\textbf{w}^{(j)})\nonumber \\
	&+ \frac{\eta}{2}(\widetilde{B}^2+2\|\Delta_j\|_2^2)\nonumber \\
	\leq &L(\textbf{w}^*)-\frac{1}{2\eta}\mathbb{E}\left[\|\textbf{w}^{(j)}-\textbf{w}^*-\eta \widetilde g_{\textbf{z}^{(j)}}(\textbf{w}^{(j)})\|_2^2\right]\nonumber \\
	&+\frac{1}{2\eta}\|\textbf{w}^{(j)}-\textbf{w}^*\|_2^2+\Delta_j^T(\textbf{w}^{(j)}-\textbf{w}^*)\nonumber \\
	&+\eta^2 C_w \Delta_j^T\nabla L(\textbf{w}^{(j)})+ \eta (\widetilde{B}^2+\frac{1}{2}\|\Delta_j\|_2^2)\nonumber \\
	\stackrel{(a)}=&L(\textbf{w}^*)-\frac{1}{2\eta}\mathbb{E}\left[\|\textbf{w}^{(j+1)}-\textbf{w}^*\|_2^2\right]+\frac{1}{2\eta}\|\textbf{w}^{(j)}-\textbf{w}^*\|_2^2\nonumber \\
	&+\Delta_j^T(\textbf{w}^{(j)}-\textbf{w}^*)+\eta^2 C_w \Delta_j^T\nabla L(\textbf{w}^{(j)})\nonumber \\
	&+ \eta (\widetilde{B}^2+\frac{1}{2}\|\Delta_j\|_2^2)\nonumber \\
	\stackrel{(b)}\leq &L(\textbf{w}^*)-\frac{1}{2\eta}\mathbb{E}\left[\|\textbf{w}^{(j+1)}-\textbf{w}^*\|_2^2-\|\textbf{w}^{(j)}-\textbf{w}^*\|_2^2\right]\nonumber \\
	&+\|\Delta_j\|_2\|\textbf{w}^{(j)}-\textbf{w}^*\|_2+\eta \|\Delta_j\|_2\|\nabla L(\textbf{w}^{(j)})\|_2\nonumber \\
	&+ \eta (\widetilde{B}^2+\frac{1}{2}\|\Delta_j\|_2^2) \nonumber \\
    =&L(\textbf{w}^*)-\frac{1}{2\eta}\mathbb{E}\left[\|\textbf{w}^{(j+1)}-\textbf{w}^*\|_2^2-\|\textbf{w}^{(j)}-\textbf{w}^*\|_2^2\right]+\eta \widetilde{B}^2\nonumber \\	
	&+\|\Delta_j\|_2\left(\|\textbf{w}^{(j)}-\textbf{w}^*\|_2+\eta \|\nabla L(\textbf{w}^{(j)})\|_2 +\frac{\eta}{2}\|\Delta_j\|_2\right),	
\end{align}
where: $(a)$ followed from the update rule of $\textbf{w}^{(j+1)}$ from $\textbf{w}^{(j)}$ and $g_{\textbf{z}^{(j)}}(\textbf{w}^{(j)})$; $(b)$ from Cauchy-Schwartz inequality.
We now want to find bounds for the terms 
$\|\Delta_j\|_2$, $\|\textbf{w}^{(j)}-\textbf{w}^*\|_2$ and $\|\nabla L(\textbf{w}^{(j)})\|_2$.
Note that from our assumptions, we have that $\|\textbf{w}^{(j)}-\textbf{w}^*\|_2 \leq \widetilde{D}$. 

For a point $\textbf{z}$, let $\textbf{w}'_z$ be the minimizer of $\ell(.,\textbf{z})$, then we have that
\begin{align*}
\|\nabla \ell(\textbf{w},\textbf{z})&-\nabla \ell(\textbf{w}_z',\textbf{z})\|_2^2 \stackrel{(a)}\leq 2C_w\left(\nabla \ell(\textbf{w}_z',\textbf{z})^T(\textbf{w}_z'-\textbf{w})\right. \nonumber \\
&\qquad \qquad \qquad \qquad \left. +\ell(\textbf{w},\textbf{z})-\ell(\textbf{w}_z',\textbf{z})\right)\\ 
&\stackrel{(b)}\leq 2C_w\left(\nabla \ell(\textbf{w}_z',\textbf{z})^T(\textbf{w}_z'-\textbf{w})+\ell(\textbf{w},\textbf{z})\right), 
\end{align*}
where $(a)$ is a consequence of the smoothness of $\ell(\textbf{w},\textbf{z})$ and $(b)$ follows from our assumption that $\ell(\textbf{w},\textbf{z}) \geq 0$.
By making use of the above equation and the fact that $\nabla \ell(\textbf{w}_z',\textbf{z}) =0$ for a given $\textbf{z}$, we get that $\|\nabla \ell(\textbf{w},\textbf{z})\|_2^2\leq 2C_w L(\textbf{w},\textbf{z})$. 
Using this bound on $\|\nabla \ell(\textbf{w},\textbf{z})\|_2^2$, we have that
\begin{align}\label{eq:delta_bound}
    \|\Delta_j\|_2 &=  \left\|\frac{1}{N} \sum_{i=1}^N\nabla \ell(\textbf{w},\textbf{z}^{(i)})\mbb{I}\left(\ell(\textbf{w}^{(j)},\textbf{z}^{(i)})\leq \ell_{th}^{(j)}\right) \right\|_2 \nonumber \\
    &\leq  \frac{1}{N} \sum_{i=1}^N \left\|\nabla \ell(\textbf{w},\textbf{z}^{(i)})\right\|_2 \mbb{I}\left(\ell(\textbf{w}^{(j)},\textbf{z}^{(i)})\leq \ell_{th}^{(j)}\right)  \nonumber \\    
     &\leq  \frac{1}{N} \sum_{i=1}^N \left(\sqrt{2 C_w \ell(\textbf{w},\textbf{z}^{(i)})}\right)\mbb{I}\left(\ell(\textbf{w}^{(j)},\textbf{z}^{(i)})\leq \ell_{th}^{(j)}\right)  \nonumber \\
     &\leq \sqrt{2 C_w \ell^{(j)}_{th}}.
\end{align}

Finally, note that the smoothness property implies that $\|\nabla \ell(\textbf{w},\textbf{z})\|_2\leq 2C_w\widetilde{D}$, which using the same steps as in~\eqref{eq:delta_bound} implies that $\|\Delta_j\|_2 \leq 2 C_w \widetilde{D}$.
Additionally, it also implies that 
$\|\nabla L(\textbf{w})\|_2 \leq 2 C_w \widetilde{D}$.

By substituting these upper bounds in~\eqref{eq:Thm3_SGD}, we get that
\begin{align}
	\mathbb{E}&\left[L(\textbf{w}^{(j+1)})\right]\leq 
	L(\textbf{w}^*)-\frac{1}{2\eta}\mathbb{E}\left[\|\textbf{w}^{(j+1)}-\textbf{w}^*\|_2^2\right. \nonumber \\
     &\quad \left. -\|\textbf{w}^{(j)}-\textbf{w}^*\|_2^2\right]+\eta \widetilde{B}^2\nonumber \\
		&\quad +\|\Delta_j\|_2\left(\|\textbf{w}^{(j)}-\textbf{w}^*\|_2+\eta \|\nabla L(\textbf{w}^{(j)})\|_2 +\frac{\eta}{2}\|\Delta_j\|_2\right) \nonumber \\
    &=
	L(\textbf{w}^*)-\frac{1}{2\eta}\mathbb{E}\left[\|\textbf{w}^{(j+1)}-\textbf{w}^*\|_2^2-\|\textbf{w}^{(j)}-\textbf{w}^*\|_2^2\right]\nonumber \\
		&\quad +\left(\sqrt{2 C_w \ell^{(j)}_{th}}\right)\left(\widetilde{D}+2 \eta C_w \widetilde{D} +\eta C_w \widetilde{D}\right) +\eta \widetilde{B}^2 \nonumber \\	
	&= L(\textbf{w}^*)-\frac{1}{2\eta}\mathbb{E}\left[\|\textbf{w}^{(j+1)}-\textbf{w}^*\|_2^2-\|\textbf{w}^{(j)}-\textbf{w}^*\|_2^2\right]\nonumber \\
	&\quad +\widetilde{D}\sqrt{2C_w \ell_{th}^{(j)}}(1+3\eta C_w) +\eta \widetilde{B}^2.
\end{align}
And from convexity and $\sum_{j=1}^n\sqrt{\ell_{th}^{(j)}}\leq \sqrt{n}$, we have 
\begin{align}
	\mathbb{E}\left[\frac{1}{n}\sum_{j=1}^nL(\textbf{w}^{(j)})\right]\leq &L(\textbf{w}^*)+\frac{\|\textbf{w}^{(0)}-\textbf{w}^*\|_2^2}{2n\eta}\nonumber \\
     &+\frac{ \widetilde{D}\sqrt{2C_w}(1+3\eta C_w)}{\sqrt{n}} +\eta \widetilde{B}^2.
\end{align}
Substituting $\eta = \frac{1}{\widetilde{B}\sqrt{n}}$, we get the result.

\bibliographystyle{IEEEtran}
\bibliography{QuantNIPS}

\begin{IEEEbiography}[{\includegraphics[width=1in]{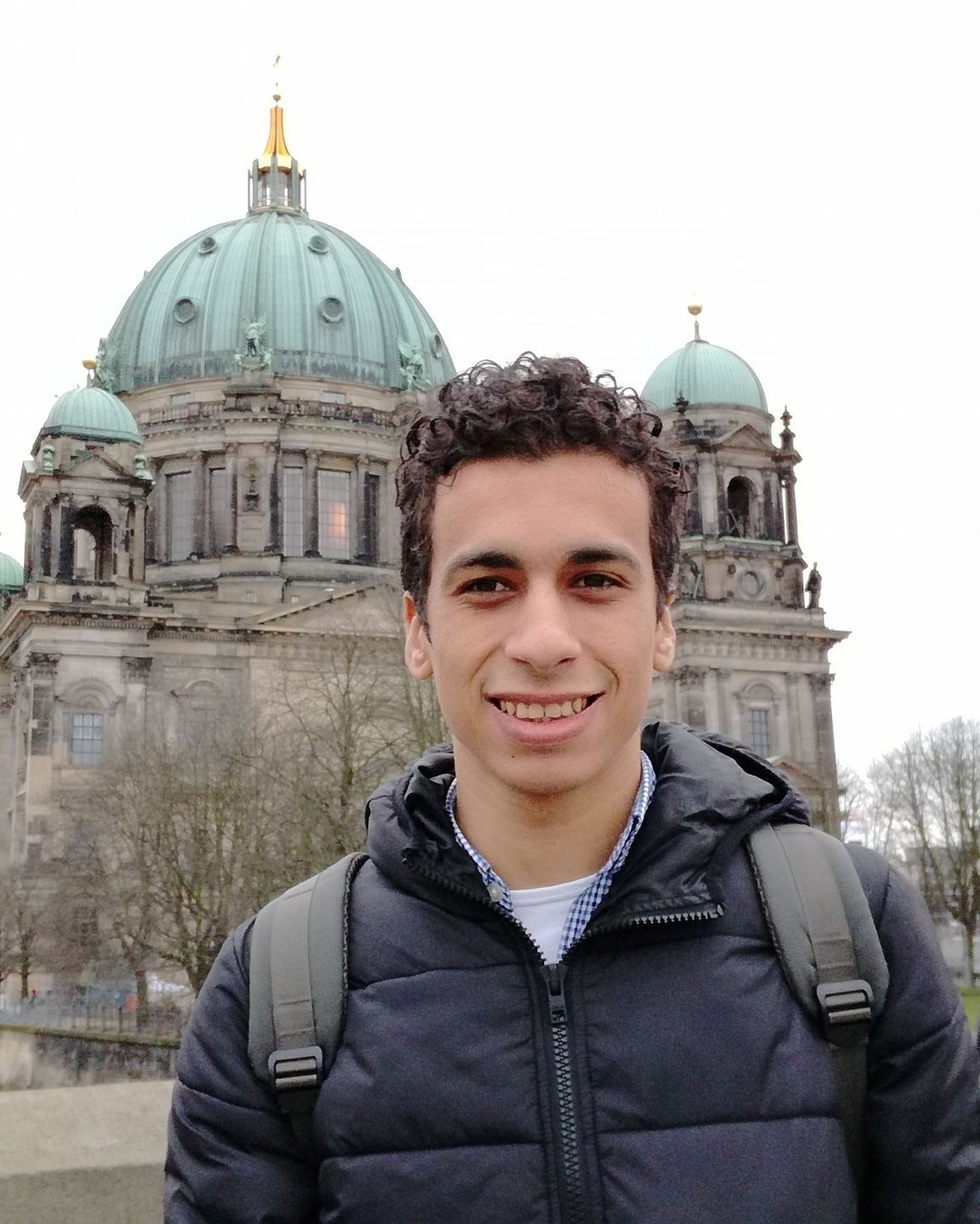}}]{Osama A. Hanna} is a Ph.D. candidate in the Electrical and Computer Engineering Department at the University of California, Los Angeles (UCLA). He received his BS and MS degrees in electrical engineering from the Faculty of Engineering Cairo University and Nile University in Egypt in 2014 and 2018 respectively. He received the Award of Excellence from Cairo University in 2014. He received the Masters Fellowship and a graduate Research Assistantship from Nile University for the years 2014-2018. He received the Electrical and Computer Engineering Department Fellowship from UCLA for the year 2018/2019. His research interests are machine learning, information theory and algorithms.
\end{IEEEbiography}

\begin{IEEEbiography}[{\includegraphics[width=1in]{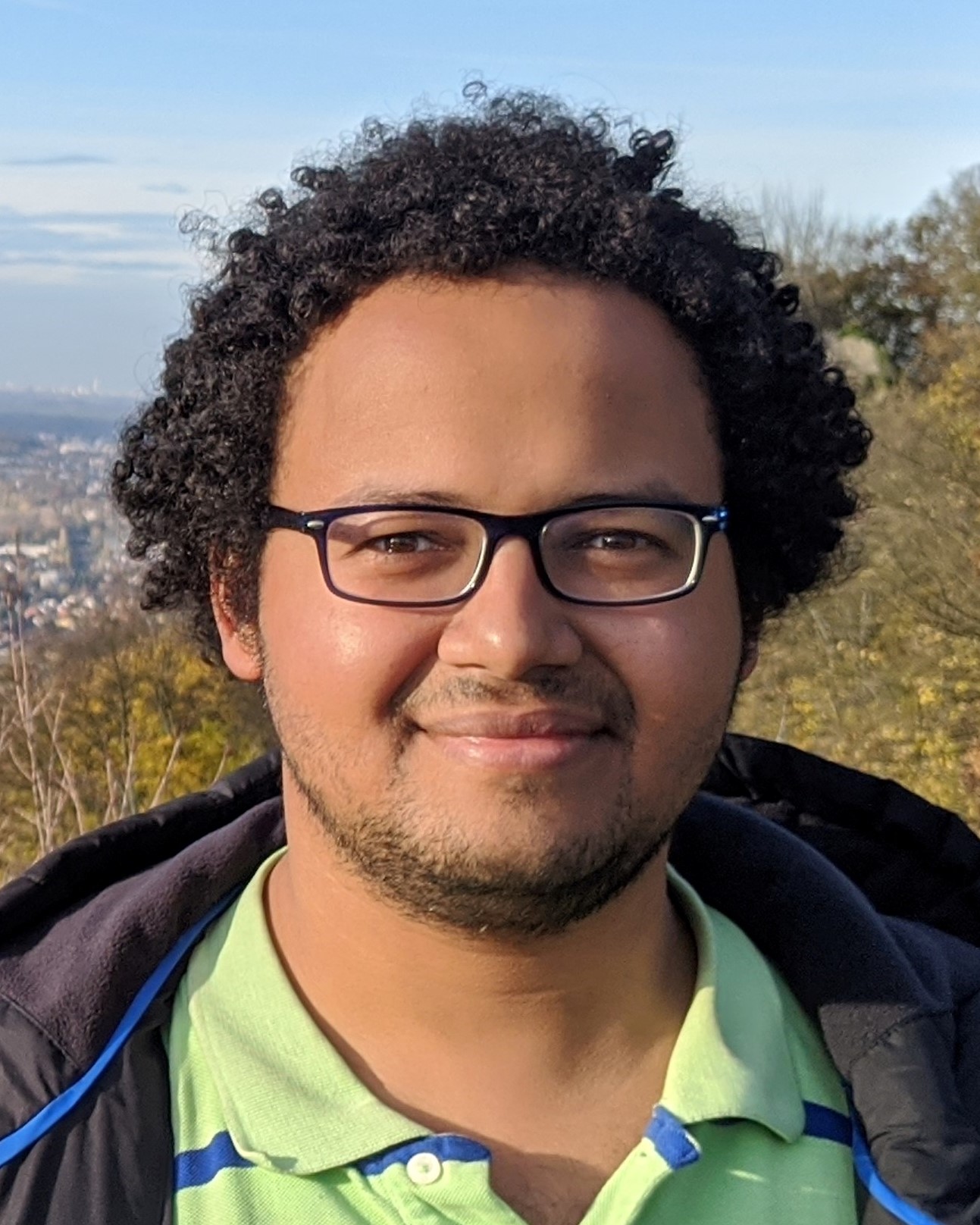}}]{Yahya H. Ezzeldin}
is currently a Postdoctoral Research Associate in the Electrical and Computer Engineering department at USC.
 He received his B.S. and M.S. degrees in Electronics and Communications Engineering from Alexandria University in 2011 and 2014, respectively. He received his Ph.D. degree in Electrical and Computer Engineering from UCLA in 2020. His current research interests include information theory, federated learning and distributed optimization. He worked as a machine learning platform engineer with Intel Corporation in the summer of 2018. He received the 2020-2021 Distinguished Ph.D. Dissertation Award in Signals and Systems from the Electrical and Computer Engineering Department at UCLA.
 He is also the recipient of the UCLA University Fellowship in 2014, the Henry Samueli Fellowship in 2016 and the Dissertation Year Fellowship at UCLA in 2019.
\end{IEEEbiography}

\begin{IEEEbiography}[{\includegraphics[width=0.9in]{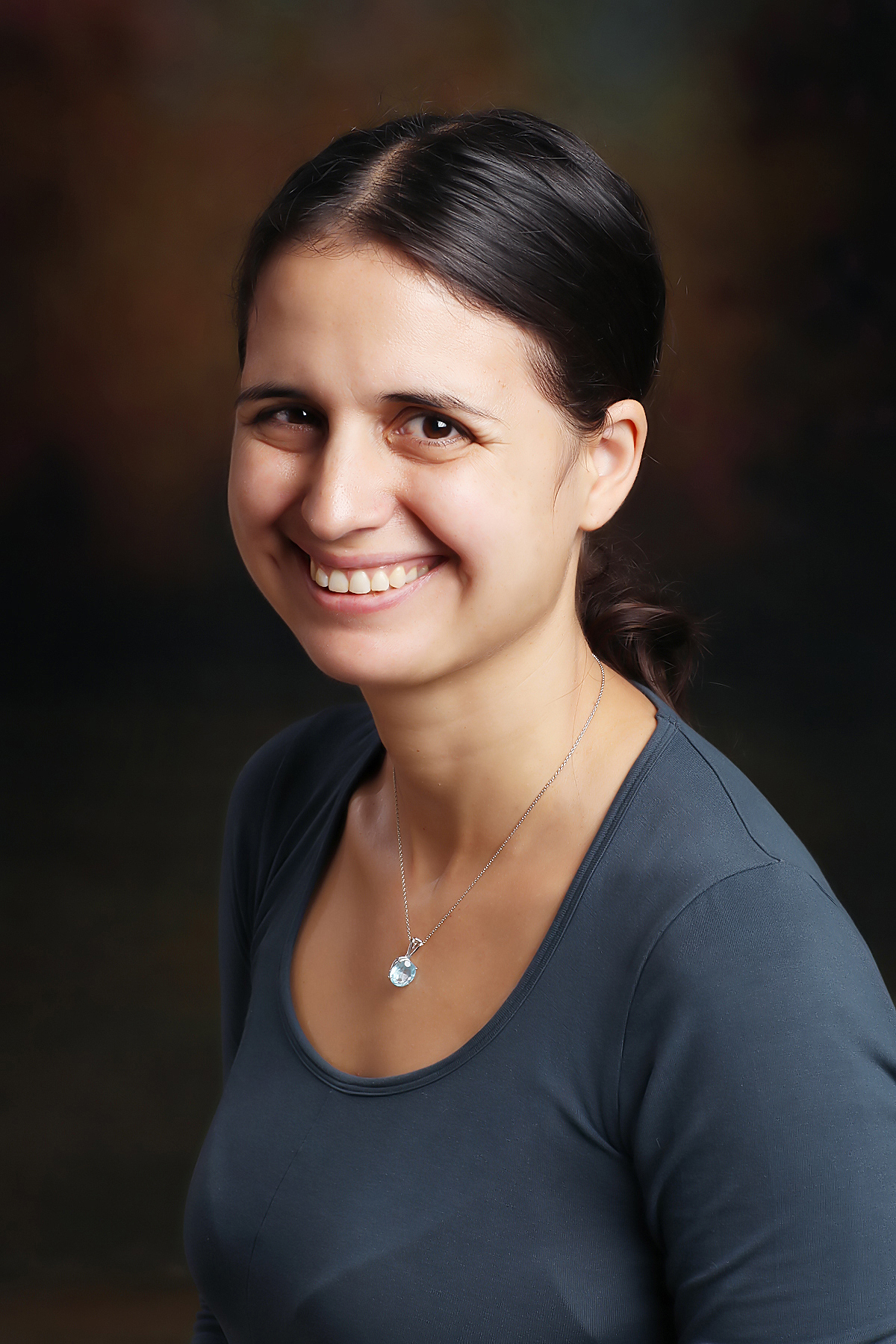}}]{Christina Fragouli} 
is a  Professor in the Electrical and Computer Engineering Department at UCLA. She received the B.S. degree in Electrical Engineering from the National Technical University of Athens, Athens, Greece, and the M.Sc. and Ph.D. degrees in Electrical Engineering from the University of California, Los Angeles. She has worked at the Information Sciences Center, AT\&T Labs, Florham Park New Jersey, and the National University of Athens. She also visited Bell Laboratories, Murray Hill, NJ, and DIMACS, Rutgers University. Between 2006--2015 she was  an Assistant and Associate Professor in the School of Computer and Communication Sciences, EPFL, Switzerland. She is an IEEE fellow, has served as  an Information Theory Society Distinguished Lecturer, and as an Associate Editor for IEEE Communications Letters,  for Elsevier Journal on Computer Communication, for IEEE Transactions on Communications, for IEEE Transactions on Information Theory, and for IEEE Transactions on Mobile Communications. She has also served in several IEEE committees, and received awards for her work.  Her research interests are in network information flow, network security and privacy, wireless networks and bioinformatics.
\end{IEEEbiography}

\begin{IEEEbiography}[{\includegraphics[width=1in]{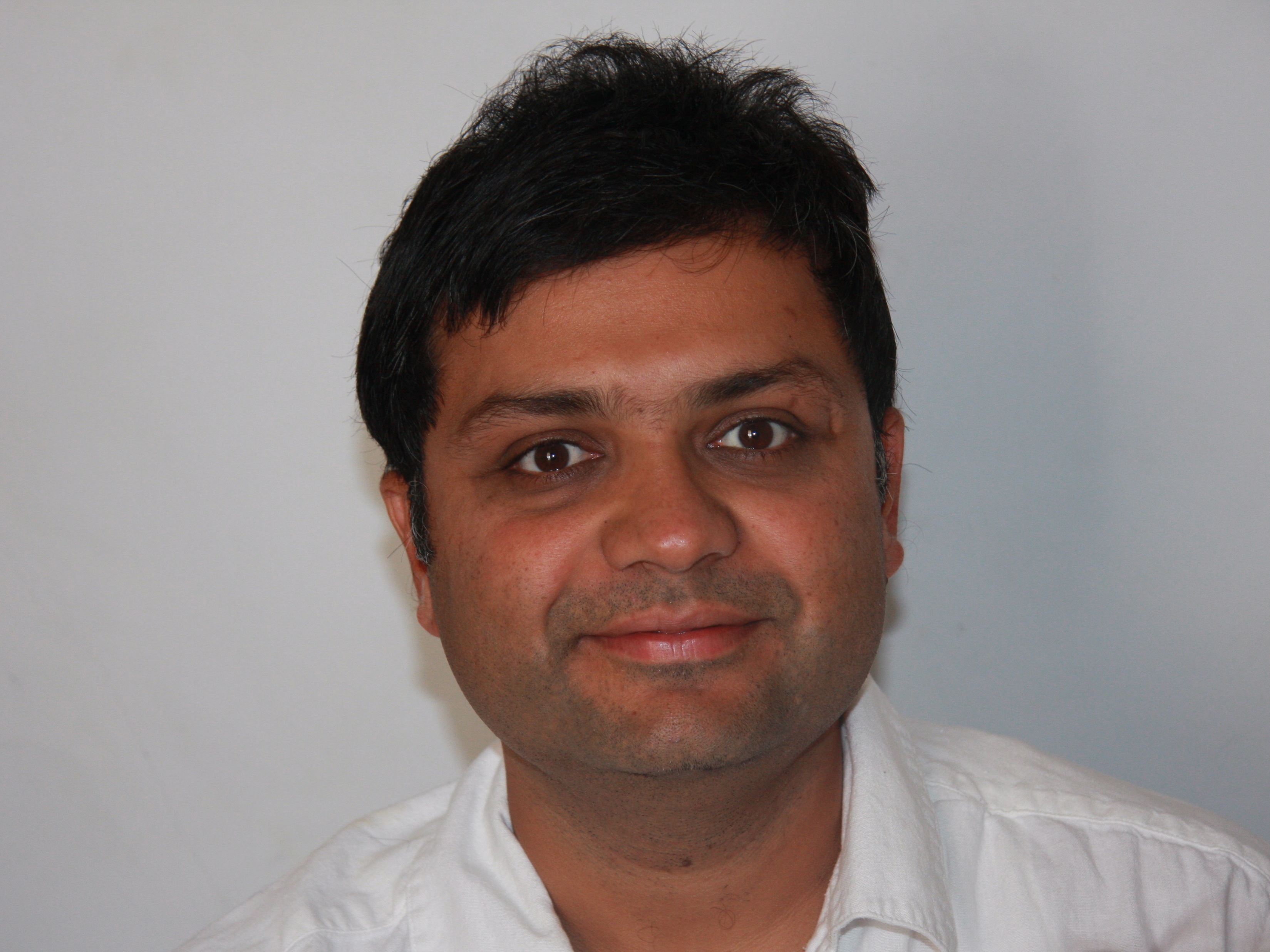}}]{Suhas Diggavi}
is currently a Professor of Electrical and Computer
Engineering at UCLA. His undergraduate education is from IIT, Delhi and his PhD is from Stanford University. He has worked as a principal member research staff at AT\&T Shannon Laboratories and directed the Laboratory for Information and Communication Systems (LICOS) at EPFL. At UCLA, he directs the Information Theory and Systems
Laboratory.

His research interests include information theory and its applications to several areas including machine learning, security \& privacy, wireless networks, data compression, cyber-physical systems, bio-informatics and neuroscience; more information can be found at http://licos.ee.ucla.edu.

He has received several recognitions for his research including the 2013 IEEE Information Theory Society \& Communications Society Joint Paper Award, the 2013 ACM International Symposium on Mobile Ad Hoc Networking and Computing (MobiHoc) best paper award, the 2006 IEEE Donald Fink prize paper award among others. He was selected as a Guggenheim fellow in 2021.  He also received the 2019 Google Faculty Research Award and 2020 Amazon faculty research award. He served as a
IEEE Distinguished Lecturer and also currently serves on board of governors for the IEEE Information theory society. He is a Fellow of the IEEE.

He has been an associate editor for IEEE Transactions on Information Theory, ACM/IEEE Transactions on Networking, IEEE Communication Letters, a guest editor for IEEE Selected Topics in Signal Processing and in the program committees of several IEEE conferences. He has also helped organize IEEE and ACM conferences including serving as the Technical Program Co-Chair for 2012 IEEE Information Theory Workshop (ITW), the Technical Program Co-Chair for the 2015 IEEE International Symposium on Information Theory (ISIT) and General co-chair for Mobihoc 2018. He has 8 issued patents.
\end{IEEEbiography}

\end{document}